\newtheorem{theorem}{Theorem}
\newtheorem{definition}{Definition}
\newtheorem{corollary}{Corollary}
\newtheorem{lemma}{Lemma}
\newtheorem{assumption}{Assumption}
\newtheorem{example}{Example}
\newcommand{\bb}[1]{{\mathbb{#1}}}
\def\eqref#1{equation~\ref{#1}}
\def\1{\bm{1}}
\def\tq{{\overline{q}}}
\newcommand{\defeq}{\vcentcolon=}
\def\vtheta{{\bm{\theta}}}
\def\vpsi{{\bm{\psi}}}
\def\vg{{\bm{g}}}
\def\vh{{\bm{h}}}
\def\vp{{\bm{p}}}
\def\vq{{\bm{q}}}
\def\vv{{\bm{v}}}
\def\vx{{\bm{x}}}
\def\vy{{\bm{y}}}
\def\vz{{\bm{z}}}
\def\vtheta{{\bm{\theta}}}
\def\Et{{E_\vtheta(\vx)}}
\newcommand{\at}[2][]{#1|_{#2}}
\DeclareMathAlphabet{\mathsfit}{\encodingdefault}{\sfdefault}{m}{sl}
\SetMathAlphabet{\mathsfit}{bold}{\encodingdefault}{\sfdefault}{bx}{n}
\def\gH{{\mathcal{H}}}
\def\gL{{\mathcal{L}}}
\def\gM{{\mathcal{M}}}
\def\gO{{\mathcal{O}}}
\def\gP{{\mathcal{P}}}
\def\gX{{\mathcal{X}}}
\newcommand{\KL}{D_{\mathrm{KL}}}
\DeclareMathOperator*{\argmax}{arg\,max}
\DeclareMathOperator*{\argmin}{arg\,min}
\title{Pseudo-Spherical Contrastive Divergence}
\author{%
  Lantao Yu\\
  Computer Science Department\\
  Stanford University\\
  \texttt{lantaoyu@cs.stanford.edu} \\
  \And 
  Jiaming Song\\
  Computer Science Department\\
  Stanford University\\
  \texttt{tsong@cs.stanford.edu} \\
  \And
  Yang Song\\
  Computer Science Department\\
  Stanford University\\
  \texttt{yangsong@cs.stanford.edu} \\
  \And
  Stefano Ermon\\
  Computer Science Department\\
  Stanford University\\
  \texttt{ermon@cs.stanford.edu} \\
}
\begin{document}

\maketitle

\begin{abstract}
Energy-based models (EBMs) offer flexible distribution parametrization. However, due to the intractable partition function, they are typically trained via contrastive divergence for maximum likelihood estimation. 
In this paper, we propose pseudo-spherical contrastive divergence (PS-CD) to generalize maximum likelihood learning of EBMs. PS-CD is derived from the maximization of a family of strictly proper homogeneous scoring rules, which avoids the computation of the intractable partition function and provides a generalized family of learning objectives that include contrastive divergence as a special case.
Moreover, PS-CD allows us to flexibly choose various learning objectives to train EBMs without additional  computational cost or variational minimax optimization. Theoretical analysis on the proposed method and extensive experiments on both synthetic data and commonly used image datasets demonstrate the effectiveness and modeling flexibility of PS-CD, as well as its robustness to data contamination, thus showing its superiority over maximum likelihood and $f$-EBMs.
\end{abstract}

\section{Introduction}
Energy-based models (EBMs) provide a unified framework for generative and discriminative learning by capturing dependencies between random variables with an energy function. Due to the absence of the normalization constraint, EBMs offer much more flexibility in distribution parametrization and architecture design compared to properly normalized probabilistic models such as autoregressive models \cite{larochelle2011neural,germain2015made}, flow-based models \cite{dinh2014nice,dinh2016density,kingma2018glow}
and sum-product networks \cite{poon2011sum}. 
Recently, deep EBMs have achieved considerable success in realistic image generation \cite{du2019implicit,nijkamp2019learning,dai2019exponential,grathwohl2019your}, molecular modeling \cite{zhang2020deep} and model-based planning \cite{du2019model}, thanks to modern deep neural networks \cite{lecun1998gradient,krizhevsky2012imagenet,he2016deep} for parametrizing expressive energy functions and improved Markov Chain Monte Carlo (MCMC) techniques \cite{neal2011mcmc,robert2013monte,jacob2017unbiased,du2019implicit,nijkamp2019learning} for efficiently sampling from EBMs.

Training EBMs consists of finding an energy function that assigns low energies to correct configurations of variables and high energies to incorrect ones \cite{lecun2006tutorial}, where a central concept is the \emph{loss functional} that is used to measure the quality of the energy function and is minimized during training. 
The flexibility of EBMs does not come for free: it makes the design of loss functionals particularly challenging, as it usually involves the partition function that is generally intractable to compute.
As a result, EBMs are typically trained via CD \cite{hinton2002training}, which belongs to the ``analysis by synthesis'' scheme \cite{grenander2007pattern} and performs a sampling-based estimation of the gradient of KL between data distribution and energy-based distribution. 
Since different loss functionals will induce different solutions in practice (when the model is mis-specified and data is finite) and KL may not provide the right inductive bias \cite{gibbs2002choosing,zhao2018bias}, 
inspired by the great success of implicit generative models \cite{goodfellow2014generative,nowozin2016f,arjovsky2017wasserstein},
\cite{yu2020training} proposed a variational framework to train EBMs by minimizing general
$f$-divergences~\cite{csiszar1964informationstheoretische}. Although this framework enables us to specify various modeling preferences such as diversity/quality tradeoff, they rely on learning additional components (variational functions) within a minimax framework, where the optimization is complicated by the notion of Nash equilibrium and local optimality \cite{jin2019local} and suffers from instability and non-convergence issues \cite{mescheder2018training}. Along this line, noise contrastive estimation (NCE) \cite{gutmann2012noise} can train EBMs with a family of loss functionals induced by different Bregman divergences. 
However, in practice, NCE usually relies on carefully-designed noise distribution such as context-dependent noise distribution \cite{ji2015blackout} or joint learning of a flow-based noise distribution \cite{gao2020flow}.

In this paper, we draw inspiration from statistical decision theory \cite{dawid1998coherent} and propose a novel perspective for designing loss functionals for training EBMs without involving auxiliary models or variational optimization. Specifically, to generalize maximum likelihood training of EBMs, we focus on maximizing pseudo-spherical scoring rules \cite{roby1965belief,good1971comment}, which are \emph{strictly proper} such that the data distribution is the unique optimum and \emph{homogeneous} such that they can be evaluated without the knowledge of the normalization constant.
Under the ``analysis by synthesis'' scheme used in CD and $f$-EBM \cite{yu2020training}, we then derive a practical algorithm termed Pseudo-Spherical Contrastive Divergence (PS-CD). 
Different from $f$-EBM, PS-CD enables us to specify flexible modeling preferences without requiring additional computational cost or unstable minimax optimization.
We provide a theoretical analysis on the sample complexity and convergence property of PS-CD, as well as its connections to maximum likelihood training.
With experiments on both synthetic data and commonly used image datasets, we show that PS-CD achieves significant sample quality improvement over conventional maximum likelihood training and competitive performance to $f$-EBM without expensive variational optimization. Based on a set of recently proposed generative model evaluation metrics \cite{naeem2020reliable}, we further demonstrate the various modeling tradeoffs enabled by PS-CD, justifying its modeling flexibility. Moreover, PS-CD is also much more robust than CD in face of data contamination.

\section{Preliminaries}
\subsection{Energy-Based Distribution Representation and Sampling}
Given a set of \emph{i.i.d.} samples $\{\vx_i\}_{i=1}^N$ from some unknown data distribution $p(\vx)$ defined over the sample space $\gX \subset \bb{R}^m$, the goal of generative modeling is to learn a $\vtheta$-parametrized probability distribution $q_\vtheta(\vx)$ to approximate the data distribution $p(\vx)$. In the context of energy-based modeling, instead of directly parametrizing a properly normalized distribution, we first parametrize an unnormalized energy function $E_\vtheta: \gX \to \bb{R}$, which further defines a normalized probability density via the Boltzmann distribution:
\begin{align}
    q_\vtheta(\vx) = \frac{\overline{q}_\vtheta(\vx)}{Z_\vtheta} = \frac{\exp(- E_\vtheta(\vx))}{Z_\vtheta},
\end{align}
where $Z_\vtheta \defeq \int_\gX \exp(- E_\vtheta(\vx)) \mathrm{d} \vx$ is the partition function (normalization constant). In this paper, unless otherwise stated, we will use $\overline{q}$ to denote an unnormalized density and $q$ to denote the corresponding normalized distribution. We also assume that the exponential of the negative energy belongs to the $L^1$ space, $\mathcal{E} \defeq \left\{E_\vtheta: \gX \to \bb{R} : \int_\gX \exp(- E_\vtheta(\vx)) \mathrm{d} \vx < \infty \right\}$, \emph{i.e.}, $Z_\vtheta$ is finite.

Since energy-based models (EBMs) represent a probability distribution by assigning unnormalized scalar values (energies) to the data points, we can use any model architecture that outputs a bounded real number given an input to implement the energy function, which allows extreme flexibility in distribution parametrization. However, it is non-trivial to sample from an EBM,  usually requiring MCMC \cite{robert2013monte} techniques.
Specifically, in this work we consider using Langevin dynamics \cite{neal2011mcmc,welling2011bayesian},  
a gradient-based MCMC method that performs noisy gradient descent to traverse the energy landscape and arrive at the low-energy configurations:
\begin{align}
    \Tilde{\vx}_t = \Tilde{\vx}_{t-1} - \frac{\epsilon}{2} \nabla_\vx E_\vtheta(\Tilde{\vx}_{t-1}) + \sqrt{\epsilon} \vz_t,\label{eq:langevin}
\end{align}
where $\vz_t \sim \mathcal{N}(0, I)$. The distribution of $\Tilde{\vx}_T$ converges to the model distribution $q_\vtheta(\vx) \propto \exp(- E_\vtheta(\vx))$ when $\epsilon \to 0$ and $T \to \infty$ under some regularity conditions \cite{welling2011bayesian}. 
In order to sample from an energy-based distribution efficiently, many scalable techniques have been proposed such as learning non-convergent, non-persistent, short-run MCMC \cite{nijkamp2019learning} and using a sample replay buffer to improve mixing time and sample diversity \cite{du2019implicit}. In this work, we leverage these recent advances when we need to obtain samples from an EBM.

\subsection{Maximum Likelihood Training of EBMs via Contrastive Divergence}
The predominant approach to training explicit density generative models is to approximately 
minimize the KL divergence between the (empirical) data distribution and model distribution. Minimizing KL divergence is equivalent to the following maximum likelihood estimation (MLE) objective:
\begin{align}
    \min_\vtheta \gL_{\mathrm{MLE}}(\vtheta; p) = \min_\vtheta -\mathbb{E}_{p(\vx)}\left[\log q_\vtheta(\vx)\right]
    = \min_\vtheta \bb{E}_{p(\vx)} [E_\vtheta(\vx)] + \log Z_\vtheta. \label{eq:mle}
\end{align}
Because of the intractable partition function (an integral over the sample space), we cannot directly optimize the above MLE objective. To tackle this issue, \cite{hinton2002training} proposed contrastive divergence (CD) algorithm as a convenient way to estimate the gradient of $\gL_{\mathrm{MLE}}(\vtheta; p)$ using samples from $q_\vtheta$:
\begin{align}
    \nabla_\vtheta \gL_{\mathrm{MLE}}(\vtheta; p) = \bb{E}_{p(\vx)} [\nabla_\vtheta E_\vtheta(\vx)] + \nabla_\vtheta \log Z_\vtheta 
    =  \bb{E}_{p(\vx)}[\nabla_\vtheta E_\vtheta(\vx)] - \bb{E}_{q_\vtheta(\vx)}[\nabla_\vtheta E_\vtheta(\vx)],\label{eq:contrastive-divergence}
\end{align}
which can be interpreted as decreasing the energies of real data from $p$ and increasing the energies of fake data generated by $q_\vtheta$.
As discussed above, evaluating Equation~(\ref{eq:contrastive-divergence}) typically relies on MCMC methods such as the Langevin dynamics sampling procedure defined in Equation~(\ref{eq:langevin}) to produce samples from the model distribution $q_\vtheta$, which induces a surrogate gradient estimation:
\begin{align}
    \nabla_\vtheta \gL_{\mathrm{CD-}K}(\vtheta; p) = \bb{E}_{p(\vx)}[\nabla_\vtheta E_\vtheta(\vx)] - \bb{E}_{q_\vtheta^K(\vx)}[\nabla_\vtheta E_\vtheta(\vx)],
\end{align}
where $q_\vtheta^K$ denotes the distribution after $K$ steps of MCMC transitions from an initial distribution (typically data distribution or uniform distribution), and Equation~(\ref{eq:contrastive-divergence}) corresponds to $\gL_{\mathrm{CD-}\infty}$. 

\subsection{Strictly Proper Scoring Rules}\label{sec:proper-scoring-rules}
Stemming from statistical decision theory \cite{dawid1998coherent}, scoring rules evaluate the quality of probabilistic forecasts by assigning numerical scores based on the predictive distributions and the events that materialize. Formally, consider a compact sample space $\gX$.
Let $\gM$ be a space of all locally 1-integrable non-negative finite measures and $\gP$ be a subspace consisting of all probability measures on the sample space $\gX$.
A scoring rule $S(\vx, q)$ specifies the \emph{utility} of 
forecasting 
using a probability forecast $q \in \gP$ for a given sample $\vx \in \gX$. With slightly abused notation, we write the \emph{expected score} of $S(\vx, q)$ under a reference distribution $p$ as:
\begin{align}
    S(p,q) \defeq \mathbb{E}_{p(\vx)} [S(\vx, q)].
\end{align}
\begin{definition}[Proper Scoring Rules \cite{gneiting2007strictly}]\label{def:proper-scoring-rule}
A scoring rule $S: \gX \times \gP \to \mathbb{R}$ is called proper relative to $\gP$ if the corresponding expected score satisfies:
\begin{align}
    \forall p,q \in \gP, S(p,q) \leq S(p,p).
\end{align}
It is strictly proper if the equality holds if and only if $q=p$.
\end{definition}
In prediction and elicitation problems, strictly proper scoring rules encourage the forecaster to make honest predictions based on their true beliefs \cite{garthwaite2005statistical}. In estimation problems, where we want to approximate a distribution $p$ with another parametric distribution $q_\vtheta$, strictly proper scoring rules provide attractive learning objectives:
\begin{align}
    \argmax_{q_\vtheta \in \gP_\vtheta} S(p,q_\vtheta) = \argmax_{q_\vtheta \in \gP_\vtheta} \mathbb{E}_{p(\vx)}[S(\vx, q_\vtheta)]
    = p \text{~(when $p \in \gP_\vtheta$)}.
\end{align}

When a scoring rule $S$ is strictly proper relative to $\gP$, the associated \emph{generalized entropy function} and \emph{divergence function} are defined as:
\begin{align}
    G(p) \defeq \sup_{q \in \gP} S(p, q) = S(p, p), \quad
    D(p, q) \defeq S(p,p) - S(p, q) \geq 0. \label{eq:entropy-divergence}
\end{align}
$G(p)$ is convex and represents the maximally achievable utility, while $D(p, q)$ is the Bregman divergence \cite{bregman1967relaxation} associated with the convex function $G$ and the equality 
holds only when $p=q$.

Next, we introduce a specific kind of scoring rules that are particularly suitable 
for learning unnormalized statistical models.

\begin{definition}[Homogeneous Scoring Rules \cite{parry2012proper}]\label{def:homogeneous}
A scoring rule is homogeneous if it satisfies (here the domain of the score function is extended to $\gX \times \gM$):
\begin{align}
    \forall \lambda>0, \vx \in \gX,~~S(\vx,q) = S(\vx,\lambda \cdot q).
\end{align}
\end{definition}
Since scaling the model distribution $q$ by a positive constant $\lambda$ does not change the value of a homogeneous scoring rule,
such homogeneity allows us to evaluate it without computing the intractable partition function of an energy-based distribution. 
Thus, strictly proper and homogeneous scoring rules 
are natural candidates for new training objectives of EBMs.

\begin{example}
A notable example of scoring rules is the widely used logarithm score: $S(\vx, q) = \log q(\vx)$. The associated generalized entropy is the negative Shannon entropy: $G(p) = \bb{E}_{p(\vx)}[\log p(\vx)]$, and the associated Bregman divergence is the KL divergence: $D(p, q) = \bb{E}_{p(\vx)}\left[\log (p(\vx)/q(\vx))\right]$. From Definitions~\ref{def:proper-scoring-rule} and \ref{def:homogeneous}, we know that the logarithm score is strictly proper but not homogeneous. 
Specifically, for a $\vtheta$-parametrized energy-based distribution $q_\vtheta = \overline{q}_\vtheta / Z_\vtheta$, since $S(\vx, \overline{q}_\vtheta) = S(\vx, Z_\vtheta \cdot q_\vtheta) = S(\vx, q_\vtheta) + \log Z_\vtheta \neq S(\vx, q_\vtheta)$ and $\log Z_\vtheta$ cannot be ignored during the optimization of $\vtheta$,
we need to use tailored methods such as contrastive divergence \cite{hinton2002training} or doubly dual embedding \cite{dai2018kernel} to tackle the intractable partition function.
\end{example}

\section{Training EBMs by Maximizing Homogeneous Scoring Rules}\label{sec:method}
In this section, we derive a new principle for training EBMs from the perspective of optimizing strictly proper homogeneous scoring rules. All proofs for this section can be found in Appendix~\ref{app:proof-method}.
\subsection{Pseudo-Spherical Scoring Rule}
In this section, we introduce the pseudo-spherical scoring rule, which is a representative family of strictly proper homogeneous scoring rules that have great potentials for training deep energy-based models and allow flexible and convenient specification of modeling preferences, yet have not been explored before in the context of energy-based generative modeling.

\begin{definition}[Pseudo-Spherical Scoring Rule \cite{roby1965belief,good1971comment}]\label{def:spherical} For $\gamma > 0$, the pseudo-spherical scoring rule is defined as:
\begin{align}
    S(\vx,q) \defeq \frac{q(\vx)^{\gamma}}{{(\int_\gX q(\vy)^{\gamma + 1} \mathrm{d} \vy)}^\frac{\gamma}{\gamma + 1}}
    = \frac{\overline{q}(\vx)^{\gamma}}{{(\int_\gX \overline{q}(\vy)^{\gamma + 1} \mathrm{d} \vy)}^\frac{\gamma}{\gamma + 1}}
    = \left(\frac{\overline{q}(\vx)}{\|\overline{q}\|_{\gamma + 1}}\right)^\gamma \label{eq:ps-score}
\end{align}
where $\|\overline{q}\|_{\gamma + 1} \defeq \left(\int_\gX \overline{q}(\vy)^{\gamma + 1} \mathrm{d} \vy\right)^\frac{1}{\gamma + 1}$.

The expected pseudo-spherical score under a reference distribution $p$ is defined as:
\begin{align}
    S_{\text{ps}}(p, q) \defeq \mathbb{E}_{p(\vx)} [S(\vx, q)] = \frac{\mathbb{E}_{p(\vx)} [\overline{q}(\vx)^{\gamma}]}{{(\int_\gX \overline{q}(\vy)^{\gamma + 1} \mathrm{d} \vy)}^\frac{\gamma}{\gamma + 1}}\label{eq:expected-ps-score}
\end{align}
\end{definition}

\begin{example}\label{example:spherical}
The classic spherical scoring rule \cite{friedman1983effective}
is a special case in the pseudo-spherical family, which corresponds to $\gamma = 1$:
\begin{align}
    S(\vx,q) = \frac{\overline{q}(\vx)}{(\int_\gX \tq(\vy)^2 \mathrm{d} \vy)^\frac{1}{2}} = \frac{\tq(\vx)}{\|\tq\|_2}
\end{align}
\end{example}

The family of pseudo-spherical scoring rules is appealing because it introduces a different and principled way for assessing a probability forecast. For example, the spherical scoring rule has an interesting geometric interpretation. Suppose the sample space $\gX$ contains $n$ mutually exclusive and exhaustive outcomes. Then a probability forecast can be represented as a vector $\vq = (q_1, \ldots, q_n)$. Let vector $\vp = (p_1, \ldots, p_n)$ represent the oracle probability forecast. The expected spherical score can be written as:
\begin{align}
    S(p, q) = \bb{E}_{p(\vx)} [S(\vx, q)] = \frac{\sum_i p_i q_i}{\sqrt{\sum_i q_i^2}}
    = \|\vp\|_2 \frac{\langle \vp, \vq \rangle}{\|\vp\|_2 \|\vq\|_2} = \|\vp\|_2 \cos(\angle(\vp, \vq))
\end{align}
where $\langle \vp, \vq \rangle$ and $\angle(\vp, \vq)$ denote the inner product and the angle between vectors $\vp$ and $\vq$ respectively. In other words, when we want to evaluate the expected spherical score of a probability forecast $\vq$ under real data distribution $\vp$ using samples, the angle between $\vp$ and $\vq$ is the sufficient statistics.
Since we know that both $\vp$ and $\vq$ belong to the probability simplex $\gP=\{\vv|\sum_{\vx \in \gX} \vv(\vx) = 1 \text{ and } \forall \vx \in \gX, \vv(\vx) \geq 0.\}$, the expected score will be minimized if and only if the angle is zero, which implies $\vp=\vq$. More importantly, 
since all we need to do is to minimize the angle of the deviation, we are allowed to scale $\vq$ by a constant. Specifically, when $\vq$ is an energy-based distribution 
$\vq = \left(\frac{\exp(-E_1)}{\sum_i \exp(-E_i)}, \ldots, \frac{\exp(-E_n)}{\sum_i \exp(-E_i)}\right)$, 
we can instead evaluate and minimize the angle between data distribution $\vp$ and the unnormalized distribution $\overline{\vq} = \left(\exp(-E_1), \ldots, \exp(-E_n)\right)$, since $\angle(\vp, \vq) = \angle(\vp, \overline{\vq})$.
More generally, we have the following theorem to justify the use of pseudo-spherical scoring rules for training energy-based models:
\begin{theorem}[\cite{gneiting2007strictly,parry2016linear}]\label{the:pseudo-spherical}
Pseudo-spherical scoring rule is strictly proper and homogeneous.
\end{theorem}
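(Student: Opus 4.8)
The plan is to treat the two assertions separately: homogeneity is essentially a one-line observation, and strict propriety reduces to a single clean application of Hölder's inequality together with its equality case.

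For homogeneity, I would simply read it off the rightmost expression in \eqref{eq:ps-score}: since $S(\vx,q) = \left(\overline{q}(\vx)/\|\overline{q}\|_{\gamma+1}\right)^{\gamma}$, replacing $q$ by $\lambda q$ for $\lambda>0$ (equivalently $\overline{q}$ by $\lambda\overline{q}$) scales the numerator and the denominator inside the parentheses by the same factor $\lambda$, leaving $S(\vx,q)$ unchanged. Because of this it is enough to prove strict propriety on $\gP$, and I will restrict attention to probability densities lying in $L^{\gamma+1}(\gX)$ so that every integral below is finite — on a compact $\gX$ this is a mild regularity condition, and it also rules out the degenerate values $\|q\|_{\gamma+1}\in\{0,\infty\}$.

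For propriety, I would first record the diagonal value $S_{\text{ps}}(p,p) = \int_\gX p(\vx)^{\gamma+1}\,\diff\vx \big/ \|p\|_{\gamma+1}^{\gamma} = \|p\|_{\gamma+1}$, and then bound $S_{\text{ps}}(p,q) = \int_\gX p(\vx)\,q(\vx)^{\gamma}\,\diff\vx \big/ \|q\|_{\gamma+1}^{\gamma}$ from above. The key step is to apply Hölder's inequality to the nonnegative pair $p$ and $q^{\gamma}$ with the conjugate exponents $r=\gamma+1$ and $s=\frac{\gamma+1}{\gamma}$ (both in $(1,\infty)$ since $\gamma>0$, and $\frac1r+\frac1s=1$, $(q^\gamma)^s=q^{\gamma+1}$), which gives $\int_\gX p\,q^{\gamma}\,\diff\vx \le \|p\|_{\gamma+1}\,\|q\|_{\gamma+1}^{\gamma}$; dividing by $\|q\|_{\gamma+1}^{\gamma}$ yields exactly $S_{\text{ps}}(p,q)\le \|p\|_{\gamma+1} = S_{\text{ps}}(p,p)$.

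For strictness I would invoke the equality case of Hölder: equality forces $p^{\gamma+1}$ and $q^{\gamma+1}$ to be proportional almost everywhere, i.e.\ $p=c\,q$ a.e.\ for some constant $c\ge 0$, and since $p$ and $q$ each integrate to $1$ this forces $c=1$, hence $q=p$. The only points needing care are the bookkeeping around well-definedness and finiteness of the $L^{\gamma+1}$ norms (covered by the regularity restriction above, and automatic when $\gX$ is finite as in the simplex picture in the text) and the final normalization step that upgrades ``proportional'' to ``equal''; I do not anticipate any deeper obstacle, since the whole statement amounts to the fact that $S(\vx,q)$ is the Hölder pairing of $p$ against the $L^{\gamma+1}$-normalized unnormalized density, for which Hölder's inequality and its sharp equality condition do all the work.
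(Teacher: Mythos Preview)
Your argument is correct: homogeneity is immediate from the form $\left(\overline{q}(\vx)/\|\overline{q}\|_{\gamma+1}\right)^{\gamma}$, and strict propriety is exactly H\"older's inequality with exponents $\gamma+1$ and $(\gamma+1)/\gamma$ together with its equality case and the normalization constraint. The bookkeeping points you flag (finiteness of $\|\cdot\|_{\gamma+1}$, upgrading proportionality to equality via $\int p = \int q = 1$) are the right ones and cause no trouble.

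As for comparison with the paper: the paper does not give its own proof of this statement. Theorem~\ref{the:pseudo-spherical} is stated with citations to \cite{gneiting2007strictly,parry2016linear} and is not proved anywhere in the text or appendix; the surrounding discussion only offers the geometric ``angle'' picture in the discrete $\gamma=1$ case as intuition. Your H\"older argument is in fact the standard proof one finds in those references, so there is nothing to contrast --- you have simply supplied the proof that the paper outsourced.
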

As the original definition of pseudo-spherical scoring rule (Equation~(\ref{eq:ps-score})) takes the form of a fraction, for computational considerations, in this paper we instead focus on optimizing its \emph{composite scoring rule} (Definition 2.1 in \cite{kanamori2014affine}):

\begin{definition}[$\gamma$-score \cite{fujisawa2008robust}]
For the expected pseudo-spherical score $S_{ps}(p,q)$ defined in Equation~(\ref{eq:expected-ps-score}) with $\gamma > 0$, the expected $\gamma$-score is defined as:
\begin{align}
S_\gamma(p,q) \defeq & \frac{1}{\gamma} \log (S_\text{ps}(p,q))
= \frac{1}{\gamma}  \log \left(\mathbb{E}_{p(\vx)} [\tq(\vx)^\gamma]\right) - \log(\|\tq\|_{\gamma + 1})\label{eq:gamma-score}
\end{align}
\end{definition}
Since $\frac{1}{\gamma} \log (u)$ is strictly increasing in $u$, $S_\gamma(p,q)$ is a strictly proper homogeneous composite score:
\begin{align}
    \argmax_{q \in \gP} S_\gamma(p,q) = \argmax_{q \in \gP} \frac{1}{\gamma} \log (S_{ps}(p,q))
    = \argmax_{q \in \gP} S_{ps}(p,q) = p.
\end{align}

\subsection{Pseudo-Spherical Contrastive Divergence}
Suppose we parametrize the energy-based model distribution as $q_\vtheta \propto \overline{q}_\vtheta = \exp(- E_\vtheta)$ and we want to minimize the negative $\gamma$-score in Equation~(\ref{eq:gamma-score}):
\begin{align}
    \min_\vtheta \gL_\gamma(\vtheta; p) = \min_\vtheta -\frac{1}{\gamma}  \log \left(\mathbb{E}_{p(\vx)} [\tq_\vtheta(\vx)^\gamma]\right)
    +  \log(\|\tq_\vtheta\|_{\gamma + 1})\label{eq:loss-gamma}
\end{align}
In the following theorem, we derive the gradient of $\gL_\gamma(\vtheta; p)$ with respect to $\vtheta$:
\begin{theorem}\label{the:ps-cd-gradient}
The gradient of $\gL_\gamma(\vtheta; p)$ with respect to $\vtheta$ can be written as:
\begin{align}
    \nabla_\vtheta \gL_\gamma(\vtheta; p) = -\frac{1}{\gamma} \nabla_\vtheta \log \left(\mathbb{E}_{p(\vx)} [\exp(- \gamma E_\vtheta(\vx))]\right) 
    - \mathbb{E}_{r_{\vtheta}(\vx)}[\nabla_\vtheta E_\vtheta(\vx)]\label{eq:ps-cd-gradient}
\end{align}
where the auxiliary distribution $r_\vtheta$ is also an energy-based distribution defined as: $$r_\vtheta(\vx) \defeq \frac{\tq_\vtheta(\vx)^{\gamma + 1}}{\int_\gX \tq_\vtheta(\vx)^{\gamma + 1} \mathrm{d} \vx} =  \frac{\exp(-(\gamma + 1) E_\vtheta(\vx))}{\int_\gX \exp(-(\gamma + 1) E_\vtheta(\vx)) \mathrm{d} \vx}.$$
\end{theorem}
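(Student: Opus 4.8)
The plan is to differentiate the two summands of $\gL_\gamma(\vtheta; p)$ in Equation~(\ref{eq:loss-gamma}) separately, since the first is already in the target form and only the second requires work. Using $\tq_\vtheta(\vx) = \exp(-E_\vtheta(\vx))$ so that $\tq_\vtheta(\vx)^\gamma = \exp(-\gamma E_\vtheta(\vx))$, the first summand is $-\tfrac{1}{\gamma}\log\big(\mathbb{E}_{p(\vx)}[\exp(-\gamma E_\vtheta(\vx))]\big)$; since $p$ does not depend on $\vtheta$, its gradient is simply $-\tfrac{1}{\gamma}\nabla_\vtheta\log\big(\mathbb{E}_{p(\vx)}[\exp(-\gamma E_\vtheta(\vx))]\big)$, which is exactly the first term of the claimed identity~(\ref{eq:ps-cd-gradient}) and needs no further manipulation. (One may optionally expand it by differentiating under the expectation into $\mathbb{E}_{p(\vx)}[\nabla_\vtheta E_\vtheta(\vx)\exp(-\gamma E_\vtheta(\vx))]/\mathbb{E}_{p(\vx)}[\exp(-\gamma E_\vtheta(\vx))]$, i.e.\ an expectation of $\nabla_\vtheta E_\vtheta$ under $p$ reweighted by $\tq_\vtheta^\gamma$, which makes the analogy with ordinary contrastive divergence~(\ref{eq:contrastive-divergence}) transparent.)

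The substance is in the second summand. I would first rewrite $\log\|\tq_\vtheta\|_{\gamma+1} = \tfrac{1}{\gamma+1}\log Z$, where $Z \defeq \int_\gX \exp(-(\gamma+1)E_\vtheta(\vx))\,\diff\vx$ is the partition function of the auxiliary model $r_\vtheta$ (finite under the assumptions, e.g.\ on a compact $\gX$ with continuous energy). Applying the chain rule gives $\nabla_\vtheta\log\|\tq_\vtheta\|_{\gamma+1} = \tfrac{1}{\gamma+1}\,\nabla_\vtheta Z / Z$. Differentiating under the integral sign yields $\nabla_\vtheta Z = -(\gamma+1)\int_\gX \nabla_\vtheta E_\vtheta(\vx)\,\exp(-(\gamma+1)E_\vtheta(\vx))\,\diff\vx$; the factor $\gamma+1$ cancels, and dividing the integrand by $Z$ produces precisely the density $r_\vtheta(\vx) = \exp(-(\gamma+1)E_\vtheta(\vx))/Z$ from the statement. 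Hence $\nabla_\vtheta\log\|\tq_\vtheta\|_{\gamma+1} = -\mathbb{E}_{r_\vtheta(\vx)}[\nabla_\vtheta E_\vtheta(\vx)]$, and adding the two contributions gives Equation~(\ref{eq:ps-cd-gradient}).

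The only step that is not pure bookkeeping is the interchange of $\nabla_\vtheta$ with $\int_\gX(\cdot)\,\diff\vx$ (and, for the optional expansion, with $\mathbb{E}_{p(\vx)}$), so the proof should record the regularity conditions under which the dominated-convergence/Leibniz argument applies — e.g.\ $E_\vtheta$ continuously differentiable in $\vtheta$ with $\|\nabla_\vtheta E_\vtheta(\vx)\|\exp(-(\gamma+1)E_\vtheta(\vx))$ locally dominated by an integrable function, which is automatic when $\gX$ is compact and $E_\vtheta$ is smooth, consistent with the compactness assumption on $\gX$ in Section~\ref{sec:proper-scoring-rules}. Everything else is one application of the chain rule plus recognizing the normalized integrand as $r_\vtheta$.
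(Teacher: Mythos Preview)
Your proposal is correct and matches the paper's ``first proof'' by direct differentiation essentially step for step: split $\gL_\gamma$ into its two summands, keep the first as is, and for $\log\|\tq_\vtheta\|_{\gamma+1}=\tfrac{1}{\gamma+1}\log\int_\gX\exp(-(\gamma+1)E_\vtheta)\,\diff\vx$ apply the chain rule so that the $\gamma+1$ cancels and the normalized integrand is recognized as $r_\vtheta$. The paper additionally offers a second proof via a variational (Fenchel-dual) representation of $\log\|\tq_\vtheta\|_{\gamma+1}$ whose optimal variational distribution is exactly $r_\vtheta$, but your direct argument already covers the result; your extra remark on the dominated-convergence/Leibniz justification is a nice addition the paper omits.
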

In App.~\ref{app:proof-ps-cd-gradient}, we provide two different ways to prove the above theorem. The first one is more straightforward and directly differentiates through the term $\log (\|\tq_\vtheta\|_{\gamma+1})$. The second one leverages a variational representation of $\log (\|\tq_\vtheta\|_{\gamma+1})$, where the optimal variational distribution happens to take an analytical form of $r^*_\vtheta(\vx) \propto \tq_\vtheta(\vx)^{\gamma + 1}$, thus avoiding the minimax optimization in other variational frameworks such as \cite{yu2020training,dai2018kernel,dai2019exponential}.
The main challenge in maximizing $\gamma$-score is that it is generally intractable to exactly compute the gradient of the second term in Equation~(\ref{eq:gamma-score}). 

During training, estimating the second term of Equation~(\ref{eq:ps-cd-gradient}) requires us to obtain samples from the auxiliary distribution $r_{\vtheta} \propto \exp(-(\gamma+1)E_\vtheta)$, while at test time, we want to sample from the model distribution $q_\vtheta \propto \exp(-E_\vtheta)$ that approximates the data distribution. Due to the restrict regularity conditions on the convergence of Langevin dynamics,
in practice, we found it challenging to use the iterative sampling process in Equation~(\ref{eq:langevin}) with a fixed number of transition steps and step size to produce samples from $r_{\vtheta}$ and $q_\vtheta$ simultaneously, as the temperature $\gamma + 1$ in $r_{\vtheta}$ simply amounts to a linear rescaling to the energy function during training.
Thus for generality, as in contrastive divergence \cite{hinton2002training,du2019implicit,nijkamp2019learning} and $f$-EBM \cite{yu2020training}, we make the minimal assumption that we only have a sampling procedure to produce samples from $q_\vtheta$ for both learning and inference procedures.

In this case, we can leverage the analytical form of $r_{\vtheta}$ and \emph{self-normalized importance sampling} \cite{mcbook} (which has been used to derive gradient estimators in other contexts such as importance weighted autoencoder \cite{burda2015importance,finke2019importance}) to obtain a consistent estimation of Equation~(\ref{eq:ps-cd-gradient}):
\begin{theorem}\label{the:importance-sampling}
Let $\vx_1^+, \ldots, \vx_N^+$ be i.i.d. samples from $p(\vx)$ and $\vx_1^-, \ldots, \vx_N^-$ be i.i.d. samples from $q_\vtheta(\vx) \propto \exp(-E_\vtheta(\vx))$. Define the gradient estimator as:
\begin{align}
    \widehat{\nabla_\vtheta \gL_\gamma^N(\vtheta;p)} \defeq 
     - \nabla_\vtheta \frac{1}{\gamma} \log \left( \frac{1}{N} \sum_{i=1}^N \exp(- \gamma E_\vtheta(\vx_i^+))\right)
     - \frac{\sum_{i=1}^N \omega_\vtheta(\vx_i^-) \nabla_\vtheta E_\vtheta(\vx_i^-)}{\sum_{i=1}^N \omega_\vtheta(\vx_i^-)}\label{eq:gradient-estimator}
\end{align}
where the self-normalized importance weight $\omega_\vtheta(\vx_i^-) \defeq \overline{r}_{\vtheta}(\vx_i^-) / \overline{q}_\vtheta (\vx_i^-) = \exp(- \gamma E_\vtheta(\vx_i^-))$.
Then the gradient estimator converges to the true gradient (Equation~(\ref{eq:ps-cd-gradient})) in probability, i.e., $\forall \epsilon > 0$:
$$\lim_{N \to \infty} \bb{P}\left(\left\|\widehat{\nabla_\vtheta \gL_\gamma^N(\vtheta;p)} - \nabla_\vtheta \gL_\gamma(\vtheta;p)\right\| \geq \epsilon \right) =0.$$
\end{theorem}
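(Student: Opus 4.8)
The plan is to decompose the gradient estimator into two pieces matching the two terms in the true gradient \eqref{eq:ps-cd-gradient}, and handle each by a continuous-mapping / law-of-large-numbers argument. Write $\widehat{\nabla_\vtheta \gL_\gamma^N} = A_N + B_N$, where $A_N \defeq -\frac{1}{\gamma}\nabla_\vtheta \log\!\big(\frac{1}{N}\sum_i \exp(-\gamma E_\vtheta(\vx_i^+))\big)$ depends only on the data samples $\vx_i^+$, and $B_N \defeq -\frac{\sum_i \omega_\vtheta(\vx_i^-)\nabla_\vtheta E_\vtheta(\vx_i^-)}{\sum_i \omega_\vtheta(\vx_i^-)}$ depends only on the model samples $\vx_i^-$. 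The true gradient is correspondingly $A + B$ with $A = -\frac{1}{\gamma}\nabla_\vtheta\log\big(\E_{p(\vx)}[\exp(-\gamma E_\vtheta(\vx))]\big)$ and $B = -\E_{r_\vtheta(\vx)}[\nabla_\vtheta E_\vtheta(\vx)]$. Since $\|A_N + B_N - (A+B)\| \le \|A_N - A\| + \|B_N - B\|$, it suffices to show $A_N \to A$ and $B_N \to B$ in probability separately, and then apply a union bound with $\epsilon/2$ each.

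For the first term, I would first move the gradient inside: $A_N = -\frac{1}{\gamma}\cdot \frac{\frac{1}{N}\sum_i \nabla_\vtheta \exp(-\gamma E_\vtheta(\vx_i^+))}{\frac{1}{N}\sum_i \exp(-\gamma E_\vtheta(\vx_i^+))} = \frac{\frac{1}{N}\sum_i \exp(-\gamma E_\vtheta(\vx_i^+))\nabla_\vtheta E_\vtheta(\vx_i^+)}{\frac{1}{N}\sum_i \exp(-\gamma E_\vtheta(\vx_i^+))}$, which is a ratio of two sample averages of i.i.d.\ terms. By the weak law of large numbers the numerator converges in probability to $\E_{p(\vx)}[\exp(-\gamma E_\vtheta(\vx))\nabla_\vtheta E_\vtheta(\vx)]$ (componentwise) and the denominator to $\E_{p(\vx)}[\exp(-\gamma E_\vtheta(\vx))]$; since the latter is a strictly positive constant, the continuous mapping theorem (the map $(u,\vv)\mapsto \vv/u$ is continuous at points with $u>0$) gives $A_N \to A$ in probability, after checking $A$ equals this ratio by the same differentiation of $\log$. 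The term $B_N$ is exactly a self-normalized importance sampling estimator: the proposal is $q_\vtheta$, the unnormalized target is $\overline{r}_\vtheta \propto \exp(-(\gamma+1)E_\vtheta)$, and the importance weight is $\overline{r}_\vtheta/\overline{q}_\vtheta = \exp(-\gamma E_\vtheta)$. Rewriting $B_N = -\frac{\frac{1}{N}\sum_i \omega_\vtheta(\vx_i^-)\nabla_\vtheta E_\vtheta(\vx_i^-)}{\frac{1}{N}\sum_i \omega_\vtheta(\vx_i^-)}$, the WLLN gives numerator $\to \E_{q_\vtheta}[\omega_\vtheta(\vx)\nabla_\vtheta E_\vtheta(\vx)]$ and denominator $\to \E_{q_\vtheta}[\omega_\vtheta(\vx)]$; their ratio is $\E_{r_\vtheta}[\nabla_\vtheta E_\vtheta(\vx)]$ because $r_\vtheta(\vx) = q_\vtheta(\vx)\omega_\vtheta(\vx)/\E_{q_\vtheta}[\omega_\vtheta]$ by construction (this identity is the standard self-normalized IS consistency, and it uses $Z_{r_\vtheta}/Z_{q_\vtheta} = \E_{q_\vtheta}[\omega_\vtheta]$), so again the continuous mapping theorem yields $B_N \to B$ in probability.

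The integrability conditions needed are that $\E_{p}[\exp(-\gamma E_\vtheta)]$, $\E_{p}[\exp(-\gamma E_\vtheta)\|\nabla_\vtheta E_\vtheta\|]$, $\E_{q_\vtheta}[\exp(-\gamma E_\vtheta)]$, and $\E_{q_\vtheta}[\exp(-\gamma E_\vtheta)\|\nabla_\vtheta E_\vtheta\|]$ are all finite — these follow from compactness of $\gX$ together with continuity/boundedness of $E_\vtheta$ and $\nabla_\vtheta E_\vtheta$ assumed in the setup (so that the relevant quantities are bounded and the WLLN applies), and one also needs $\E_{p}[\exp(-\gamma E_\vtheta)]>0$ and $\E_{q_\vtheta}[\exp(-\gamma E_\vtheta)]>0$ for the denominators, which holds since the integrand is strictly positive. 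I expect the main obstacle to be bookkeeping rather than anything deep: carefully justifying the interchange of $\nabla_\vtheta$ with the finite sum (trivial) and with the expectation in identifying $A$ and $B$ (dominated convergence under the stated regularity), and cleanly invoking the continuous mapping theorem for the vector-valued ratios. A clean way to package it all is: establish joint convergence in probability of the pair (numerator, denominator) for each term via the WLLN applied coordinatewise, note the limiting denominators are positive constants, apply the continuous mapping theorem to the division map, and combine the two resulting convergences with a union bound to get the stated $\epsilon$-statement.
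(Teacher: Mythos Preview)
Your proposal is correct and follows essentially the same approach as the paper: decompose the estimator into the two ratio terms, apply the weak law of large numbers to each numerator and denominator separately, and then pass to the ratio (the paper invokes Slutsky's theorem where you invoke the continuous mapping theorem, and it combines the two terms via ``convergence in probability is preserved under addition'' where you use a union bound---these are the same arguments). Your explicit attention to the integrability hypotheses needed for the WLLN is, if anything, slightly more careful than the paper's write-up.
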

We summarize the pseudo-spherical contrastive divergence (PS-CD) training procedure in Algorithm~\ref{alg:ps-cd}. In Appendix~\ref{sec:app-alg}, we also provide a simple PyTorch implementation for stochastic gradient descent (SGD) with the gradient estimator in Equation~(\ref{eq:gradient-estimator}).

\begin{algorithm}[t]
   \caption{Pseudo-Spherical Contrastive Divergence.}
   \label{alg:ps-cd}
\begin{algorithmic}[1]
   \STATE {\bfseries Input:} Empirical data distribution $p_\text{data}$. Pseudo-spherical scoring rule hyperparameter $\gamma$.
   \STATE Initialize energy function $E_\vtheta$.
   \REPEAT
   \STATE Draw a minibatch of samples $\{\vx_1^+, \ldots, \vx_N^+\}$ from $p_\text{data}$.
   \STATE Draw a minibatch of samples $\{\vx_1^-, \ldots, \vx_N^-\}$ from $q_\vtheta \propto \exp(- E_\vtheta)$ (\emph{e.g.}, using Langevin dynamics with a sample replay buffer).
   \STATE Update the energy function by stochastic gradient descent:
   {\small
   \begin{align*}
       \widehat{\nabla_\vtheta \gL_\gamma^N(\vtheta;p)} = 
    - \nabla_\vtheta \frac{1}{\gamma} \log \left( \frac{1}{N} \sum_{i=1}^N \exp(- \gamma E_\vtheta(\vx_i^+))\right)
    - \frac{\sum_{i=1}^N \exp(- \gamma E_\vtheta(\vx_i^-)) \nabla_\vtheta E_\vtheta(\vx_i^-)}{\sum_{i=1}^N \exp(- \gamma E_\vtheta(\vx_i^-))}
   \end{align*}}
   \UNTIL{Convergence}
\end{algorithmic}
\end{algorithm}

\subsection{Connections to Maximum Likelihood Estimation and Extension to $\gamma<0$}
From Equation~(\ref{eq:entropy-divergence}) in Section~\ref{sec:proper-scoring-rules}, we know that
$\gamma$-score induces the following Bregman divergence (the divergence function associated with proper composite scoring rule is analogously defined in Def. 2.1 in \cite{kanamori2014affine}):
\begin{align*}
    D_\gamma(p,q_\vtheta) = S_\gamma(p,p) - S_\gamma(p,q_\vtheta)
\end{align*}
and maximizing $\gamma$-score is equivalent to minimizing $D_\gamma(p,q_\vtheta)$. In the following lemma, we show that when $\gamma \to 0$, $D_\gamma(p,q_\vtheta)$ will recover the KL divergence between $p$ and $q_\vtheta$, and the gradient of PS-CD will recover the gradient of CD.
\begin{lemma}
When $\gamma \to 0$, we have:
\begin{align*}
    \lim_{\gamma \to 0} D_\gamma(p,q_\vtheta) = \KL(p\|q_\vtheta);\quad
    \lim_{\gamma \to 0} \nabla_\vtheta \gL_\gamma(\vtheta;p) = \nabla_\vtheta \gL_{\mathrm{MLE}}(\vtheta; p).
\end{align*}
\end{lemma}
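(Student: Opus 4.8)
The plan is to prove the two limits essentially independently, both by a first-order Taylor expansion of the relevant $\gamma$-dependent quantities around $\gamma = 0$, since every expression appearing in $\gL_\gamma$ and $D_\gamma$ is built out of terms of the form $\tq(\vx)^\gamma = \exp(\gamma \log \tq(\vx))$ and $\|\tq\|_{\gamma+1}$. The key identity is $\tfrac{1}{\gamma}\log\bigl(\E_{p}[\tq(\vx)^\gamma]\bigr) = \tfrac{1}{\gamma}\log\bigl(1 + \gamma\,\E_p[\log\tq(\vx)] + o(\gamma)\bigr) \to \E_p[\log\tq(\vx)]$ as $\gamma\to 0$, and similarly for the $\|\cdot\|_{\gamma+1}$ term.

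For the divergence limit I would first write $D_\gamma(p,q_\vtheta) = S_\gamma(p,p) - S_\gamma(p,q_\vtheta)$ and expand each $S_\gamma$ using Equation~(\ref{eq:gamma-score}). For $S_\gamma(p,q_\vtheta) = \tfrac{1}{\gamma}\log\bigl(\E_p[\tq_\vtheta(\vx)^\gamma]\bigr) - \log\|\tq_\vtheta\|_{\gamma+1}$, the first term tends to $\E_p[\log\tq_\vtheta(\vx)]$ by the expansion above, and the second term: writing $\log\|\tq_\vtheta\|_{\gamma+1} = \tfrac{1}{\gamma+1}\log\bigl(\int \tq_\vtheta(\vy)^{\gamma+1}\diff\vy\bigr)$ and using $\tq_\vtheta^{\gamma+1} = \tq_\vtheta\cdot\tq_\vtheta^\gamma = \tq_\vtheta(1 + \gamma\log\tq_\vtheta + o(\gamma))$, one gets $\int\tq_\vtheta^{\gamma+1}\diff\vy = Z_{\vtheta,1} + \gamma\int\tq_\vtheta\log\tq_\vtheta\,\diff\vy + o(\gamma)$ where $Z_{\vtheta,1} = \int\tq_\vtheta\,\diff\vy$, so $\log\|\tq_\vtheta\|_{\gamma+1} \to \log Z_{\vtheta,1} \cdot \lim\tfrac{1}{\gamma+1}$... more carefully, $\tfrac{1}{\gamma+1}\log(\cdots) \to \log Z_{\vtheta,1}$ as $\gamma\to 0$ since the argument tends to $Z_{\vtheta,1}$ and $\tfrac{1}{\gamma+1}\to 1$. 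Hence $S_\gamma(p,q_\vtheta) \to \E_p[\log\tq_\vtheta] - \log Z_{\vtheta,1} = \E_p[\log q_\vtheta]$, which is exactly the logarithm score. Taking $q_\vtheta = p$ gives $S_\gamma(p,p)\to\E_p[\log p]$, so $D_\gamma(p,q_\vtheta)\to\E_p[\log(p/q_\vtheta)] = \KL(p\|q_\vtheta)$, matching the $\gamma\to 0$ reduction of the pseudo-spherical score to the logarithm score noted in the Example.

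For the gradient limit I would take the expression from Theorem~\ref{the:ps-cd-gradient}, $\nabla_\vtheta\gL_\gamma(\vtheta;p) = -\tfrac{1}{\gamma}\nabla_\vtheta\log\bigl(\E_p[\exp(-\gamma E_\vtheta(\vx))]\bigr) - \E_{r_\vtheta}[\nabla_\vtheta E_\vtheta]$, and handle the two summands. For the first, $-\tfrac{1}{\gamma}\nabla_\vtheta\log\E_p[\exp(-\gamma E_\vtheta)] = -\tfrac{1}{\gamma}\cdot\tfrac{\E_p[-\gamma\nabla_\vtheta E_\vtheta\,\exp(-\gamma E_\vtheta)]}{\E_p[\exp(-\gamma E_\vtheta)]} = \tfrac{\E_p[\nabla_\vtheta E_\vtheta\,\exp(-\gamma E_\vtheta)]}{\E_p[\exp(-\gamma E_\vtheta)]} \to \E_p[\nabla_\vtheta E_\vtheta]$ as $\gamma\to 0$ (the $\gamma$'s cancel before taking the limit, so this is clean). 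For the second, $r_\vtheta(\vx)\propto\exp(-(\gamma+1)E_\vtheta(\vx))$, and as $\gamma\to 0$ this converges pointwise (and, under the paper's integrability assumptions, in the sense needed for the expectation) to $q_\vtheta(\vx)\propto\exp(-E_\vtheta(\vx))$, so $\E_{r_\vtheta}[\nabla_\vtheta E_\vtheta]\to\E_{q_\vtheta}[\nabla_\vtheta E_\vtheta]$. Combining, $\lim_{\gamma\to 0}\nabla_\vtheta\gL_\gamma(\vtheta;p) = \E_p[\nabla_\vtheta E_\vtheta] - \E_{q_\vtheta}[\nabla_\vtheta E_\vtheta] = \nabla_\vtheta\gL_{\mathrm{MLE}}(\vtheta;p)$ by Equation~(\ref{eq:contrastive-divergence}).

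The main obstacle is purely analytic rather than conceptual: justifying that the limit can be passed inside the integrals/expectations (dominated convergence for $\int\tq_\vtheta^{\gamma+1}$ near $\gamma=0$, and for $\E_{r_\vtheta}[\nabla_\vtheta E_\vtheta]$), and controlling the $o(\gamma)$ remainder terms in the Taylor expansions uniformly enough that $\tfrac{1}{\gamma}\times(\text{remainder})\to 0$. On a compact sample space $\gX$ with $E_\vtheta$ and $\nabla_\vtheta E_\vtheta$ bounded (which the EBM setup effectively grants), these are routine: $\exp(-\gamma E_\vtheta(\vx))$ and $\tq_\vtheta(\vx)^{\gamma+1}$ are bounded uniformly in $\gamma$ on a neighborhood of $0$, so dominated convergence applies and the expansions hold with uniform remainder. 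I would state these boundedness/integrability conditions explicitly and then the computation goes through as sketched.
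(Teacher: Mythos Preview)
Your proposal is correct and follows essentially the same approach as the paper. Both arguments rest on the first-order Taylor expansion $\tq^\gamma = 1 + \gamma\log\tq + O(\gamma^2)$ for the divergence limit, and on the explicit gradient formula from Theorem~\ref{the:ps-cd-gradient} (cancelling the $\gamma$'s in the first term and observing $r_\vtheta \to q_\vtheta$ in the second) for the gradient limit. The only cosmetic difference is that you show $S_\gamma(p,q_\vtheta)\to\E_p[\log q_\vtheta]$ directly and then subtract, whereas the paper first rewrites $D_\gamma$ as a sum of three log-integral terms and expands each; your route is arguably a touch cleaner conceptually, but the underlying computation is identical.
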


Inspired by \cite{van2014renyi,li2016renyi} that generalize R\'enyi divergence beyond its definition to negative orders, we now consider the extension of $\gamma$-score with $\gamma < 0$ (although it may not be
strictly proper for these $\gamma$ values). The following lemma shows that maximizing such scoring rule amounts to maximizing a lower bound of logarithm score (MLE) with an additional R\'enyi entropy regularization.

\begin{lemma}
When $-1 \leq \gamma < 0$, we have:
\begin{align*}
    S_\gamma(p, q) \leq \bb{E}_{p(\vx)} [\log q(\vx)] + \frac{\gamma
    }{\gamma + 1} \gH_{\gamma+1}(q)
\end{align*}
where $\gH_{\gamma + 1} (q)$ is the R\'enyi entropy of order $\gamma + 1$.
\end{lemma}

\section{Theoretical Analysis}
In this section, to gain a deeper understanding of our PS-CD algorithm and how the proposed estimator behaves, we analyze its sample complexity and convergence property under certain conditions.
All the proofs for this section can be found in Appendix~\ref{app:theoretical-analysis}.

\subsection{Sample Complexity}
We start with analyzing the sample complexity of the consistent gradient estimator in Equation~(\ref{eq:gradient-estimator}), that is how fast it approaches the true gradient value.
We first make the following assumption:
\begin{assumption}\label{assump:energy}
The energy function is bounded by $K$ and the gradient is bounded by $L$:
$$\forall \vx \in \gX, ~\vtheta \in \Theta, ~|E_\vtheta(\vx)| \leq K, ~\|\nabla_\vtheta E_\vtheta(\vx)\| \leq L.$$
\end{assumption}
This assumption is typically easy to satisfy because in practice we always use a bounded sample space (\emph{e.g.} normalizing images to [0,1] or truncated Gaussian) to ensure stability. For example, in image modeling experiments, we use $L_2$ regularization on the outputs of the energy function (hence bounded energy values), as well as normalized inputs and spectral normalization \cite{miyato2018spectral} for the neural network that realizes the energy function (hence Lipschitz continuous with bounded gradient).

With vector Bernstein inequality \cite{kohler2017sub,gross2011recovering}, we have the following theorem showing a sample complexity of $O\left(\frac{\log(1/\delta)}{\epsilon^2}\right)$ such that the estimation error is less than $\epsilon$ with probability at least $1-\delta$:
\begin{theorem} \label{the:sample-complexity}
For any constants $\epsilon > 0$ and $\delta \in (0,1)$, when the number of samples $N$ satisfies:
\begin{align*}
    N \geq \frac{32 L^2 e^{8 \gamma K}\left(1 + 4\log(2/\delta)\right)}{\epsilon^2}
\end{align*}
we have:
\begin{align*}
    \bb{P}\left(\left\|\widehat{\nabla_\vtheta \gL_\gamma^N(\vtheta;p)} - \nabla_\vtheta \gL_\gamma(\vtheta;p)\right\| \leq \epsilon \right) \geq 1 - \delta.
\end{align*}
\end{theorem}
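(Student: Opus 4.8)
The plan is to split the estimation error into two independent pieces---one for the "positive" term involving samples from $p$ and one for the "negative" self-normalized importance sampling term involving samples from $q_\vtheta$---bound each by $\epsilon/2$ with probability at least $1-\delta/2$, and then combine via a union bound. For the positive term, note that $\nabla_\vtheta \frac{1}{\gamma}\log(\frac{1}{N}\sum_i \exp(-\gamma E_\vtheta(\vx_i^+))) = -\frac{\sum_i \exp(-\gamma E_\vtheta(\vx_i^+))\nabla_\vtheta E_\vtheta(\vx_i^+)}{\sum_i \exp(-\gamma E_\vtheta(\vx_i^+))}$, which is again a self-normalized average; so in fact both halves have the same structure and can be handled by a single lemma. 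That lemma should say: if $f_i = \exp(-\gamma E_\vtheta(\vx_i))$ and $\vg_i = \nabla_\vtheta E_\vtheta(\vx_i)$ with $|{-\gamma E_\vtheta}|\le \gamma K$ and $\|\vg_i\|\le L$ (by Assumption~\ref{assump:energy}), then the self-normalized estimator $\frac{\sum_i f_i \vg_i}{\sum_i f_i}$ concentrates around $\frac{\E[f\vg]}{\E[f]}$ at rate $O(e^{2\gamma K}L\sqrt{\log(1/\delta)/N})$.

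To prove that lemma, I would first control the numerator $\frac{1}{N}\sum_i f_i\vg_i$ and denominator $\frac{1}{N}\sum_i f_i$ separately using the vector Bernstein inequality \cite{kohler2017sub,gross2011recovering}: since $f_i \in [e^{-\gamma K}, e^{\gamma K}]$ and $\|f_i\vg_i\| \le e^{\gamma K} L$, both empirical means are within $O(e^{\gamma K}L\sqrt{\log(1/\delta)/N})$ of their expectations with high probability. Then write the error of the ratio as
\begin{align*}
\frac{\sum f_i \vg_i}{\sum f_i} - \frac{\E[f\vg]}{\E[f]}
= \frac{\frac1N\sum f_i\vg_i - \E[f\vg]}{\frac1N\sum f_i}
+ \E[f\vg]\left(\frac{1}{\frac1N\sum f_i} - \frac{1}{\E[f]}\right),
\end{align*}
and bound each term using $\frac1N\sum f_i \ge e^{-\gamma K}$ (a deterministic lower bound!), $\|\E[f\vg]\|\le e^{\gamma K}L$, and $\E[f]\ge e^{-\gamma K}$. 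This turns the ratio bound into the numerator/denominator bounds multiplied by factors of $e^{\gamma K}$, yielding an overall $e^{2\gamma K}$ and, after also accounting for the second copy of the self-normalized average in the positive term (which contributes another factor), an $e^{4\gamma K}$ inside the error and hence $e^{8\gamma K}$ after squaring to solve for $N$. Tracking the constants carefully---the vector Bernstein constant, the factor $2$ from the union bound over the two halves, the factor $2$ from splitting $\epsilon$---should reproduce the stated threshold $N \ge \frac{32 L^2 e^{8\gamma K}(1+4\log(2/\delta))}{\epsilon^2}$.

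The main obstacle is the self-normalization: the estimator is a ratio of empirical means, not an empirical mean, so it is not directly a sum of bounded i.i.d. vectors and vector Bernstein does not apply verbatim. The key trick that makes it go through cleanly is that the denominator is bounded \emph{below by a deterministic constant} $e^{-\gamma K}$ (thanks to the boundedness of the energy in Assumption~\ref{assump:energy}), so no separate "good event" is needed to control the denominator from below; one only needs the upper-tail/lower-tail concentration of numerator and denominator around their means. A secondary bookkeeping challenge is being careful that the positive term's gradient is itself a self-normalized ratio (so it inherits the same $e^{\gamma K}$-type blowups), which is where the exponent $8\gamma K$ rather than $4\gamma K$ comes from; I would double-check this by writing out the positive-term gradient explicitly before combining.
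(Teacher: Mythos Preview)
Your overall strategy is sound and would prove the theorem (perhaps with a slightly different leading constant), but it differs from the paper's proof in two places worth noting.

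\textbf{Handling the ratio.} You propose to control numerator and denominator separately by vector Bernstein and then combine via the algebraic identity for a difference of ratios. The paper instead writes the common denominator form
\[
\Delta_p \le e^{2\gamma K}\left\|\bb{E}_{p^N}[f_\vtheta]\,\bb{E}_p[\vh_\vtheta] - \bb{E}_p[f_\vtheta]\,\bb{E}_{p^N}[\vh_\vtheta]\right\|
\]
and observes that the expression inside the norm is exactly $\overline{\bm X}=\frac1N\sum_i \bm X_i$ for the single centered i.i.d.\ vector $\bm X_i \defeq f_\vtheta(\vx_i)\,\bb{E}_p[\vh_\vtheta]-\bb{E}_p[f_\vtheta]\,\vh_\vtheta(\vx_i)$, with $\|\bm X_i\|\le 2Le^{2\gamma K}$. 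One application of vector Bernstein then suffices per ratio. This is tidier and is what produces the specific constant $32$; your two-Bernstein-plus-recombination route works but will give a larger constant unless you are quite careful.

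\textbf{Combining the two halves.} You use a union bound with confidence $1-\delta/2$ each. The paper instead exploits that the $p$-samples and $q_\vtheta$-samples are independent, sets each event to probability $\sqrt{1-\delta}$, and multiplies; it then weakens $1-4\log(1-\sqrt{1-\delta})$ to the cleaner $1+4\log(2/\delta)$. Either device works.

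\textbf{A small bookkeeping slip.} Your accounting of the exponent is off: a \emph{single} self-normalized ratio already yields an $Le^{4\gamma K}/\sqrt{N}$ error (the second term in your decomposition picks up $\|\bb{E}[f\vg]\|\le Le^{\gamma K}$ together with \emph{two} denominator factors $1/(\frac1N\sum f_i)$ and $1/\bb{E}[f]$, each $\le e^{\gamma K}$, and then $|B'-B|\lesssim e^{\gamma K}/\sqrt{N}$). The $e^{8\gamma K}$ then appears simply from squaring when solving for $N$. The positive and negative halves are \emph{added} via the triangle inequality; they do not contribute a further multiplicative $e^{\gamma K}$ factor.
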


\subsection{Convergence}
Typically, convergence of SGD are analyzed for unbiased gradient estimators, while the gradient estimator in PS-CD is asymptotically consistent but biased.
Building on the sample complexity bound above and prior theoretical works for analyzing SGD \cite{chen2018stochastic,ghadimi2013stochastic}, we analyze the convergence of PS-CD.
For notational convenience, we use $\gL(\vtheta)$ to denote the loss function $\gL_\gamma(\vtheta;p) = - S_\gamma(p, q_\vtheta)$. Besides Assumption~\ref{assump:energy}, we further make the following assumption on the smoothness of $\gL(\vtheta)$:
\begin{assumption}\label{assump:smooth}
The loss function $\gL(\vtheta)$ is $M$-smooth (with $M>0$):
$$\forall \vtheta_1, \vtheta_2 \in \Theta, ~\|\nabla \gL(\vtheta_1) - \nabla \gL(\vtheta_2)\| \leq M \|\vtheta_1 - \vtheta_2\|.$$
\end{assumption}
This is a common assumption for analyzing first-order optimization methods, which is also used in \cite{ghadimi2013stochastic,chen2018stochastic}. Also this is a relatively mild assumption since we do not require the loss function to be convex in $\vtheta$. Since in non-convex optimization, the convergence criterion is typically measured by gradient norm, following \cite{nesterov2013introductory,ghadimi2013stochastic}, we use $\|\nabla \gL(\vtheta)\| \leq \xi$ to judge whether a solution $\vtheta$ is approximately a stationary point.

\begin{theorem}\label{the:sgd-convergence}
For any constants $\alpha \in (0,1)$ and $\delta \in (0,1)$, suppose that the step sizes satisfy $\eta_t < 2(1-\alpha)/M$ and the sample size $N_t$ used for estimating $\widehat{\vg}_t$ is sufficiently large (satisfying Equation~(\ref{eq:app-convergence-sample})).
Let $\gL^*$ denote the minimum value of $\gL(\vtheta)$. Then with probability at least $1-\delta$, the output of Algorithm~\ref{alg:randomized-sgd} (in Appendix~\ref{app:ps-cd-convergence}), $\widehat{\vtheta}$, satisfies (constant $C \defeq \alpha M L^2 e^{4 \gamma K}$):
\begin{align*}
    \bb{E}[\|\nabla \gL(\widehat{\vtheta})\|^2] < \frac{2(\gL(\vtheta_1) - \gL^*) + 12 C \sum_{t=1}^T \eta_t^2}{\sum_{t=1}^T (2(1-\alpha)\eta_t - M \eta_t^2)}
\end{align*}
\end{theorem}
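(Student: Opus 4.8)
The plan is to follow the standard recipe for analyzing SGD with a biased but consistent gradient estimator, in the style of Ghadimi--Lan \cite{ghadimi2013stochastic} and Chen et al.\ \cite{chen2018stochastic}, tracking carefully how the bias and variance of the PS-CD estimator enter the bound. Write $\widehat{\vg}_t$ for the gradient estimator $\widehat{\nabla_\vtheta \gL_\gamma^{N_t}(\vtheta_t;p)}$ used at step $t$, and decompose $\widehat{\vg}_t = \nabla\gL(\vtheta_t) + \vb_t + \vn_t$, where $\vb_t \defeq \E[\widehat{\vg}_t \mid \vtheta_t] - \nabla\gL(\vtheta_t)$ is the bias and $\vn_t$ is the mean-zero noise. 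The first step is to invoke $M$-smoothness (Assumption~\ref{assump:smooth}) to get the descent inequality
\begin{align*}
\gL(\vtheta_{t+1}) \le \gL(\vtheta_t) - \eta_t \langle \nabla\gL(\vtheta_t), \widehat{\vg}_t\rangle + \frac{M\eta_t^2}{2}\|\widehat{\vg}_t\|^2,
\end{align*}
then take conditional expectation given $\vtheta_t$, expand the inner product using the bias/noise decomposition, and use Young's inequality to absorb the cross term $-\eta_t\langle\nabla\gL(\vtheta_t),\vb_t\rangle$ as $\le \frac{\alpha\eta_t}{2}\|\nabla\gL(\vtheta_t)\|^2 + \frac{\eta_t}{2\alpha}\|\vb_t\|^2$. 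This produces a recursion of the shape $\E[\gL(\vtheta_{t+1})] \le \E[\gL(\vtheta_t)] - (1-\alpha)\eta_t\,\E\|\nabla\gL(\vtheta_t)\|^2 + \frac{M\eta_t^2}{2}\E\|\widehat{\vg}_t\|^2 + \frac{\eta_t}{2\alpha}\E\|\vb_t\|^2$ plus lower-order terms, after using $\eta_t < 2(1-\alpha)/M$ so the $\|\nabla\gL\|^2$ coefficient on the right stays negative.

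The second step is to bound $\E\|\vb_t\|^2$ and $\E\|\widehat{\vg}_t\|^2$ (equivalently $\E\|\vn_t\|^2$) in terms of $N_t$, $L$, $K$, and $\gamma$. Here Assumption~\ref{assump:energy} is the workhorse: since $|E_\vtheta|\le K$, the self-normalized importance weights $\omega_\vtheta(\vx^-)=\exp(-\gamma E_\vtheta(\vx^-))$ lie in $[e^{-|\gamma| K}, e^{|\gamma| K}]$, and $\|\nabla_\vtheta E_\vtheta\|\le L$, so both terms in the estimator are bounded and the self-normalized ratio has bounded deviation. Quantitatively I would reuse (or re-derive via the same vector Bernstein argument as in Theorem~\ref{the:sample-complexity}) the mean-square error bound $\E\|\widehat{\vg}_t - \nabla\gL(\vtheta_t)\|^2 \lesssim L^2 e^{c\gamma K}/N_t$, which controls both $\|\vb_t\|^2$ (by Jensen, $\|\vb_t\|^2 = \|\E[\widehat{\vg}_t-\nabla\gL]\|^2 \le \E\|\widehat{\vg}_t-\nabla\gL\|^2$) and $\E\|\vn_t\|^2$. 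Feeding these into the recursion, the extra terms collect into a constant multiple of $\sum_t \eta_t^2$ times $C = \alpha M L^2 e^{4\gamma K}$ (the numerical constant $12$ and the exact exponent come from tallying $\frac{M}{2}$, $\frac{1}{2\alpha}$, and the MSE constant — this is the bookkeeping the statement's Equation~(\ref{eq:app-convergence-sample}) is calibrated against).

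The third step is the telescoping/randomization argument: sum the recursion over $t=1,\dots,T$, telescoping the $\gL(\vtheta_t)$ terms and using $\gL(\vtheta_{T+1}) \ge \gL^*$, to obtain
\begin{align*}
\sum_{t=1}^T \big(2(1-\alpha)\eta_t - M\eta_t^2\big)\,\E\|\nabla\gL(\vtheta_t)\|^2 \le 2(\gL(\vtheta_1) - \gL^*) + 12C\sum_{t=1}^T \eta_t^2.
\end{align*}
Since $\widehat{\vtheta}$ is drawn from $\{\vtheta_1,\dots,\vtheta_T\}$ with probability proportional to $(2(1-\alpha)\eta_t - M\eta_t^2)$ (this is exactly the randomized-stopping scheme of Algorithm~\ref{alg:randomized-sgd}), the left side is proportional to $\E\|\nabla\gL(\widehat{\vtheta})\|^2$, giving the claimed bound; the "with probability at least $1-\delta$" qualifier enters because the per-step MSE bounds from Theorem~\ref{the:sample-complexity} hold only on a high-probability event, and a union bound over the $T$ steps (absorbed into the requirement on $N_t$ in Equation~(\ref{eq:app-convergence-sample})) controls the failure probability. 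The main obstacle I anticipate is the clean handling of the bias of the self-normalized importance sampling estimator: unlike a plain Monte Carlo average it is genuinely biased at finite $N_t$, so one cannot just cite standard unbiased-SGD results, and the argument must show the bias is $O(1/N_t)$ (or at least square-summable after multiplying by $\eta_t$) uniformly in $\vtheta$ — this is where Assumption~\ref{assump:energy}'s uniform boundedness is essential, and getting the constants to line up with the stated $C$ and with Equation~(\ref{eq:app-convergence-sample}) is the delicate part.
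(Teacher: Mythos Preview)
Your proposal takes a genuinely different route from the paper. The paper does \emph{not} use a bias/noise decomposition or Young's inequality at all. Instead it first invokes Theorem~\ref{the:sample-complexity} with $\epsilon = \alpha\|\vg_t\|$ and a union bound over $t=1,\dots,T$ to obtain the event $\{\|\widehat{\vg}_t - \vg_t\| \le \alpha\|\vg_t\|\ \text{for all } t\}$ with probability at least $1-\delta$; then, \emph{conditional on that event}, the analysis is entirely deterministic. From $\|\widehat{\vg}_t - \vg_t\| \le \alpha\|\vg_t\|$ one gets directly $\langle \vg_t,\widehat{\vg}_t\rangle \ge (1-\alpha)\|\vg_t\|^2$ and $\|\widehat{\vg}_t\|^2 \le (1+\alpha)^2\|\vg_t\|^2$, which plugged into the smoothness descent inequality yields $((1-\alpha)\eta_t - \tfrac{M}{2}\eta_t^2)\|\vg_t\|^2 \le \gL(\vtheta_t)-\gL(\vtheta_{t+1}) + (\alpha + \tfrac{\alpha^2}{2})M\eta_t^2\|\vg_t\|^2$. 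The constant $12C$ then drops out from $(\alpha+\tfrac{\alpha^2}{2})M < \tfrac{3}{2}\alpha M$ together with the uniform gradient bound $\|\vg_t\|^2 < 4L^2 e^{4\gamma K}$ (which the paper establishes first from Assumption~\ref{assump:energy}). The only expectation in the final statement is over the random stopping index $Z \sim p_Z$; there is no expectation over sample noise.

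Your expectation-based decomposition $\widehat{\vg}_t = \vg_t + \vb_t + \vn_t$ with Young's inequality is a legitimate alternative analysis style, but it sits awkwardly with the theorem as stated. In your plan $\alpha$ is a free Young parameter, whereas in the paper $\alpha$ is the \emph{relative error tolerance} appearing in the sample-size condition~(\ref{eq:app-convergence-sample}); these are different roles and would not produce the same constants without further calibration. More substantively, your final paragraph conflates two modes: you invoke MSE bounds (which are expectations and would yield an unconditional $\bb{E}[\cdot]$ result with no $\delta$), yet also appeal to Theorem~\ref{the:sample-complexity} as giving ``per-step MSE bounds on a high-probability event'' --- but Theorem~\ref{the:sample-complexity} gives concentration bounds, not MSE bounds. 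If you condition on the high-probability event, the trajectory becomes deterministic and the $\vb_t,\vn_t$ split is moot; if you work in pure expectation, the ``with probability $1-\delta$'' qualifier does not arise. The paper resolves this by taking the first option cleanly.
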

The above theorem implies the following corollary, which shows a typical convergence rate of $O(1/\sqrt{T})$ for non-convex optimization problems:
\begin{corollary}\label{cor:constant-step-size}
Under the conditions in Theorem~\ref{the:sgd-convergence} except that we use constant step sizes:
$\eta_t = \min \{(1-\alpha)/M, 1/\sqrt{T}\}$ for $t=1,\ldots,T$. Then with probability at least $1-\delta$, we have (constant $C \defeq \alpha M L^2 e^{4 \gamma K}$):
{\small
\begin{align*}
    \bb{E}[\|\nabla \gL(\widehat{\vtheta})\|^2] < \frac{2M(\gL(\vtheta_1) - \gL^*)}{(1-\alpha)^2 T} + \frac{2(\gL(\vtheta_1) - \gL^*) + 12C}{(1-\alpha)\sqrt{T}}
\end{align*}
}
\end{corollary}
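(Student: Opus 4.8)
The plan is to obtain Corollary~\ref{cor:constant-step-size} purely by specializing the general step-size bound of Theorem~\ref{the:sgd-convergence} to the constant schedule $\eta_t \equiv \eta \defeq \min\{(1-\alpha)/M,\, 1/\sqrt{T}\}$, with no new probabilistic input. First I would verify that the hypotheses of Theorem~\ref{the:sgd-convergence} remain in force under this schedule: since $\eta \le (1-\alpha)/M < 2(1-\alpha)/M$, the step-size restriction $\eta_t < 2(1-\alpha)/M$ holds, and the sample sizes $N_t$ are taken large enough by assumption (the inherited condition Eq.~(\ref{eq:app-convergence-sample}) now reading as a fixed threshold determined by $\eta$), so the $(1-\delta)$-confidence conclusion of Theorem~\ref{the:sgd-convergence} applies verbatim and it only remains to simplify its right-hand side.

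Next I would substitute the constant schedule into that bound. The numerator becomes $2(\gL(\vtheta_1)-\gL^*) + 12C\,T\eta^2$ and the denominator becomes $T\eta\,(2(1-\alpha) - M\eta)$. Because $\eta \le (1-\alpha)/M$ gives $M\eta \le 1-\alpha$, hence $2(1-\alpha)-M\eta \ge 1-\alpha > 0$, I can lower-bound the denominator by $T\eta(1-\alpha)$ and split termwise:
\[
\bb{E}[\|\nabla\gL(\widehat{\vtheta})\|^2] < \frac{2(\gL(\vtheta_1)-\gL^*)}{T\eta(1-\alpha)} + \frac{12C\,\eta}{1-\alpha}.
\]
For the second term I would use $\eta \le 1/\sqrt{T}$ to get $12C\eta/(1-\alpha) \le 12C/((1-\alpha)\sqrt{T})$. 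For the first term the key elementary step is $1/\eta = \max\{M/(1-\alpha),\,\sqrt{T}\} \le M/(1-\alpha) + \sqrt{T}$, which turns $\frac{2(\gL(\vtheta_1)-\gL^*)}{T\eta(1-\alpha)}$ into $\frac{2M(\gL(\vtheta_1)-\gL^*)}{(1-\alpha)^2 T} + \frac{2(\gL(\vtheta_1)-\gL^*)}{(1-\alpha)\sqrt{T}}$. Summing the three pieces and merging the two $1/\sqrt{T}$ contributions reproduces the claimed inequality exactly, with the same $C = \alpha M L^2 e^{4\gamma K}$.

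I do not expect a genuine obstacle here: the whole argument is algebraic bookkeeping downstream of Theorem~\ref{the:sgd-convergence}. The only points deserving a moment's attention are (i) the inequality $\max\{a,b\} \le a+b$ for positive $a,b$ applied to $1/\eta$, which is precisely what produces both the $O(1/T)$ and the $O(1/\sqrt{T})$ terms in the stated rate, and (ii) making sure the confidence level $1-\delta$ is carried over unchanged from Theorem~\ref{the:sgd-convergence} rather than re-derived, and that the ``sufficiently large $N_t$'' clause is understood to be inherited as part of the corollary's hypotheses.
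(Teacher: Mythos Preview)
Your proposal is correct and follows essentially the same route as the paper's proof: substitute the constant step size into the bound from Theorem~\ref{the:sgd-convergence}, lower-bound the denominator using $M\eta \le 1-\alpha$, split into two terms, and then apply $\eta \le 1/\sqrt{T}$ to the second term and $1/\eta = \max\{M/(1-\alpha),\sqrt{T}\} \le M/(1-\alpha)+\sqrt{T}$ to the first. The only (harmless) imprecision is your parenthetical that the sample-size condition becomes ``a fixed threshold determined by $\eta$''; in fact Eq.~(\ref{eq:app-convergence-sample}) depends on $\|\vg_t\|$ and not on the step size, but this does not affect the argument.
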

In Appendix~\ref{app:ps-cd-convergence}, we discuss more on the strongly convex (Theorem~\ref{the:app-sgd-convergence-strong-convex}) and convex cases (Theorem~\ref{the:app-sgd-convergence-convex}).

\section{Related Work}
\textbf{Direct KL Minimization.} Under the ``analysis by synthesis'' scheme, \cite{hinton2002training} proposed Contrastive Divergence (CD), which estimates the gradient of the log-partition function (arising from KL) using samples from some MCMC procedure. To improve the mixing time of MCMC, \cite{du2019implicit} proposed to employ Persistent CD and a replay buffer to store intermediate samples from Markov chains throughout training, and \cite{nijkamp2019learning} proposed to learn non-convergent short-run MCMC. Both approaches (long-run and short-run MCMC) work well with PS-CD in our experiments. PS-CD may also benefit from recent advances on unbiased MCMC \cite{jacob2017unbiased,qiuunbiased}, which we leave as interesting future work.

\textbf{Fenchel Duality.} By exploiting the primal-dual view of KL, recent works \cite{dai2018kernel,dai2019exponential,arbel2020kale} proposed to cast maximum likelihood training of EBMs as minimax problems, which introduce a dual sampler and are approximately solved by alternating gradient descent ascent updates. Along this line, to allow flexible modeling preferences, \cite{yu2020training} proposed $f$-EBM to enable the use of any $f$-divergence to train EBMs, which also relies on minimax optimization. By contrast, in this work, we leverage the analytical form of the optimal variational distribution and self-normalized importance sampling to reach a framework that requires no adversarial training and has no additional computational cost compared to CD while allowing flexible modeling preferences. Besides convenient optimization, PS-CD and $f$-EBM trains EBMs with two different families of divergences (hence complementary) with KL being the only shared one, since any pseudo-spherical scoring rule corresponds to a Bregman divergence (Section~\ref{sec:proper-scoring-rules}) and the only member in $f$-divergence that is also Bregman divergence is $\alpha$-divergence (with KL as special case) (Theorem~4 in \cite{amari2009divergence}).

\textbf{Homogeneous Scoring Rules.} \cite{takenouchi2015empirical} proposed to learn unnormalized statistical models on discrete sample space by maximizing $\gamma$-score, which uses empirical data distribution ($\widehat{p}(\vx) = n_{\vx}/n$, where $n_{\vx}$ is the number of appearance of $\vx$ in the dataset and $n$ is the total number of data) as a surrogate to the real data density $p$ and relies on a localization trick to bypass the computation of $\|q_\vtheta\|_{\gamma + 1}$.
Consequently, it is only amenable to finite discrete sample space such as natural language \cite{labeau2019experimenting}, whereas PS-CD is applicable to any unnormalized probabilistic model in continuous domains. Another popular homogeneous scoring rule is the Hyv\"arinen score, which gives rise to the score matching objective \cite{hyvarinen2005estimation} for EBM training. However, score matching and its variants~\cite{vincent2011connection,song2019sliced} have difficulties in low data density regions and do not perform well in practice when training EBMs on high-dimensional datasets~\cite{song2019generative}. Moreover, since the score matching objective involves the Hessian of log-density functions that is generally expensive to compute \cite{martens2012estimating}, methods such as approximate propagation \cite{kingma2010regularized}, curvature propagation \cite{martens2012estimating} and sliced score matching \cite{song2020sliced} are needed to approximately compute the trace of the Hessian.

\textbf{Noise Contrastive Estimation.} Another principle for learning EBMs is Noise Contrastive Estimation~(NCE) \cite{gutmann2012noise}, where an EBM is learned by contrasting a prescribed noise distribution with tractable density against the unknown data distribution. Using various Bregman divergences, NCE can be generalized to a family of different loss functionals~\cite{gutmann2011bregman,uehara2020unified}. However, finding an appropriate noise distribution for NCE is highly non-trivial. 
In practice, NCE typically works well in conjunction with a carefully-designed noise distribution such as context-dependent noise distribution \cite{ji2015blackout} or joint learning of a flow-based noise distribution \cite{gao2020flow}. 

In this work, we focus on generalizing maximum likelihood by deriving novel training objectives for EBMs without involving auxiliary models (\emph{e.g.}, the variational function in \cite{yu2020training}, the flow-based noise distribution in \cite{gao2020flow} and the amortized sampler in \cite{kumar2019maximum,dai2019exponential,grathwohl2020no}).

\section{Experiments} \label{sec:experiments}
In this section, we demonstrate the effectiveness of PS-CD on several 1-D and 2-D synthetic datasets as well as commonly used image datasets.

\textbf{Setup.} The 2-D synthetic datasets include Cosine, Swiss Roll, Moon, Mixture of Gaussian, Funnel and Rings, which cover different modalities and geometries (see Figure~\ref{fig:hist2d} in App.~\ref{app:synthetic-data} for illustration). To test the practical usefulness, we use MNIST \cite{lecun1998gradient}, CIFAR-10 \cite{krizhevsky2009learning} and CelebA \cite{liu2015faceattributes} in our experiments for modeling natural images. Following \cite{song2019generative}, for CelebA, we first center-crop the images to $140 \times 140$, then resize them to $64 \times 64$. More experimental details about the data processing, model architectures, sampling strategies and additional experimental results can be found in App.~\ref{app:experiments}.

\begin{wrapfigure}{r}{6cm}
    \centering
    \includegraphics[width=.28\textwidth]{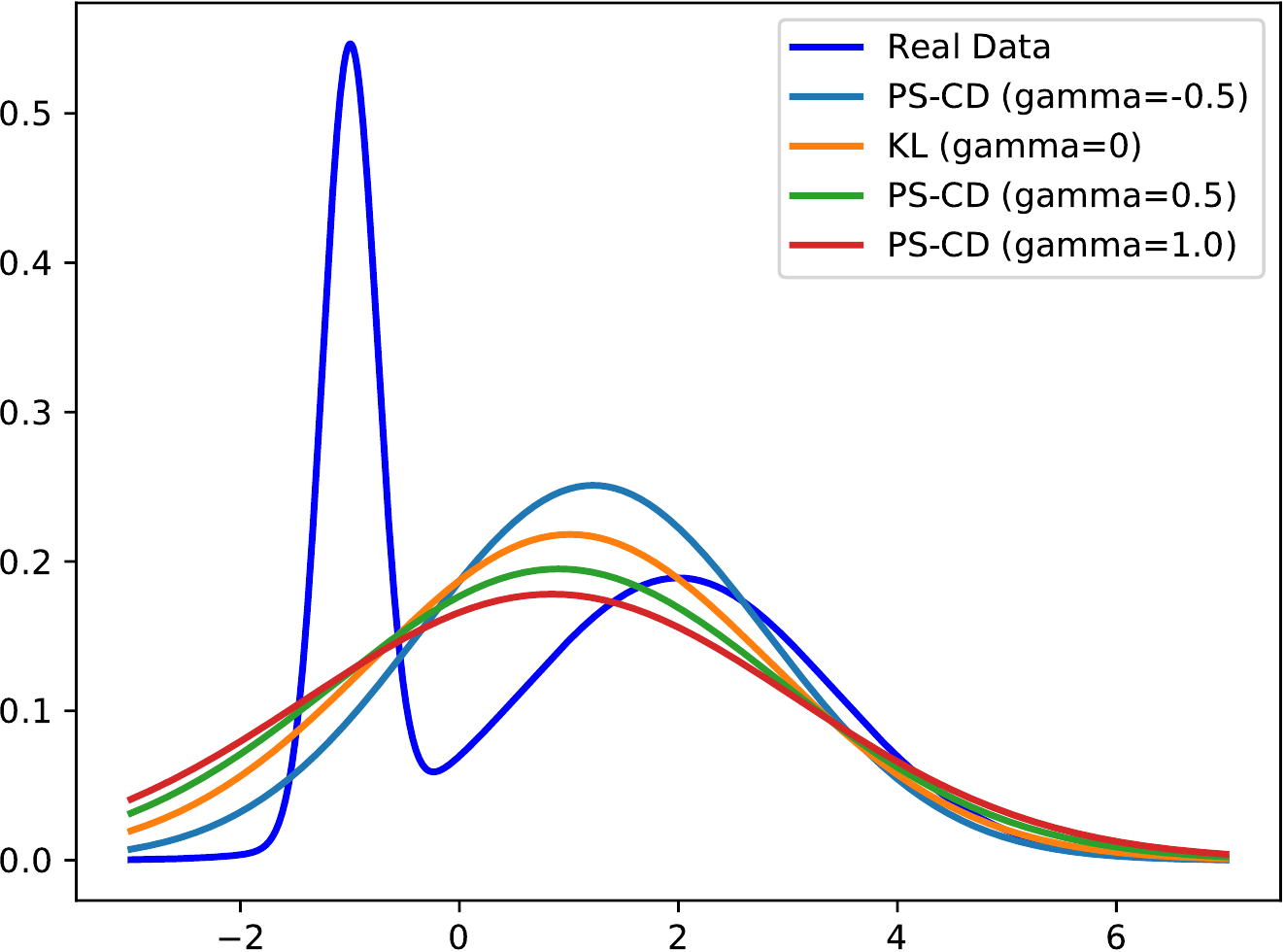}
    \caption{The effects of different $\gamma$ values when fitting a mixture of Gaussian with a single Gaussian.}
    \label{fig:1d-synthetic}
    \vspace{-10pt}
\end{wrapfigure}

\textbf{Effects of Different $\gamma$ Values.} 
To illustrate the modeling flexibility brought by PS-CD and provide insights on the effects of different $\gamma$ values, we first conduct a $1$-D synthetic experiment similar to the one in \cite{yu2020training}. As shown in Figure~\ref{fig:1d-synthetic}, when fitting a mixture of Gaussian with a single Gaussian (\emph{i.e.}, model mis-specification case), the family of PS-CD offers flexible tradeoffs between quality and diversity (\emph{i.e.}, mode collapse vs. mode coverage). Although in the well-specified case these objectives induce the same optimal solution, in this example, we can see that a larger $\gamma$ leads to higher entropy. 
More importantly, compared to $f$-EBM \cite{yu2020training} that also provides similar modeling flexibility and includes CD as a special case, PS-CD does not require expensive and unstable minimax optimization (no additional computational cost compared to CD).
In App.~\ref{app:visualize-objective-landscape}, we further visualize the objective landscapes for different $\gamma$ values. As shown in Figure~\ref{fig:well-specified} and \ref{fig:mis-specified}, when the model is well-specified, different objectives will induce the same optimal solution since they are strictly proper; when the model is mis-specified (corresponding to practical scenarios), different objectives will exhibit different modeling preferences.

Furthermore, to better demonstrate the modeling flexibility brought by PS-CD in high-dimensional case, we conduct experiments on CIFAR-10 using a set of indicative and reliable metrics (Density, Coverage, Precision, Recall) proposed by \cite{naeem2020reliable}  to evaluate the effects of $\gamma$ from various perspectives. Please refer to 
App.~\ref{app:effects-of-gamma-image} for experimental results (Table~\ref{tab:effects-of-gamma-cifar10}) and detailed discussions.

\begin{wraptable}{r}{6cm}
\vspace{-10pt}
\caption{FID scores for CIFAR-10 conditional and CelebA unconditional image generation. We list comparisons with results reported by CD \cite{du2019implicit}, Noise-Conditional Score Network (NCSN) \cite{song2020generative} and $f$-EBMs \cite{yu2020training}. $\gamma = 1.0$ corresponds to maximizing spherical scoring rule (Example~\ref{example:spherical}).}
\begin{center}
\scalebox{0.9}{
\begin{tabular}{lc}
\toprule
Method & FID\\\midrule
\multicolumn{2}{c}{\textbf{CIFAR-10 ($\mathbf{32 \times 32}$) Conditional}} \\\midrule
Contrastive Divergence (KL) & 37.90\\
$f$-EBM (KL) & 37.36\\
$f$-EBM (Reverse KL) & 33.25\\
$f$-EBM (Squared Hellinger) & 32.19\\
$f$-EBM (Jensen Shannon) & 30.86\\\midrule
Pseudo-Spherical CD ($\gamma=2.0$) & 33.19\\
Pseudo-Spherical CD ($\gamma=1.0$) & \textbf{29.78}\\
Pseudo-Spherical CD ($\gamma=0.5$) & 35.02\\
Pseudo-Spherical CD ($\gamma=-0.5$) & \textbf{27.95}\\\midrule
\multicolumn{2}{c}{\textbf{CelebA ($\mathbf{64 \times 64}$)}} \\\midrule
Contrastive Divergence (KL) & 26.10\\
NCSN (w/o denoising) & 26.89\\
NCSN (w/ denoising) & 25.30\\
NCSNv2 (w/o denoising) & 28.86\\
NCSNv2 (w/ denoising) & \textbf{10.23}\\\midrule
Pseudo-Spherical CD ($\gamma=1.0$) & \textbf{24.76}\\
Pseudo-Spherical CD ($\gamma=-0.5$) & \textbf{20.35}\\
\bottomrule
\end{tabular}
}
\end{center}
\label{tab:image-generation}
\vspace{-20pt}
\end{wraptable}

\textbf{2-D Synthetic Data.}
For quantitative evaluation of the 2-D synthetic data experiments, we follow \cite{song2019bridging} and report the maximum mean discrepancy (MMD, \cite{borgwardt2006integrating}) between the generated samples and validation samples in Table~\ref{tab:synthetic} in App.~\ref{app:synthetic-data}, which demonstrates that PS-CD outperforms its CD counterpart on all but the Funnel dataset. From the histograms of samples shown in Figure~\ref{fig:hist2d} in App.~\ref{app:synthetic-data}, we can also have similar observations. For example, CD fails to place high densities in the center of the right mode in MoG, while PS-CD places the modes correctly.

\textbf{Image Generation.}
In Figure~\ref{fig:samples} in App.\ref{app:image-generation-samples}, we show MNIST, CIFAR-10 and CelebA samples produced by PS-CD (with $\gamma=1.0$), which demonstrate that our approach can produce highly realistic images with simple model architectures. As suggested in \cite{du2019implicit}, we use Fr\'echet Inception Distance (FID) \cite{heusel2017gans} as the quantitative evaluation metric for CIFAR-10 and CelebA, as Langevin dynamics converge to local minima that artificially inflate Inception Score \cite{salimans2016improved}. From Table~\ref{tab:image-generation}\footnote{For CelebA dataset, we reproduced the short-run MCMC method \cite{nijkamp2019learning} using our code base. Moreover, the $f$-EBM paper only reported results on CelebA 32x32 and we empirically found it is not comparable to our method in CelebA 64x64 (higher resolution), indicating better scalability of PS-CD to high-dimensional case.}, we can see that various members (different $\gamma$ values) in the family of PS-CD can outperform CD significantly, and more surprisingly, PS-CD also shows competitive performance to the recently proposed $f$-EBMs, without requiring expensive minimax optimization. While our method currently does not outperform the state-of-the-art image generation methods such as improved denoising score matching \cite{song2020generative}, which relies on carefully selected noise schedule and specially designed noise-dependent score network (modified U-Net architecture, hence not directly comparable to our results), we think that our work opens up a new research direction by bridging statistical decision theory (homogeneous proper scoring rules) and deep energy-based generative modeling. Moreover, under the setting of simple model architectures and the same hyperparameter configuration (\emph{e.g.}, batch size, learning rate, network structure, etc.), our empirical results suggest clear superiority of PS-CD over traditional CD and recent $f$-EBMs.

\begin{table}[t]
    \centering
    \caption{Robustness to data contamination on Gaussian datasets. The data distribution is $\mathcal{N}(-1, 0.5)$ and the contamination distribution is $\mathcal{N}(2, 0.05)$. We measure the KL divergence between clean target distribution $p$ and converged model distribution $q_\theta$, $\KL(p \| q_\theta)$.}
    \label{tab:contamination-gaussian}
    \vspace{5pt}
    \begin{tabular}{c c c c c}
    \toprule
      Contamination Ratio & CD & PS-CD ($\gamma=0.5$) & PS-CD ($\gamma=1.0$) & PS-CD ($\gamma=2.0$)\\
      \midrule
       0.01 & 0.0067 & 1e-5 & 1e-7 & 1e-6 \\
       0.05 & 0.0851 & 0.00027 & 1.6e-6 & 0.00011 \\
       0.1 & 0.1979 & 0.00173 & 1.86e-6 & 0.00012\\
       0.2 & 0.3869 & 0.1858 & 6.4e-6 & 0.00017 \\
       0.3 & 0.5438 & 0.5429 & 0.3118 & 0.00029\\
      \bottomrule
    \end{tabular}
    \vspace{-10pt}
\end{table}

\textbf{OOD Detection \& Robustness to Data Contamination.} We further test our methods on out-of-distribution (OOD) detection tasks. For the conditional CIFAR-10 model, we follow the evaluation protocol in \cite{du2019implicit}
and use $s(\vx) = \max_{y \in \mathcal{Y}} -E(\vx, y)$ as the score for detecting outliers. We use SVHN \cite{netzer2011reading}, Textures \cite{cimpoi2014describing}, Uniform/Gaussian Noise, CIFAR-10 Linear Interpolation and CelebA as the OOD datasets. We summarize the results in Table~\ref{tab:ood} in App.~\ref{app:ood}, from which we can see that PS-CD consistently outperforms CD and other likelihood-based models. 

Inspired by the OOD detection performance and previous work on robust parameter estimation under data contamination \cite{kanamori2015robust}, we further test the robustness of CD and PS-CD on both synthetic and natural image datasets. Specifically, suppose $p(\vx)$ is the underlying data distribution and there is another contamination distribution $\omega(\vx)$, e.g. uniform noise. In generative modeling under data contamination, our model 
observes i.i.d. samples from the contaminated distribution $\Tilde{p}(\vx) = c p(\vx) + (1-c) \omega(\vx)$, where $1-c \in [0, 1/2)$ is the contamination ratio. A theoretical advantage of pseudo-spherical score is its robustness to data contamination: the optimal solution of $\min_\vtheta S_{\text{ps}}(\Tilde{p}, q_\vtheta)$
is close to that of $\min_\vtheta S_{\text{ps}}(p, q_\vtheta)$
under some conditions (e.g. the density of $\omega(\vx)$ mostly lies in the region for which the target density $p(\vx)$ is small) \cite{fujisawa2008robust,kanamori2015robust}. From Table~\ref{tab:contamination-gaussian}, we can see that CD suffers from data contamination severely: as the contamination ratio increases, the performance degrades drastically. By contrast, PS-CD shows good robustness against data contamination and a larger $\gamma$ leads to better robustness. For example, PS-CD with $\gamma=1.0$ can properly approximate the target distribution when the contamination ratio is $0.2$, while PS-CD with $\gamma=2.0$ can do so when the contamination ratio is up to $0.3$.

We conduct similar experiments on MNIST and CIFAR-10 datasets, where we use uniform noise as the contamination distribution and the contamination ratio is 0.1 (i.e. 10\% images in the training set are replaced with random noise). After a warm-up pretraining\footnote{Note that it is impossible for a randomly initialized model to be robust to data contamination since without additional inductive bias, it will simply treat the contaminated distribution $\Tilde{p}(\vx)$ as the target.} (when the model has some OOD detection ability), we train the model with the contaminated data and measure the training progress.
We observe that CD gradually generates more random noise and diverge after a few training steps, while PS-CD is very robust. As shown in Table~\ref{tab:contamination-cifar} in App.~\ref{app:ood}, for a slightly pre-trained unconditional CIFAR-10 model (a simple 5-layer CNN with FID of 68.77), we observe that the performance of CD degrades drastically in terms of FID, while PS-CD can continuously improve the model even using the contaminated data. We provide visualizations and theoretical explanations in App.~\ref{app:ood}.
\vspace{-10pt}
\section{Conclusion}
\vspace{-10pt}
From the perspective of maximizing strictly proper homogeneous scoring rules, we propose pseudo-spherical contrastive divergence (PS-CD) to generalize maximum likelihood estimation of energy-based models. 
Different from prior works that involve joint training of auxiliary models \cite{yu2020training,gao2020flow,kumar2019maximum,dai2019exponential,grathwohl2020no}, PS-CD allows us to specify flexible modeling preferences without additional computational cost compared to contrastive divergence. We provide a theoretical analysis on the sample complexity and convergence property of the proposed method, as well as its connection to maximum likelihood. Finally, we demonstrate the effectiveness of PS-CD with extensive experiments on both synthetic data and commonly used image datasets.

\section*{Acknowledgements}
This research was supported by NSF(\#1651565, \#1522054, \#1733686), ONR (N000141912145), AFOSR (FA95501910024), ARO (W911NF-21-1-0125) and Sloan Fellowship.

\bibliographystyle{abbrv}
\bibliography{bibliography}

\begin{thebibliography}{10}

\bibitem{amari2009divergence}
Shun-ichi Amari.
\newblock Divergence function, information monotonicity and information
  geometry.
\newblock In {\em Workshop on information theoretic methods in science and
  engineering (WITMSE)}. Citeseer, 2009.

\bibitem{arbel2020kale}
Michael Arbel, Liang Zhou, and Arthur Gretton.
\newblock Kale: When energy-based learning meets adversarial training.
\newblock {\em arXiv preprint arXiv:2003.05033}, 2020.

\bibitem{arjovsky2017wasserstein}
Martin Arjovsky, Soumith Chintala, and L{\'e}on Bottou.
\newblock Wasserstein gan.
\newblock {\em arXiv preprint arXiv:1701.07875}, 2017.

\bibitem{belghazi2018mine}
Mohamed~Ishmael Belghazi, Aristide Baratin, Sai Rajeswar, Sherjil Ozair, Yoshua
  Bengio, Aaron Courville, and R~Devon Hjelm.
\newblock Mine: mutual information neural estimation.
\newblock {\em arXiv preprint arXiv:1801.04062}, 2018.

\bibitem{borgwardt2006integrating}
Karsten~M Borgwardt, Arthur Gretton, Malte~J Rasch, Hans-Peter Kriegel,
  Bernhard Sch{\"o}lkopf, and Alex~J Smola.
\newblock Integrating structured biological data by kernel maximum mean
  discrepancy.
\newblock {\em Bioinformatics}, 22(14):e49--e57, 2006.

\bibitem{bregman1967relaxation}
Lev~M Bregman.
\newblock The relaxation method of finding the common point of convex sets and
  its application to the solution of problems in convex programming.
\newblock {\em USSR computational mathematics and mathematical physics},
  7(3):200--217, 1967.

\bibitem{burda2015importance}
Yuri Burda, Roger Grosse, and Ruslan Salakhutdinov.
\newblock Importance weighted autoencoders.
\newblock {\em arXiv preprint arXiv:1509.00519}, 2015.

\bibitem{chen2018stochastic}
Jie Chen and Ronny Luss.
\newblock Stochastic gradient descent with biased but consistent gradient
  estimators.
\newblock {\em arXiv preprint arXiv:1807.11880}, 2018.

\bibitem{cimpoi2014describing}
Mircea Cimpoi, Subhransu Maji, Iasonas Kokkinos, Sammy Mohamed, and Andrea
  Vedaldi.
\newblock Describing textures in the wild.
\newblock In {\em Proceedings of the IEEE Conference on Computer Vision and
  Pattern Recognition}, pages 3606--3613, 2014.

\bibitem{csiszar1964informationstheoretische}
Imre Csisz{\'a}r.
\newblock Eine informationstheoretische ungleichung und ihre anwendung auf
  beweis der ergodizitaet von markoffschen ketten.
\newblock {\em Magyer Tud. Akad. Mat. Kutato Int. Koezl.}, 8:85--108, 1964.

\bibitem{dai2018kernel}
Bo~Dai, Hanjun Dai, Arthur Gretton, Le~Song, Dale Schuurmans, and Niao He.
\newblock Kernel exponential family estimation via doubly dual embedding.
\newblock {\em arXiv preprint arXiv:1811.02228}, 2018.

\bibitem{dai2019exponential}
Bo~Dai, Zhen Liu, Hanjun Dai, Niao He, Arthur Gretton, Le~Song, and Dale
  Schuurmans.
\newblock Exponential family estimation via adversarial dynamics embedding.
\newblock {\em arXiv preprint arXiv:1904.12083}, 2019.

\bibitem{dawid1998coherent}
A~Philip Dawid.
\newblock Coherent measures of discrepancy, uncertainty and dependence, with
  applications to bayesian predictive experimental design.
\newblock {\em Department of Statistical Science, University College London.
  http://www. ucl. ac. uk/Stats/research/abs94. html, Tech. Rep}, 139, 1998.

\bibitem{dinh2014nice}
Laurent Dinh, David Krueger, and Yoshua Bengio.
\newblock Nice: Non-linear independent components estimation.
\newblock {\em arXiv preprint arXiv:1410.8516}, 2014.

\bibitem{dinh2016density}
Laurent Dinh, Jascha Sohl-Dickstein, and Samy Bengio.
\newblock Density estimation using real nvp.
\newblock {\em arXiv preprint arXiv:1605.08803}, 2016.

\bibitem{du2019model}
Yilun Du, Toru Lin, and Igor Mordatch.
\newblock Model based planning with energy based models.
\newblock {\em arXiv preprint arXiv:1909.06878}, 2019.

\bibitem{du2019implicit}
Yilun Du and Igor Mordatch.
\newblock Implicit generation and modeling with energy based models.
\newblock In {\em Advances in Neural Information Processing Systems 32}, pages
  3603--3613, 2019.

\bibitem{finke2019importance}
Axel Finke and Alexandre~H Thiery.
\newblock On importance-weighted autoencoders.
\newblock {\em arXiv preprint arXiv:1907.10477}, 2019.

\bibitem{friedman1983effective}
Daniel Friedman.
\newblock Effective scoring rules for probabilistic forecasts.
\newblock {\em Management Science}, 29(4):447--454, 1983.

\bibitem{fujisawa2008robust}
Hironori Fujisawa and Shinto Eguchi.
\newblock Robust parameter estimation with a small bias against heavy
  contamination.
\newblock {\em Journal of Multivariate Analysis}, 99(9):2053--2081, 2008.

\bibitem{gao2020flow}
Ruiqi Gao, Erik Nijkamp, Diederik~P Kingma, Zhen Xu, Andrew~M Dai, and
  Ying~Nian Wu.
\newblock Flow contrastive estimation of energy-based models.
\newblock In {\em Proceedings of the IEEE/CVF Conference on Computer Vision and
  Pattern Recognition}, pages 7518--7528, 2020.

\bibitem{garthwaite2005statistical}
Paul~H Garthwaite, Joseph~B Kadane, and Anthony O'Hagan.
\newblock Statistical methods for eliciting probability distributions.
\newblock {\em Journal of the American Statistical Association},
  100(470):680--701, 2005.

\bibitem{germain2015made}
Mathieu Germain, Karol Gregor, Iain Murray, and Hugo Larochelle.
\newblock Made: Masked autoencoder for distribution estimation.
\newblock In {\em International Conference on Machine Learning}, pages
  881--889, 2015.

\bibitem{ghadimi2013stochastic}
Saeed Ghadimi and Guanghui Lan.
\newblock Stochastic first-and zeroth-order methods for nonconvex stochastic
  programming.
\newblock {\em SIAM Journal on Optimization}, 23(4):2341--2368, 2013.

\bibitem{gibbs2002choosing}
Alison~L Gibbs and Francis~Edward Su.
\newblock On choosing and bounding probability metrics.
\newblock {\em International statistical review}, 70(3):419--435, 2002.

\bibitem{gneiting2007strictly}
Tilmann Gneiting and Adrian~E Raftery.
\newblock Strictly proper scoring rules, prediction, and estimation.
\newblock {\em Journal of the American Statistical Association},
  102(477):359--378, 2007.

\bibitem{good1971comment}
IJ~Good.
\newblock Comment on “measuring information and uncertainty” by robert j.
  buehler.
\newblock {\em Foundations of Statistical Inference}, pages 337--339, 1971.

\bibitem{goodfellow2014generative}
Ian Goodfellow, Jean Pouget-Abadie, Mehdi Mirza, Bing Xu, David Warde-Farley,
  Sherjil Ozair, Aaron Courville, and Yoshua Bengio.
\newblock Generative adversarial nets.
\newblock In {\em Advances in neural information processing systems}, pages
  2672--2680, 2014.

\bibitem{grathwohl2020no}
Will Grathwohl, Jacob Kelly, Milad Hashemi, Mohammad Norouzi, Kevin Swersky,
  and David Duvenaud.
\newblock No mcmc for me: Amortized sampling for fast and stable training of
  energy-based models.
\newblock {\em arXiv preprint arXiv:2010.04230}, 2020.

\bibitem{grathwohl2019your}
Will Grathwohl, Kuan-Chieh Wang, J{\"o}rn-Henrik Jacobsen, David Duvenaud,
  Mohammad Norouzi, and Kevin Swersky.
\newblock Your classifier is secretly an energy based model and you should
  treat it like one.
\newblock {\em arXiv preprint arXiv:1912.03263}, 2019.

\bibitem{grenander2007pattern}
Ulf Grenander, Michael~I Miller, Michael Miller, et~al.
\newblock {\em Pattern theory: from representation to inference}.
\newblock Oxford university press, 2007.

\bibitem{gross2011recovering}
David Gross.
\newblock Recovering low-rank matrices from few coefficients in any basis.
\newblock {\em IEEE Transactions on Information Theory}, 57(3):1548--1566,
  2011.

\bibitem{gutmann2011bregman}
Michael~U Gutmann and Jun-ichiro Hirayama.
\newblock Bregman divergence as general framework to estimate unnormalized
  statistical models.
\newblock In {\em Proceedings of the Twenty-Seventh Conference on Uncertainty
  in Artificial Intelligence}, pages 283--290, 2011.

\bibitem{gutmann2012noise}
Michael~U Gutmann and Aapo Hyv{\"a}rinen.
\newblock Noise-contrastive estimation of unnormalized statistical models, with
  applications to natural image statistics.
\newblock {\em The journal of machine learning research}, 13(1):307--361, 2012.

\bibitem{he2016deep}
Kaiming He, Xiangyu Zhang, Shaoqing Ren, and Jian Sun.
\newblock Deep residual learning for image recognition.
\newblock In {\em Proceedings of the IEEE conference on computer vision and
  pattern recognition}, pages 770--778, 2016.

\bibitem{heusel2017gans}
Martin Heusel, Hubert Ramsauer, Thomas Unterthiner, Bernhard Nessler, and Sepp
  Hochreiter.
\newblock Gans trained by a two time-scale update rule converge to a local nash
  equilibrium.
\newblock In {\em Advances in neural information processing systems}, pages
  6626--6637, 2017.

\bibitem{hinton2002training}
Geoffrey~E Hinton.
\newblock Training products of experts by minimizing contrastive divergence.
\newblock {\em Neural computation}, 14(8):1771--1800, 2002.

\bibitem{homem2008rates}
Tito Homem-de Mello.
\newblock On rates of convergence for stochastic optimization problems under
  non--independent and identically distributed sampling.
\newblock {\em SIAM Journal on Optimization}, 19(2):524--551, 2008.

\bibitem{hyvarinen2005estimation}
Aapo Hyv{\"a}rinen.
\newblock Estimation of non-normalized statistical models by score matching.
\newblock {\em Journal of Machine Learning Research}, 6(Apr):695--709, 2005.

\bibitem{jacob2017unbiased}
Pierre~E Jacob, John O'Leary, and Yves~F Atchad{\'e}.
\newblock Unbiased markov chain monte carlo with couplings.
\newblock {\em arXiv preprint arXiv:1708.03625}, 2017.

\bibitem{ji2015blackout}
Shihao Ji, SVN Vishwanathan, Nadathur Satish, Michael~J Anderson, and Pradeep
  Dubey.
\newblock Blackout: Speeding up recurrent neural network language models with
  very large vocabularies.
\newblock {\em arXiv preprint arXiv:1511.06909}, 2015.

\bibitem{jin2019local}
Chi Jin, Praneeth Netrapalli, and Michael~I Jordan.
\newblock What is local optimality in nonconvex-nonconcave minimax
  optimization?
\newblock {\em arXiv preprint arXiv:1902.00618}, 2019.

\bibitem{kanamori2015robust}
Takafumi Kanamori and Hironori Fujisawa.
\newblock Robust estimation under heavy contamination using unnormalized
  models.
\newblock {\em Biometrika}, 102(3):559--572, 2015.

\bibitem{kanamori2014affine}
Takafumi Kanamori, Hironori Fujisawa, et~al.
\newblock Affine invariant divergences associated with proper composite scoring
  rules and their applications.
\newblock {\em Bernoulli}, 20(4):2278--2304, 2014.

\bibitem{kingma2010regularized}
Durk~P Kingma and Yann~L Cun.
\newblock Regularized estimation of image statistics by score matching.
\newblock In {\em Advances in neural information processing systems}, pages
  1126--1134, 2010.

\bibitem{kingma2018glow}
Durk~P Kingma and Prafulla Dhariwal.
\newblock Glow: Generative flow with invertible 1x1 convolutions.
\newblock In {\em Advances in Neural Information Processing Systems}, pages
  10215--10224, 2018.

\bibitem{kohler2017sub}
Jonas~Moritz Kohler and Aurelien Lucchi.
\newblock Sub-sampled cubic regularization for non-convex optimization.
\newblock In {\em Proceedings of the 34th International Conference on Machine
  Learning-Volume 70}, pages 1895--1904. JMLR. org, 2017.

\bibitem{krizhevsky2009learning}
Alex Krizhevsky, Geoffrey Hinton, et~al.
\newblock Learning multiple layers of features from tiny images.
\newblock 2009.

\bibitem{krizhevsky2012imagenet}
Alex Krizhevsky, Ilya Sutskever, and Geoffrey~E Hinton.
\newblock Imagenet classification with deep convolutional neural networks.
\newblock In {\em Advances in neural information processing systems}, pages
  1097--1105, 2012.

\bibitem{kumar2019maximum}
Rithesh Kumar, Sherjil Ozair, Anirudh Goyal, Aaron Courville, and Yoshua
  Bengio.
\newblock Maximum entropy generators for energy-based models.
\newblock {\em arXiv preprint arXiv:1901.08508}, 2019.

\bibitem{labeau2019experimenting}
Matthieu Labeau and Shay~B Cohen.
\newblock Experimenting with power divergences for language modeling.
\newblock In {\em Proceedings of the 2019 Conference on Empirical Methods in
  Natural Language Processing and the 9th International Joint Conference on
  Natural Language Processing (EMNLP-IJCNLP)}, pages 4095--4105, 2019.

\bibitem{lacoste2012simpler}
Simon Lacoste-Julien, Mark Schmidt, and Francis Bach.
\newblock A simpler approach to obtaining an o (1/t) convergence rate for the
  projected stochastic subgradient method.
\newblock {\em arXiv preprint arXiv:1212.2002}, 2012.

\bibitem{larochelle2011neural}
Hugo Larochelle and Iain Murray.
\newblock The neural autoregressive distribution estimator.
\newblock In {\em Proceedings of the Fourteenth International Conference on
  Artificial Intelligence and Statistics}, pages 29--37, 2011.

\bibitem{lecun1998gradient}
Yann LeCun, L{\'e}on Bottou, Yoshua Bengio, and Patrick Haffner.
\newblock Gradient-based learning applied to document recognition.
\newblock {\em Proceedings of the IEEE}, 86(11):2278--2324, 1998.

\bibitem{lecun2006tutorial}
Yann LeCun, Sumit Chopra, Raia Hadsell, M~Ranzato, and F~Huang.
\newblock A tutorial on energy-based learning.
\newblock {\em Predicting structured data}, 1(0), 2006.

\bibitem{li2016renyi}
Yingzhen Li and Richard~E Turner.
\newblock R{\'e}nyi divergence variational inference.
\newblock In {\em Advances in Neural Information Processing Systems}, pages
  1073--1081, 2016.

\bibitem{liu2015faceattributes}
Ziwei Liu, Ping Luo, Xiaogang Wang, and Xiaoou Tang.
\newblock Deep learning face attributes in the wild.
\newblock In {\em Proceedings of International Conference on Computer Vision
  (ICCV)}, December 2015.

\bibitem{martens2012estimating}
James Martens, Ilya Sutskever, and Kevin Swersky.
\newblock Estimating the hessian by back-propagating curvature.
\newblock {\em arXiv preprint arXiv:1206.6464}, 2012.

\bibitem{mescheder2018training}
Lars Mescheder, Andreas Geiger, and Sebastian Nowozin.
\newblock Which training methods for gans do actually converge?
\newblock {\em arXiv preprint arXiv:1801.04406}, 2018.

\bibitem{miyato2018spectral}
Takeru Miyato, Toshiki Kataoka, Masanori Koyama, and Yuichi Yoshida.
\newblock Spectral normalization for generative adversarial networks.
\newblock {\em arXiv preprint arXiv:1802.05957}, 2018.

\bibitem{naeem2020reliable}
Muhammad~Ferjad Naeem, Seong~Joon Oh, Youngjung Uh, Yunjey Choi, and Jaejun
  Yoo.
\newblock Reliable fidelity and diversity metrics for generative models.
\newblock In {\em International Conference on Machine Learning}, pages
  7176--7185. PMLR, 2020.

\bibitem{neal2011mcmc}
Radford~M Neal et~al.
\newblock Mcmc using hamiltonian dynamics.
\newblock {\em Handbook of markov chain monte carlo}, 2(11):2, 2011.

\bibitem{nemirovski2009robust}
Arkadi Nemirovski, Anatoli Juditsky, Guanghui Lan, and Alexander Shapiro.
\newblock Robust stochastic approximation approach to stochastic programming.
\newblock {\em SIAM Journal on optimization}, 19(4):1574--1609, 2009.

\bibitem{nesterov2013introductory}
Yurii Nesterov.
\newblock {\em Introductory lectures on convex optimization: A basic course},
  volume~87.
\newblock Springer Science \& Business Media, 2013.

\bibitem{netzer2011reading}
Yuval Netzer, Tao Wang, Adam Coates, Alessandro Bissacco, Bo~Wu, and Andrew~Y
  Ng.
\newblock Reading digits in natural images with unsupervised feature learning.
\newblock 2011.

\bibitem{nijkamp2019learning}
Erik Nijkamp, Mitch Hill, Song-Chun Zhu, and Ying~Nian Wu.
\newblock Learning non-convergent non-persistent short-run mcmc toward
  energy-based model.
\newblock In {\em Advances in Neural Information Processing Systems}, pages
  5233--5243, 2019.

\bibitem{nowozin2016f}
Sebastian Nowozin, Botond Cseke, and Ryota Tomioka.
\newblock f-gan: Training generative neural samplers using variational
  divergence minimization.
\newblock In {\em Advances in neural information processing systems}, pages
  271--279, 2016.

\bibitem{mcbook}
Art~B. Owen.
\newblock {\em Monte Carlo theory, methods and examples}.
\newblock 2013.

\bibitem{parry2012proper}
Matthew Parry, A~Philip Dawid, Steffen Lauritzen, et~al.
\newblock Proper local scoring rules.
\newblock {\em The Annals of Statistics}, 40(1):561--592, 2012.

\bibitem{parry2016linear}
Matthew Parry et~al.
\newblock Linear scoring rules for probabilistic binary classification.
\newblock {\em Electronic Journal of Statistics}, 10(1):1596--1607, 2016.

\bibitem{paszke2019pytorch}
Adam Paszke, Sam Gross, Francisco Massa, Adam Lerer, James Bradbury, Gregory
  Chanan, Trevor Killeen, Zeming Lin, Natalia Gimelshein, Luca Antiga, et~al.
\newblock Pytorch: An imperative style, high-performance deep learning library.
\newblock In {\em Advances in Neural Information Processing Systems}, pages
  8024--8035, 2019.

\bibitem{poon2011sum}
Hoifung Poon and Pedro Domingos.
\newblock Sum-product networks: A new deep architecture.
\newblock In {\em 2011 IEEE International Conference on Computer Vision
  Workshops (ICCV Workshops)}, pages 689--690. IEEE, 2011.

\bibitem{qiuunbiased}
Yixuan Qiu, Lingsong Zhang, and Xiao Wang.
\newblock Unbiased contrastive divergence algorithm for training energy-based
  latent variable models.

\bibitem{reddi2016stochastic}
Sashank~J Reddi, Ahmed Hefny, Suvrit Sra, Barnab{\'a}s P{\'o}czos, and Alex
  Smola.
\newblock Stochastic variance reduction for nonconvex optimization.
\newblock In {\em International conference on machine learning}, pages
  314--323, 2016.

\bibitem{robert2013monte}
Christian Robert and George Casella.
\newblock {\em Monte Carlo statistical methods}.
\newblock Springer Science \& Business Media, 2013.

\bibitem{roby1965belief}
Thornton~B Roby.
\newblock Belief states and the uses of evidence.
\newblock {\em Behavioral Science}, 10(3):255--270, 1965.

\bibitem{salimans2016improved}
Tim Salimans, Ian Goodfellow, Wojciech Zaremba, Vicki Cheung, Alec Radford, and
  Xi~Chen.
\newblock Improved techniques for training gans.
\newblock In {\em Advances in neural information processing systems}, pages
  2234--2242, 2016.

\bibitem{shalev2014understanding}
Shai Shalev-Shwartz and Shai Ben-David.
\newblock {\em Understanding machine learning: From theory to algorithms}.
\newblock Cambridge university press, 2014.

\bibitem{song2019bridging}
Jiaming Song and Stefano Ermon.
\newblock Bridging the gap between $ f $-gans and wasserstein gans.
\newblock {\em arXiv preprint arXiv:1910.09779}, 2019.

\bibitem{song2019generative}
Yang Song and Stefano Ermon.
\newblock Generative modeling by estimating gradients of the data distribution.
\newblock In {\em Advances in Neural Information Processing Systems}, pages
  11895--11907, 2019.

\bibitem{song2020generative}
Yang Song and Stefano Ermon.
\newblock Improved techniques for training score-based generative models.
\newblock In {\em Advances in Neural Information Processing Systems}, 2020.

\bibitem{song2019sliced}
Yang Song, Sahaj Garg, Jiaxin Shi, and Stefano Ermon.
\newblock Sliced score matching: {A} scalable approach to density and score
  estimation.
\newblock In {\em Proceedings of the Thirty-Fifth Conference on Uncertainty in
  Artificial Intelligence, {UAI} 2019, Tel Aviv, Israel, July 22-25, 2019},
  page 204, 2019.

\bibitem{song2020sliced}
Yang Song, Sahaj Garg, Jiaxin Shi, and Stefano Ermon.
\newblock Sliced score matching: A scalable approach to density and score
  estimation.
\newblock In {\em Uncertainty in Artificial Intelligence}, pages 574--584.
  PMLR, 2020.

\bibitem{takenouchi2015empirical}
Takashi Takenouchi and Takafumi Kanamori.
\newblock Empirical localization of homogeneous divergences on discrete sample
  spaces.
\newblock In {\em Advances in Neural Information Processing Systems}, pages
  820--828, 2015.

\bibitem{uehara2020unified}
Masatoshi Uehara, Takafumi Kanamori, Takashi Takenouchi, and Takeru Matsuda.
\newblock A unified statistically efficient estimation framework for
  unnormalized models.
\newblock In {\em International Conference on Artificial Intelligence and
  Statistics}, pages 809--819, 2020.

\bibitem{van2014renyi}
Tim Van~Erven and Peter Harremos.
\newblock R{\'e}nyi divergence and kullback-leibler divergence.
\newblock {\em IEEE Transactions on Information Theory}, 60(7):3797--3820,
  2014.

\bibitem{vincent2011connection}
Pascal Vincent.
\newblock A connection between score matching and denoising autoencoders.
\newblock {\em Neural computation}, 23(7):1661--1674, 2011.

\bibitem{welling2011bayesian}
Max Welling and Yee~W Teh.
\newblock Bayesian learning via stochastic gradient langevin dynamics.
\newblock In {\em Proceedings of the 28th international conference on machine
  learning (ICML-11)}, pages 681--688, 2011.

\bibitem{yu2020training}
Lantao Yu, Yang Song, Jiaming Song, and Stefano Ermon.
\newblock Training deep energy-based models with f-divergence minimization.
\newblock In {\em International Conference on Machine Learning}, pages
  10957--10967. PMLR, 2020.

\bibitem{zhang2020deep}
Jun Zhang, Yaokun Lei, Yi~Isaac Yang, and Yi~Qin Gao.
\newblock Deep learning for multi-scale molecular modeling.
\newblock 2020.

\bibitem{zhao2018bias}
Shengjia Zhao, Hongyu Ren, Arianna Yuan, Jiaming Song, Noah Goodman, and
  Stefano Ermon.
\newblock Bias and generalization in deep generative models: An empirical
  study.
\newblock In {\em Advances in Neural Information Processing Systems}, pages
  10792--10801, 2018.

\end{thebibliography}

\newpage
\appendix
\section{Implementation of PS-CD Algorithm}\label{sec:app-alg}
In the following, we provide a simple PyTorch \cite{paszke2019pytorch} implementation of Step $6$ in Algorithm~\ref{alg:ps-cd}, where we directly differentiate through \texttt{log(mean(exp($\cdot$)))} for the first term and perform stop gradient operation on the self-normalized importance weight ($\exp(- \gamma E_\vtheta(\vx_i^-))$) for the second term. Mathematically, these two implementations are equivalent and we use different implementations here to illustrate the difference between their derivations. We further apply $L_2$ regularization on the outputs of the energy function to stabilize training.
\begin{Verbatim}[frame=single]
def logmeanexp(inputs):
    // Stable version log(mean(exp(inputs)))
    return inputs.max() + (inputs - inputs.max()).exp().mean().log()

def softmax(inputs):
    // Stable version softmax(inputs)
    exp_inputs = torch.exp(inputs - inputs.max())
    return exp_inputs / exp_inputs.sum()

def update_step(x_pos, x_neg, model_e, optim_e, l2_reg, gamma):
    // x_pos and x_neg are samples from p_data and q_theta.
    // model_e is the neural network for the energy function.
    // optim_e is the optimizer for model_e
    // e.g. optim_e = torch.optim.Adam(model_e.parameters())
    e_pos, e_neg = model_e(x_pos), model_e(x_neg)
    importance_weight = softmax(- gamma * e_neg)
    loss = - 1 / gamma * logmeanexp(- gamma * e_pos) - \
           torch.sum(e_neg * importance_weight.detach())
    loss += l2_reg * ((e_pos ** 2).mean() + (e_neg ** 2).mean())
    optim_e.zero_grad()
    loss_e.backward()
    optim_e.step()
\end{Verbatim}
\vfill

\newpage
\section{Proofs for Section~\ref{sec:method}}\label{app:proof-method}

\addtocounter{theorem}{-4}
\addtocounter{assumption}{-2}
\addtocounter{lemma}{-2}
\addtocounter{corollary}{-1}

\subsection{Proof for Theorem~\ref{the:ps-cd-gradient}}\label{app:proof-ps-cd-gradient}
In this section,
we provide two different ways to prove Theorem~\ref{the:ps-cd-gradient}. The first one is more straightforward and directly differentiates through the term $\log (\|\tq_\vtheta\|_{\gamma+1})$. The second one leverages a variational representation of $\log (\|\tq_\vtheta\|_{\gamma+1})$, where the optimal variational distribution happens to take an analytical form of $r^*_\vtheta(\vx) \propto \tq_\vtheta(\vx)^{\gamma + 1}$, thus avoiding the minimax optimization in other variational frameworks for KL and $f$-divergences \cite{yu2020training,dai2018kernel,dai2019exponential} and revealing the elegance of PS-CD.

\begin{theorem}
For an energy-based distribution $q_\vtheta \propto \tq_\vtheta = \exp(-E_\vtheta)$, the gradient of the negative $\gamma$-score $L_\gamma(\vtheta; p) = - S_\gamma(p, q_\vtheta)$ with respect to $\vtheta$ can be written as:
\begin{align}
    \nabla_\vtheta \gL_\gamma(\vtheta; p) = -\frac{1}{\gamma} \nabla_\vtheta \log \left(\mathbb{E}_{p(\vx)} [\exp(- \gamma E_\vtheta(\vx))]\right)
    - \mathbb{E}_{r_{\vtheta}(\vx)}[\nabla_\vtheta E_\vtheta(\vx)] \label{eq:app-ps-cd-gradient}
\end{align}
where the auxiliary distribution $r_\vtheta$ is also an energy-based distribution defined as: $$r_\vtheta(\vx) \defeq \frac{\tq_\vtheta(\vx)^{\gamma + 1}}{\int_\gX \tq_\vtheta(\vx)^{\gamma + 1} \mathrm{d} \vx} =  \frac{\exp(-(\gamma + 1) E_\vtheta(\vx))}{\int_\gX \exp(-(\gamma + 1) E_\vtheta(\vx)) \mathrm{d} \vx}.$$
\end{theorem}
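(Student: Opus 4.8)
The plan is to differentiate the two summands of $\gL_\gamma(\vtheta;p) = -\tfrac{1}{\gamma}\log\bigl(\E_{p(\vx)}[\tq_\vtheta(\vx)^\gamma]\bigr) + \log\|\tq_\vtheta\|_{\gamma+1}$ separately, using the parametrization $\tq_\vtheta = \exp(-E_\vtheta)$, so that $\tq_\vtheta(\vx)^\gamma = \exp(-\gamma E_\vtheta(\vx))$ and $\tq_\vtheta(\vx)^{\gamma+1} = \exp(-(\gamma+1)E_\vtheta(\vx))$. The first summand requires nothing beyond this substitution: it contributes exactly $-\tfrac{1}{\gamma}\nabla_\vtheta\log\bigl(\E_{p(\vx)}[\exp(-\gamma E_\vtheta(\vx))]\bigr)$, the first term in the claimed expression. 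All the work is in the second summand.

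For that term, write $\log\|\tq_\vtheta\|_{\gamma+1} = \tfrac{1}{\gamma+1}\log \tilde Z_\vtheta$ with $\tilde Z_\vtheta \defeq \int_\gX \exp(-(\gamma+1)E_\vtheta(\vx))\,\diff\vx$. Differentiating under the integral sign and using $\nabla_\vtheta\exp(-(\gamma+1)E_\vtheta(\vx)) = -(\gamma+1)\exp(-(\gamma+1)E_\vtheta(\vx))\nabla_\vtheta E_\vtheta(\vx)$ gives $\nabla_\vtheta \tilde Z_\vtheta = -(\gamma+1)\int_\gX \exp(-(\gamma+1)E_\vtheta(\vx))\nabla_\vtheta E_\vtheta(\vx)\,\diff\vx$, hence
\begin{align*}
\nabla_\vtheta\log\|\tq_\vtheta\|_{\gamma+1} &= \frac{1}{\gamma+1}\cdot\frac{\nabla_\vtheta \tilde Z_\vtheta}{\tilde Z_\vtheta} \\
&= -\frac{\int_\gX \exp(-(\gamma+1)E_\vtheta(\vx))\,\nabla_\vtheta E_\vtheta(\vx)\,\diff\vx}{\int_\gX \exp(-(\gamma+1)E_\vtheta(\vx))\,\diff\vx} = -\E_{r_\vtheta(\vx)}[\nabla_\vtheta E_\vtheta(\vx)],
\end{align*}
where the last step simply recognizes the ratio as an expectation under $r_\vtheta(\vx) = \exp(-(\gamma+1)E_\vtheta(\vx))/\tilde Z_\vtheta$. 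Adding the two pieces yields \eqref{eq:app-ps-cd-gradient}.

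I would also present the alternative route, since it explains why $r_\vtheta$ is forced to be this particular tilted distribution and situates PS-CD relative to the variational frameworks for KL and $f$-divergences. On a compact $\gX$ with Lebesgue reference, the Gibbs (Donsker--Varadhan) variational principle gives $\log\|\tq_\vtheta\|_{\gamma+1} = \sup_{r\in\gP}\bigl\{-\E_{r}[E_\vtheta] + \tfrac{1}{\gamma+1}\gH(r)\bigr\}$, where $\gH$ is differential entropy; a direct completion-of-the-KL computation shows the maximizer is $r^*_\vtheta(\vx) \propto \tq_\vtheta(\vx)^{\gamma+1}$, i.e. precisely $r_\vtheta$. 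The envelope (Danskin) theorem then gives $\nabla_\vtheta\log\|\tq_\vtheta\|_{\gamma+1} = -\E_{r^*_\vtheta}[\nabla_\vtheta E_\vtheta]$ without ever differentiating through the normalizer.

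The only delicate point in either version is justifying the interchange of $\nabla_\vtheta$ with $\int_\gX$ (equivalently, the differentiability hypotheses of the envelope theorem): I would invoke compactness of $\gX$ together with continuity and boundedness of $E_\vtheta$ and $\nabla_\vtheta E_\vtheta$ in $(\vx,\vtheta)$ — the same mild regularity used elsewhere in the paper (cf.\ Assumption~\ref{assump:energy}) — so that a dominated-convergence/Leibniz argument applies and $\tilde Z_\vtheta > 0$ stays bounded away from $0$ locally in $\vtheta$. Everything else is a routine computation, so I expect no real obstacle beyond this bookkeeping.
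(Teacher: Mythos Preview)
Your proposal is correct and follows essentially the same two routes as the paper: the direct-differentiation argument is line-for-line the paper's first proof, and your Gibbs/Donsker--Varadhan variational formulation with the envelope (Danskin) theorem is exactly the paper's second proof (the paper derives the same variational identity via Jensen's inequality and then verifies explicitly that the partial derivative in the variational argument vanishes at the optimum, which is the envelope theorem spelled out by hand). Your added remarks on the regularity needed to interchange $\nabla_\vtheta$ and $\int_\gX$ are a welcome bit of rigor that the paper leaves implicit.
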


\begin{proof}
\textbf{First proof: direct differentiation.} From Equation~(\ref{eq:loss-gamma}), we have:
\begin{align*}
    \nabla_\vtheta \gL_\gamma(\vtheta; p) &= \nabla_\vtheta \left(-\frac{1}{\gamma}  \log \left(\mathbb{E}_{p(\vx)} [\tq_\vtheta(\vx)^\gamma]\right) \nonumber
    +  \log(\|\tq_\vtheta\|_{\gamma + 1})\right) \\
    &= -\frac{1}{\gamma} \nabla_\vtheta \log \left(\mathbb{E}_{p(\vx)} [\exp(- \gamma E_\vtheta(\vx))]\right) + \frac{1}{\gamma + 1} \nabla_\vtheta \log \left(\int_\gX \exp(- (\gamma + 1) E_\vtheta(\vx)) \mathrm{d} \vx\right) \\
    &= -\frac{1}{\gamma} \nabla_\vtheta \log \left(\mathbb{E}_{p(\vx)} [\exp(- \gamma E_\vtheta(\vx))]\right) + \frac{1}{\gamma + 1} \frac{\int_\gX \exp(- (\gamma + 1) E_\vtheta(\vx)) \cdot (-(\gamma + 1) \nabla_\vtheta E_\vtheta(\vx))  \mathrm{d} \vx}{\int_\gX \exp(- (\gamma + 1) E_\vtheta(\vx)) \mathrm{d} \vx} \\
    &= -\frac{1}{\gamma} \nabla_\vtheta \log \left(\mathbb{E}_{p(\vx)} [\exp(- \gamma E_\vtheta(\vx))]\right) - \int_\gX \frac{\exp(- (\gamma + 1) E_\vtheta(\vx))}{\int_\gX \exp(- (\gamma + 1) E_\vtheta(\vy)) \mathrm{d} \vy} \nabla_\vtheta E_\vtheta(\vx) \mathrm{d} \vx \\
    &= -\frac{1}{\gamma} \nabla_\vtheta \log \left(\mathbb{E}_{p(\vx)} [\exp(- \gamma E_\vtheta(\vx))]\right)
    - \mathbb{E}_{r_{\vtheta}(\vx)}[\nabla_\vtheta E_\vtheta(\vx)]
\end{align*}

\textbf{Second proof: a variational representation with optimal variational distribution taking analytical form.} The main challenge is that the $\log \|\tq_\vtheta\|_{\gamma + 1}$ term in $\gL_\gamma(\vtheta; p)$ is generally intractable to compute. To solve this issue, we introduce the following variational representation:
\begin{lemma}\label{lemma:fenchel-dual}
Let $\Delta_\gX$ denote the set of all normalized probability density functions on sample space $\gX$.
With Fenchel duality, we have:
\begin{align}
    (\gamma + 1) \log(\|q\|_{\gamma+1}) 
    = \log \left( \int_\gX \tq(\vx)^{\gamma + 1}\mathrm{d}\vx \right)
    = \max_{r \in \Delta_\gX} \int_\gX r(\vx) \log\left(\tq(\vx)^{\gamma + 1}\right) \mathrm{d} \vx - \int_\gX r(\vx) \log r(\vx) \mathrm{d} \vx \label{eq:app-max_r_psi}
\end{align}
where the maximum is attained at $r^*(\vx) = \frac{\tq(\vx)^{\gamma + 1}}{\int_\gX \tq(\vx)^{\gamma + 1} \mathrm{d} \vx}$.
\end{lemma}
\begin{proof}
With Jensen's inequality, we have:
\begin{align*}
    \log \left( \int_\gX \tq(\vx)^{\gamma + 1}\mathrm{d}\vx \right) = \log \left(\int_\gX r(\vx) \frac{\tq(\vx)^{\gamma + 1}}{r(\vx)}\mathrm{d}\vx \right)
    \geq \int_\gX r(\vx) \log \left( \frac{\tq(\vx)^{\gamma + 1}}{r(\vx)} \right) \mathrm{d} \vx
\end{align*}

The equality holds if and only if:
\begin{align*}
    r^*(\vx) \propto \tq(\vx)^{\gamma + 1}
\end{align*}
As $r^* \in \Delta_\gX$ is a normalized distribution, we have:
\begin{align*}
    r^*(\vx) = \frac{\tq(\vx)^{\gamma + 1}}{\int_\gX \tq(\vx)^{\gamma + 1} \mathrm{d} \vx}
\end{align*}
\end{proof}

Now, suppose we parametrize the energy-based model distribution as $\tq_\vtheta = \exp(-E_\vtheta)$ and the variational distribution as $r_\vpsi$.
By plugging the variational representation in 
Lemma~\ref{lemma:fenchel-dual}
into $\gamma$-score (Equation~(\ref{eq:gamma-score})), we obtain the following minimax formulation to minimize the negative $\gamma$-score:
\begin{align*}
    \vpsi^*(\vtheta) = \argmax_\vpsi \gL_\gamma(\vtheta, \vpsi;p)
    \quad \vtheta^* = \argmin_\vtheta \max_\vpsi \gL_\gamma(\vtheta, \vpsi;p)
    = \argmin_\vtheta \gL_\gamma(\vtheta, \vpsi^*(\vtheta);p)
\end{align*}
where the game value function $L_\gamma(\vtheta, \vpsi;p)$ is defined as (\emph{s.t.} $L_\gamma(\vtheta, \vpsi^*(\vtheta);p) = - S_\gamma(p, q_\vtheta)$):
\begin{align*}
    L_\gamma(\vtheta, \vpsi;p ) = \underbrace{-\frac{1}{\gamma}  \log \left(\mathbb{E}_{p(\vx)} [\tq_\vtheta(\vx)^\gamma]\right)}_{L_1(\vtheta)}
    + \underbrace{\frac{1}{\gamma + 1} \left(\bb{E}_{r_\vpsi(\vx)} [\log\left(\tq_\vtheta(\vx)^{\gamma + 1}\right)] - \bb{E}_{r_\vpsi(\vx)} [\log r_\vpsi(x)]\right)}_{L_2(\vtheta, \vpsi)}
\end{align*}
By Lemma~\ref{lemma:fenchel-dual}, we know that $r_{\vpsi^*(\vtheta)} \propto \tq_\vtheta^{\gamma + 1}$.

The first term in Equation~(\ref{eq:app-ps-cd-gradient}) is simply $\nabla_\vtheta L_1(\vtheta)$. For the second term, since $L_2(\vtheta, \vpsi)$ is a function of both $\vtheta$ and $\vpsi$, and the optimal variational parameter $\vpsi^*(\vtheta)$ depends on $\vtheta$, the total derivative of $L_2(\vtheta, \vpsi^*(\vtheta))$ with respect to $\vtheta$ is:
\begin{align*}
    \frac{\mathrm{d} L_2(\vtheta, \vpsi^*(\vtheta))}{\mathrm{d} \vtheta}
    = \frac{\partial L_2(\vtheta, \vpsi^*(\vtheta))}{\partial \vtheta} + \frac{\partial L_2(\vtheta, \vpsi^*(\vtheta))}{\partial \vpsi^*(\vtheta)} \frac{\mathrm{d} \vpsi^*(\vtheta)}{\mathrm{d} \vtheta}
\end{align*}
Because $\vpsi^*(\vtheta)$ is the optimum of $L_2(\vtheta, \vpsi)$ (Lemma~\ref{lemma:fenchel-dual}), the second term in above equation is zero:
\begin{align*}
    \frac{\partial L_2(\vtheta, \vpsi^*(\vtheta))}{\partial \vpsi^*(\vtheta)} &= \frac{1}{\gamma + 1} \left(\int_\gX \nabla_\vpsi r_\vpsi(\vx) \log\left(\tq_\vtheta(\vx)^{\gamma + 1}\right) - \nabla_\vpsi r_\vpsi(\vx) \log r_\vpsi(\vx) - \frac{r_\vpsi(\vx)}{r_\vpsi(\vx)} \nabla_\vpsi r_\vpsi(\vx)  \mathrm{d} \vx
    \right)\at[\Bigg]{\vpsi = \vpsi^*(\vtheta)}\\
    &= \frac{1}{\gamma + 1}\left(\int_\gX (\log Z_{\vpsi^*(\vtheta)} - 1) \nabla_\vpsi r_\vpsi(\vx) \mathrm{d} \vx \right)\at[\Bigg]{\vpsi = \vpsi^*(\vtheta)}\\
    &= \frac{1}{\gamma + 1}\left((\log Z_{\vpsi^*(\vtheta)} - 1) \nabla_\vpsi \int_\gX  r_\vpsi(\vx) \mathrm{d} \vx \right) = 0
\end{align*}
where $Z_{\vpsi^*(\vtheta)}$ is the partition function of $r_{\vpsi^*(\vtheta)}$.

Thus we have:
\begin{align*}
    \frac{\mathrm{d} L_2(\vtheta, \vpsi^*(\vtheta))}{\mathrm{d} \vtheta}
    = \frac{\partial L_2(\vtheta, \vpsi^*(\vtheta))}{\partial \vtheta} =  - \mathbb{E}_{r_{\vpsi^*(\vtheta)}(\vx)}[\nabla_\vtheta E_\vtheta(\vx)]
\end{align*}
\end{proof}

\subsection{Proof for Theorem~\ref{the:importance-sampling}}\label{app:proof-importance-sampling}
\begin{theorem}[Consistent Gradient Estimation]
Let $\vx_1^+, \ldots, \vx_N^+$ be i.i.d. samples from $p(\vx)$ and $\vx_1^-, \ldots, \vx_N^-$ be i.i.d. samples from $q_\vtheta(\vx) \propto \exp(-E_\vtheta(\vx))$. Define the gradient estimator as:
\begin{align}
    \widehat{\nabla_\vtheta \gL_\gamma^N(\vtheta;p)} = 
    - \nabla_\vtheta \frac{1}{\gamma} \log \left( \frac{1}{N} \sum_{i=1}^N \exp(- \gamma E_\vtheta(\vx_i^+))\right) - 
    \frac{\sum_{i=1}^N \omega_\vtheta(\vx_i^-) \nabla_\vtheta E_\vtheta(\vx_i^-)}{\sum_{i=1}^N \omega_\vtheta(\vx_i^-)}
\end{align}
where the self-normalized importance weight $\omega_\vtheta(\vx_i^-) \defeq \overline{r}_{\vtheta}(\vx_i^-) / \overline{q}_\vtheta (\vx_i^-) = \exp(- \gamma E_\vtheta(\vx_i^-))$.
Then the gradient estimator converges to the true gradient in probability:
\begin{align*}
    \forall \epsilon > 0, \lim_{N \to \infty} \bb{P}\left(\left\|\widehat{\nabla_\vtheta \gL_\gamma^N(\vtheta;p)} - \nabla_\vtheta \gL_\gamma(\vtheta;p)\right\| \geq \epsilon \right) =0
\end{align*}
\end{theorem}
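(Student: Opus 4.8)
The plan is to decompose the estimator into its two additive pieces and establish convergence in probability for each separately, then combine via the union bound. The first piece, $-\frac{1}{\gamma}\nabla_\vtheta \log\left(\frac{1}{N}\sum_i \exp(-\gamma E_\vtheta(\vx_i^+))\right)$, is a smooth function of a single empirical average $\widehat{A}_N \defeq \frac{1}{N}\sum_i \exp(-\gamma E_\vtheta(\vx_i^+))$; by the weak law of large numbers $\widehat{A}_N \to \E_{p}[\exp(-\gamma E_\vtheta(\vx))] \eqdef A$ in probability, and similarly the gradient-weighted average $\widehat{B}_N \defeq \frac{1}{N}\sum_i \exp(-\gamma E_\vtheta(\vx_i^+))\nabla_\vtheta E_\vtheta(\vx_i^+)$ converges to $\E_p[\exp(-\gamma E_\vtheta(\vx))\nabla_\vtheta E_\vtheta(\vx)]$. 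After expanding $\nabla_\vtheta \log \widehat{A}_N = \widehat{B}_N / \widehat{A}_N$ (here I may as well compute the exact gradient), the continuous mapping theorem applied to the map $(a,b)\mapsto b/a$, which is continuous wherever $a>0$, gives convergence in probability of the first piece to $-\frac{1}{\gamma}\cdot(-\gamma)\E_p[\cdots]/A = \frac{1}{\gamma}\nabla_\vtheta\log(\E_p[\exp(-\gamma E_\vtheta)])$, matching the first term of Equation~(\ref{eq:ps-cd-gradient}). One needs $A>0$, which holds since $\exp(-\gamma E_\vtheta)$ is strictly positive.

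For the second piece I would invoke the standard consistency of self-normalized importance sampling. Write $\frac{\sum_i \omega_\vtheta(\vx_i^-)\nabla_\vtheta E_\vtheta(\vx_i^-)}{\sum_i \omega_\vtheta(\vx_i^-)} = \frac{\frac{1}{N}\sum_i \omega_\vtheta(\vx_i^-)\nabla_\vtheta E_\vtheta(\vx_i^-)}{\frac{1}{N}\sum_i \omega_\vtheta(\vx_i^-)}$. By the WLLN applied to i.i.d.\ samples from $q_\vtheta$, the numerator converges in probability to $\E_{q_\vtheta}[\omega_\vtheta(\vx)\nabla_\vtheta E_\vtheta(\vx)]$ and the denominator to $\E_{q_\vtheta}[\omega_\vtheta(\vx)]$. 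Since $\omega_\vtheta(\vx) = \overline{r}_\vtheta(\vx)/\overline{q}_\vtheta(\vx)$, we have $\E_{q_\vtheta}[\omega_\vtheta(\vx)] = \int_\gX q_\vtheta(\vx)\frac{\overline{r}_\vtheta(\vx)}{\overline{q}_\vtheta(\vx)}\diff\vx = \frac{1}{Z_\vtheta}\int_\gX \overline{r}_\vtheta(\vx)\diff\vx = Z_{r_\vtheta}/Z_\vtheta$, and likewise the numerator equals $\frac{1}{Z_\vtheta}\int_\gX \overline{r}_\vtheta(\vx)\nabla_\vtheta E_\vtheta(\vx)\diff\vx$; the ratio is exactly $\E_{r_\vtheta}[\nabla_\vtheta E_\vtheta(\vx)]$. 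Then the continuous mapping theorem (ratio map, continuous since the limiting denominator $Z_{r_\vtheta}/Z_\vtheta > 0$) yields convergence in probability of this piece to $\E_{r_\vtheta}[\nabla_\vtheta E_\vtheta(\vx)]$, the second term of Equation~(\ref{eq:ps-cd-gradient}).

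Finally I would stitch the two together: if $X_N \to x$ and $Y_N \to y$ in probability then $X_N + Y_N \to x+y$ in probability (an $\epsilon/2$ union-bound argument), so the full estimator converges to $\nabla_\vtheta\gL_\gamma(\vtheta;p)$ as given in Theorem~\ref{the:ps-cd-gradient}. I anticipate the only delicate point is justifying the continuous mapping / Slutsky steps cleanly — specifically ensuring the denominators stay bounded away from zero in the limit and that the vector-valued averages (the $\nabla_\vtheta E_\vtheta$ terms) have finite mean so the WLLN applies; both follow immediately from Assumption~\ref{assump:energy} ($|E_\vtheta|\le K$ and $\|\nabla_\vtheta E_\vtheta\|\le L$), which makes all integrands bounded, hence integrable, and keeps the weights in $[e^{-\gamma K}, e^{\gamma K}]$ (or the reciprocal interval for $\gamma<0$). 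Everything else is routine; no concentration inequality is needed for this qualitative statement, only the weak law and continuity.
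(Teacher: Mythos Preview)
Your proposal is correct and follows essentially the same route as the paper: decompose into the two additive pieces, apply the weak law of large numbers to the numerator and denominator of each ratio separately, then use Slutsky/continuous mapping to pass to the limit of the ratio, and finally combine via preservation of convergence in probability under addition. The only cosmetic differences are that the paper phrases the ratio step as ``Slutsky's theorem'' rather than ``continuous mapping,'' and for the second piece it normalizes so the denominator tends to $1$ (multiplying top and bottom by $Z_q/Z_r$) rather than to $Z_{r_\vtheta}/Z_\vtheta$ as you do---both are equivalent, and your explicit appeal to Assumption~\ref{assump:energy} to guarantee integrability is, if anything, more careful than the paper's presentation.
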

\begin{proof}
First, let us write $\widehat{\nabla_\vtheta \gL_\gamma^N(\vtheta;p)}$ and $\nabla_\vtheta \gL_\gamma(\vtheta;p)$ as:
\begin{align}
     &\nabla_\vtheta \gL_\gamma(\vtheta;p) = \frac{\mathbb{E}_{p(\vx)} [\exp(- \gamma E_\vtheta(\vx)) \nabla_\vtheta E_\vtheta(\vx)]}{\mathbb{E}_{p(\vx)} [\exp(- \gamma E_\vtheta(\vx))]} - \bb{E}_{r_{\vtheta}(\vx)}[\nabla_\vtheta E_\vtheta(\vx)] \label{eq:app-true-gradient}\\
    &\widehat{\nabla_\vtheta \gL_\gamma^N(\vtheta;p)} = \frac{\frac{1}{N} \sum_{i=1}^N \exp(- \gamma E_\vtheta(\vx_i^+)) \nabla_\vtheta E_\vtheta(\vx_i^+)}{\frac{1}{N} \sum_{i=1}^N \exp(- \gamma E_\vtheta(\vx_i^+))} - 
    \frac{\frac{1}{N}\sum_{i=1}^N \exp(- \gamma E_\vtheta(\vx_i^-)) \nabla_\vtheta E_\vtheta(\vx_i^-)}{\frac{1}{N} \sum_{i=1}^N \exp(- \gamma E_\vtheta(\vx_i^-))} \label{eq:app-estimate-gradient}
\end{align}
For the first term in Equation~(\ref{eq:app-estimate-gradient}), since $\{\vx_i^+\}_{i=1}^N$ are \emph{i.i.d.} samples from $p(\vx)$, by weak law of large numbers, the numerator and denominator of the first term in Equation~(\ref{eq:app-estimate-gradient}) converges to the numerator and denominator of the first term in Equation~(\ref{eq:app-true-gradient}) in probability. By Slutsky's theorem (\emph{i.e.}, for random variables $X_N,X,Y_N,Y$, if $X_N \overset{p}{\to} X, Y_N \overset{p}{\to} Y$ and $X$, $Y$ are constants, then $X_N/Y_N \overset{p}{\to} X/Y$), the first term of Equation~(\ref{eq:app-estimate-gradient}) converges to the first term of Equation~(\ref{eq:app-true-gradient}) in probability.

Let us use $Z_r$ and $Z_q$ to denote the partition function for $r_{\vtheta}$ and $q_\vtheta$. The second term of Equation~(\ref{eq:app-estimate-gradient}) can be written as:
\begin{align}
    \frac{\frac{1}{N}\sum_{i=1}^N \exp(- \gamma E_\vtheta(\vx_i^-)) \nabla_\vtheta E_\vtheta(\vx_i^-)}{\frac{1}{N} \sum_{i=1}^N \exp(- \gamma E_\vtheta(\vx_i^-))} = 
    \frac{\frac{1}{N}\sum_{i=1}^N \frac{\exp(-(\gamma+1)E_\vtheta(\vx_i^-) / Z_r}{\exp(-E_\vtheta(\vx_i^-))/Z_q} \nabla_\vtheta E_\vtheta(\vx_i^-)}{\frac{1}{N}\sum_{i=1}^N \frac{\exp(-(\gamma+1)E_\vtheta(\vx_i^-) / Z_r}{\exp(-E_\vtheta(\vx_i^-))/Z_q}}\label{eq:app-expand-importance}
\end{align}
Since $\{\vx_i^-\}_{i=1}^N$ are \emph{i.i.d.} samples from $q_\vtheta(\vx)$, the numerator of Equation~(\ref{eq:app-expand-importance}) converges to $\bb{E}_{r_\vtheta(\vx)}[\nabla_\vtheta E_\vtheta(\vx)]$ in probability, while the denominator of Equation~(\ref{eq:app-expand-importance}) converges to $1$ in probability ($\bb{E}_{q_\vtheta(\vx)}[r_\vtheta(\vx)/q_\vtheta(\vx)] = 1$). By Slutsky's theorem, the second term of Equation~(\ref{eq:app-estimate-gradient}) converges to the second term of Equation~(\ref{eq:app-true-gradient}) in probability. Furthermore, since convergence in probability is also preserved under addition transformation, the gradient estimator in Equation~(\ref{eq:app-estimate-gradient}) converges to the true gradient in Equation~(\ref{eq:app-true-gradient}) in probability.
\end{proof}

\newpage
\subsection{Connections to Maximum Likelihood Estimation and Extension to $\gamma<0$}
\begin{lemma}\label{lemma:app-connection-mle}
Let $D_\gamma(p,q)$ be the divergence corresponding to $\gamma$-scoring rule. Then, we have:
\begin{align*}
    \lim_{\gamma \to 0} D_\gamma(p,q) = \KL(p\|q)
\end{align*}
\end{lemma}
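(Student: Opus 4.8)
The plan is to start from the definition $D_\gamma(p,q) = S_\gamma(p,p) - S_\gamma(p,q)$ in Equation~(\ref{eq:entropy-divergence}), use the homogeneity of $S_\gamma$ (Theorem~\ref{the:pseudo-spherical}) to assume without loss of generality that $p$ and $q$ are normalized densities, and then expand each term to first order in $\gamma$. Writing everything in terms of $\mathbb{E}_p[p^\gamma] = \int_\gX p^{\gamma+1}\mathrm{d}\vx$, $\mathbb{E}_p[q^\gamma]$, and $\mathbb{E}_q[q^\gamma] = \int_\gX q^{\gamma+1}\mathrm{d}\vx$, a short algebraic rearrangement using $S_\gamma(p,p) = \tfrac{1}{\gamma(\gamma+1)}\log\mathbb{E}_p[p^\gamma]$ and $S_\gamma(p,q) = \tfrac{1}{\gamma}\log\mathbb{E}_p[q^\gamma] - \tfrac{1}{\gamma+1}\log\mathbb{E}_q[q^\gamma]$ gives
\begin{align*}
    D_\gamma(p,q) = \frac{1}{\gamma(\gamma+1)}\log\mathbb{E}_p[p^\gamma] - \frac{1}{\gamma}\log\mathbb{E}_p[q^\gamma] + \frac{1}{\gamma+1}\log\mathbb{E}_q[q^\gamma].
\end{align*}

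The key ingredient is the elementary limit that, for a probability measure $\mu$ and a density $g$ with $\mathbb{E}_\mu[|\log g|]<\infty$, one has $\lim_{\gamma\to 0}\tfrac{1}{\gamma}\log\mathbb{E}_\mu[g^\gamma] = \mathbb{E}_\mu[\log g]$. I would prove this by noting that $\gamma\mapsto\log\mathbb{E}_\mu[e^{\gamma\log g}]$ is the cumulant generating function of the random variable $\log g(X)$, $X\sim\mu$, which vanishes at $\gamma=0$ and has derivative $\mathbb{E}_\mu[\log g]$ there; equivalently, apply L'Hôpital to $\tfrac{\log\mathbb{E}_\mu[g^\gamma]}{\gamma}$ (both numerator and denominator vanish at $\gamma=0$), differentiating $\log\mathbb{E}_\mu[g^\gamma]$ under the integral sign to obtain $\tfrac{\mathbb{E}_\mu[g^\gamma\log g]}{\mathbb{E}_\mu[g^\gamma]}$, which equals $\mathbb{E}_\mu[\log g]$ at $\gamma=0$.

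Applying this with $(\mu,g)=(p,p)$ gives $\tfrac{1}{\gamma(\gamma+1)}\log\mathbb{E}_p[p^\gamma]\to\mathbb{E}_p[\log p]$ (using also $\tfrac{1}{\gamma+1}\to1$), and with $(\mu,g)=(p,q)$ gives $\tfrac{1}{\gamma}\log\mathbb{E}_p[q^\gamma]\to\mathbb{E}_p[\log q]$. For the third term, $\mathbb{E}_q[q^\gamma]=\int_\gX q^{\gamma+1}\mathrm{d}\vx\to\int_\gX q\,\mathrm{d}\vx=1$ by dominated convergence on the compact $\gX$, so $\tfrac{1}{\gamma+1}\log\mathbb{E}_q[q^\gamma]\to0$. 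Combining the three limits yields $D_\gamma(p,q)\to\mathbb{E}_p[\log p]-\mathbb{E}_p[\log q]=\KL(p\|q)$. The only point requiring care is the interchange of derivative/limit and integral in the elementary lemma, i.e.\ checking the integrability of $\log p$, $\log q$ under $p$ and a dominating function for $q^\gamma$ near $\gamma=0$; on a compact sample space with the boundedness already assumed in the paper this is immediate, so I expect no genuine obstacle — the statement is essentially a first-order Taylor expansion around $\gamma=0$.
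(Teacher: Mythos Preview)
Your proposal is correct and takes essentially the same approach as the paper: both write $D_\gamma(p,q)$ as the same three-term expression $\tfrac{1}{\gamma(\gamma+1)}\log\int p^{\gamma+1} - \tfrac{1}{\gamma}\log\int pq^\gamma + \tfrac{1}{\gamma+1}\log\int q^{\gamma+1}$ and evaluate each term's limit separately. The only cosmetic difference is that the paper substitutes the Taylor expansion $g^\gamma = 1 + \gamma\log g + O(\gamma^2)$ directly into each integral and then uses $\log(1+x)\approx x$, whereas you phrase the same first-order computation as L'H\^opital on the cumulant generating function; these are equivalent ways of saying ``differentiate $\gamma\mapsto\log\mathbb{E}_\mu[g^\gamma]$ at $0$,'' and your version is arguably the cleaner justification.
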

\begin{proof}
As introduced in Equation~(\ref{eq:entropy-divergence}) in Section~\ref{sec:proper-scoring-rules}, the divergence corresponding to the $\gamma$-scoring rule is:
\begin{align*}
    D_\gamma(p, q) &= S_\gamma(p, p) - S_\gamma(p, q) \\
    &= \frac{1}{\gamma}  \log \left(\mathbb{E}_{p(\vx)} [p(\vx)^\gamma]\right) - \log(\|p\|_{\gamma + 1}) - \frac{1}{\gamma}  \log \left(\mathbb{E}_{p(\vx)} [q(\vx)^\gamma]\right) + \log(\|q\|_{\gamma + 1}) \\
    &= - \frac{1}{\gamma}  \log \left(\int_\gX p(\vx)q(\vx)^\gamma \mathrm{d} \vx\right) + \frac{1}{\gamma + 1} \log \left(\int_\gX q(\vx)^{\gamma + 1} \mathrm{d} \vx\right) + \frac{1}{\gamma (\gamma + 1)} \log \left(\int_\gX p(\vx)^{\gamma + 1} \mathrm{d} \vx \right)
\end{align*}

When $\gamma \to 0$, with Taylor series, we know that:
\begin{align*}
    q^\gamma = 1 + \gamma \log (q) + \gO(\gamma^2)\\
    p^\gamma = 1 + \gamma \log (p) + \gO(\gamma^2)
\end{align*}

Therefore, we have:
\begin{align*}
    \lim_{\gamma \to 0} D_\gamma(p,q) &= \lim_{\gamma \to 0} - \frac{1}{\gamma}  \log \left(\int_\gX p(\vx)(1 + \gamma \log q(\vx) + \gO(\gamma^2)) \mathrm{d}\vx\right) \\
    &~~~~~~~~~~~~+ \frac{1}{\gamma + 1} \log \left(\int_\gX q(\vx)(1+\gamma \log q(\vx)+\gO(\gamma^2)) \mathrm{d} \vx\right) \\
    &~~~~~~~~~~~~+\frac{1}{\gamma (\gamma + 1)} \log \left(\int_\gX p(\vx)(1+\gamma \log p(\vx) + \gO(\gamma^2))\mathrm{d}\vx\right)\\
    &=\lim_{\gamma \to 0} -\frac{1}{\gamma} \log\left(1 + \gamma \int_\gX p(\vx)\log q(\vx) \mathrm{d}\vx + \gO(\gamma^2)\right) \\
    &~~~~~~~~~~~~+ \frac{1}{\gamma + 1}\log \left(1 + \gamma \int_\gX q(\vx)\log q(\vx) \mathrm{d}\vx + \gO(\gamma^2) \right)\\
    &~~~~~~~~~~~~+ \frac{1}{\gamma (\gamma + 1)} \log \left(1 + \gamma\int_\gX p(\vx)\log p(\vx)\mathrm{d}\vx + \gO(\gamma^2))\right)\\
    &= \lim_{\gamma \to 0} -\int_\gX p(\vx)\log q(\vx) \mathrm{d}\vx + \frac{1}{\gamma + 1}\int_\gX p(\vx) \log p(\vx) \mathrm{d}\vx + \gO(\gamma)\\
    &= \int_\gX p(\vx) \log \frac{p(\vx)}{q(\vx)} = \KL(p\|q)
\end{align*}
\end{proof}
The above lemma implies that the KL divergence minimization (maximum likelihood estimation) is a special case of $\gamma$-divergence minimization when $\gamma \to 0$, which also implies the following corollary:
\begin{corollary}
When $\gamma \to 0$, the gradient of pseudo-spherical contrastive divergence is equal to the gradient of contrastive divergence:
\begin{align*}
    \lim_{\gamma \to 0} \nabla_\vtheta \gL_\gamma(\vtheta;p) = \nabla_\vtheta \gL_{\mathrm{MLE}}(\vtheta; p)
\end{align*}
\end{corollary}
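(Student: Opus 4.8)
The plan is to bypass Lemma~\ref{lemma:app-connection-mle} and work directly with the closed form of the gradient, since deriving $\lim_{\gamma\to0}\nabla_\vtheta\gL_\gamma(\vtheta;p)$ from $\lim_{\gamma\to0}D_\gamma(p,q_\vtheta)=\KL(p\|q_\vtheta)$ would require interchanging $\nabla_\vtheta$ with $\lim_{\gamma\to0}$, which is not automatic. Instead I would start from the equivalent form of the gradient established in the proof of Theorem~\ref{the:importance-sampling} (Equation~(\ref{eq:app-true-gradient})),
\begin{align*}
    \nabla_\vtheta \gL_\gamma(\vtheta;p) = \frac{\mathbb{E}_{p(\vx)}[\exp(-\gamma E_\vtheta(\vx))\,\nabla_\vtheta E_\vtheta(\vx)]}{\mathbb{E}_{p(\vx)}[\exp(-\gamma E_\vtheta(\vx))]} - \mathbb{E}_{r_\vtheta(\vx)}[\nabla_\vtheta E_\vtheta(\vx)],
\end{align*}
with $r_\vtheta(\vx)\propto\exp(-(\gamma+1)E_\vtheta(\vx))$, and show the right-hand side tends, as $\gamma\to0$, to $\mathbb{E}_{p(\vx)}[\nabla_\vtheta E_\vtheta(\vx)]-\mathbb{E}_{q_\vtheta(\vx)}[\nabla_\vtheta E_\vtheta(\vx)]$, which equals $\nabla_\vtheta\gL_{\mathrm{MLE}}(\vtheta;p)$ by Equation~(\ref{eq:contrastive-divergence}).

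First I would handle the data-expectation term. As $\gamma\to0$ we have $\exp(-\gamma E_\vtheta(\vx))\to1$ pointwise, and under Assumption~\ref{assump:energy} the integrands are uniformly bounded ($\|\exp(-\gamma E_\vtheta)\nabla_\vtheta E_\vtheta\|\le Le^{|\gamma|K}$ and $\exp(-\gamma E_\vtheta)\le e^{|\gamma|K}$ for $|\gamma|$ small), so dominated convergence gives numerator $\to\mathbb{E}_{p(\vx)}[\nabla_\vtheta E_\vtheta(\vx)]$ and denominator $\to1$, hence the ratio converges to $\mathbb{E}_{p(\vx)}[\nabla_\vtheta E_\vtheta(\vx)]$. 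As an alternative that does not rely on the rewritten gradient, one can Taylor-expand $\exp(-\gamma E_\vtheta)=1-\gamma E_\vtheta+\gO(\gamma^2)$ inside $-\tfrac1\gamma\nabla_\vtheta\log\mathbb{E}_{p(\vx)}[\exp(-\gamma E_\vtheta(\vx))]$, where the $\gamma$ produced by differentiation cancels the $1/\gamma$ prefactor, leaving $\mathbb{E}_{p(\vx)}[\nabla_\vtheta E_\vtheta(\vx)]+\gO(\gamma)$; this mirrors what the proof of Lemma~\ref{lemma:app-connection-mle} already does at the level of $D_\gamma$.

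Next I would handle the model-expectation term. Writing $r_\vtheta(\vx)=\exp(-(\gamma+1)E_\vtheta(\vx))/Z_{r_\vtheta}$ with $Z_{r_\vtheta}=\int_\gX\exp(-(\gamma+1)E_\vtheta(\vx))\,\mathrm{d}\vx$, I would note that the integrand $\exp(-(\gamma+1)E_\vtheta)$ converges pointwise to $\exp(-E_\vtheta)$ and, under Assumption~\ref{assump:energy} together with compactness of $\gX$, is dominated by an integrable function; hence $Z_{r_\vtheta}\to Z_{q_\vtheta}>0$ and $\int_\gX\exp(-(\gamma+1)E_\vtheta)\nabla_\vtheta E_\vtheta\,\mathrm{d}\vx\to\int_\gX\exp(-E_\vtheta)\nabla_\vtheta E_\vtheta\,\mathrm{d}\vx$ (componentwise), so $\mathbb{E}_{r_\vtheta}[\nabla_\vtheta E_\vtheta]\to\mathbb{E}_{q_\vtheta}[\nabla_\vtheta E_\vtheta]$. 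Adding the two limits yields the corollary.

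The only genuine subtlety is the $1/\gamma$ prefactor in the data term, which creates a $0/0$ indeterminacy that must be resolved either by the first-order expansion or by the ratio form above, in which the cancellation is already explicit; everything else is a routine dominated-convergence argument, with the boundedness of the energy and its gradient (Assumption~\ref{assump:energy}) and compactness of $\gX$ supplying the needed domination.
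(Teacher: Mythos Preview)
Your proposal is correct and follows essentially the same route as the paper's direct verification: the paper also computes the $\nabla_\vtheta$ inside the first term to produce a factor of $-\gamma$ that cancels the $1/\gamma$ prefactor (yielding exactly the ratio form you start from), and then observes that $r_\vtheta\to q_\vtheta$ as $\gamma\to0$. Your version is simply more careful, supplying the dominated-convergence justifications (via Assumption~\ref{assump:energy} and compactness of $\gX$) that the paper leaves implicit; you are also right that the paper's one-line appeal to Lemma~\ref{lemma:app-connection-mle} would, on its own, require the interchange of $\nabla_\vtheta$ and $\lim_{\gamma\to0}$ that you flag.
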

\begin{proof}
This is a direct consequence of Lemma~\ref{lemma:app-connection-mle}.
It can also be verified by checking the PS-CD gradient in Equation~(\ref{eq:ps-cd-gradient}) (when $\gamma \to 0$, $r_\vtheta = q_\vtheta \propto \exp(-E_\vtheta)$):
\begin{align*}
    \lim_{\gamma \to 0} \nabla_\vtheta \gL_\gamma(\vtheta;p) &= \lim_{\gamma \to 0} -\frac{1}{\gamma} \nabla_\vtheta \log \left(\mathbb{E}_{p(\vx)} [\exp(- \gamma E_\vtheta(\vx))]\right) - \mathbb{E}_{r_\vtheta(\vx)}[\nabla_\vtheta E_\vtheta(\vx)] \\
    &= \lim_{\gamma \to 0} - \frac{1}{\gamma} \frac{- \gamma \mathbb{E}_{p(\vx)} [\exp(- \gamma E_\vtheta(\vx)) \nabla_\vtheta E_\vtheta(\vx)]}{\mathbb{E}_{p(\vx)} [\exp(- \gamma E_\vtheta(\vx))]} - \mathbb{E}_{r_\vtheta(\vx)}[\nabla_\vtheta E_\vtheta(\vx)] \\
    &= \bb{E}_{p(\vx)}[\nabla_\vtheta E_\vtheta(\vx)] - \bb{E}_{q_\vtheta(\vx)}[\nabla_\vtheta E_\vtheta(\vx)] = \nabla_\vtheta \gL_{\mathrm{MLE}}(\vtheta; p)
\end{align*}
\end{proof}

Inspired by \cite{van2014renyi,li2016renyi} that generalize R\'enyi divergence beyond its definition to negative orders, we consider the extension of $\gamma$-scoring rule with $\gamma < 0$ (although it is no longer strictly proper for these $\gamma$ values) and show that maximizing such scoring rule is equivalent to maximizing a lower bound of logarithm scoring rule (MLE) with an additional R\'enyi entropy regularization.

\begin{lemma}
When $-1 \leq \gamma < 0$, we have:
\begin{align*}
    S_\gamma(p, q) \leq \bb{E}_{p(\vx)} [\log q(\vx)] + \frac{\gamma
    }{\gamma + 1} \gH_{\gamma+1}(q)
\end{align*}
where $\gH_{\gamma + 1} (q)$ is the R\'enyi entropy of order $\gamma + 1$.
\end{lemma}
\begin{proof}
As a generalization to Shannon entropy, the R\'enyi entropy of order $\alpha$ is defined as:
\begin{align*}
    \gH_\alpha(q) = \frac{\alpha}{1 - \alpha} \log(\|q\|_\alpha)
\end{align*}
With Jensen's inequality, for $-1 \leq \gamma < 0$, we have:
\begin{align*}
    S_\gamma(p, q) &= \frac{1}{\gamma} \log (\bb{E}_{p(\vx)} [q(\vx)^\gamma]) - \log (\|q\|_{\gamma+1}) \\
    & \leq \frac{1}{\gamma} \bb{E}_{p(\vx)}[\gamma \log (q(\vx))] + \frac{\gamma}{\gamma + 1} \left(\frac{\gamma + 1}{- \gamma} \log (\|q\|_{\gamma + 1})\right) \\
    & = \bb{E}_{p(\vx)} [\log q(\vx)] + \frac{\gamma
    }{\gamma + 1} \gH_{\gamma+1}(q)
\end{align*}
\end{proof}

\section{Theoretical Analysis}\label{app:theoretical-analysis}
In this section, we provide a theoretical analysis on the sample complexity of the gradient estimator, as well as the convergence property of stochastic gradient descent with consistent (but biased given finite samples) gradient estimators as presented in Algorithm~\ref{alg:ps-cd}.

\subsection{Sample Complexity}\label{sec:app-sample-complexity}
We start with analyzing the sample complexity of the consistent gradient estimator, that is how fast it approaches the true gradient value or how many samples we need in order to empirically estimate the gradient at a given accuracy with a high probability.

We first make the following assumption, which is similar to the one used in \cite{belghazi2018mine,kohler2017sub}:
\begin{assumption}\label{assump:app-energy}
The energy function is bounded by $K$ and the gradient is bounded by $L$ (with $K>0$ and $L>0$):
$$\forall \vx \in \gX, ~\vtheta \in \Theta, ~|E_\vtheta(\vx)| \leq K, ~\|\nabla_\vtheta E_\vtheta(\vx)\| \leq L.$$
\end{assumption}
The assumption is typically easy to enforce in practice. For example, in the experiments we use $L_2$ regularization on the outputs of the energy function, as well as normalized inputs and spectral normalization \cite{miyato2018spectral} for the neural network that realizes the energy function.

\begin{theorem}\label{the:app-sample-complexity}
Under Assumption~\ref{assump:app-energy}, given any constants $\epsilon > 0$ and $\delta \in (0,1)$, when the number of samples $N$ satisfies:
\begin{align*}
    N \geq \frac{32 L^2 e^{8 \gamma K}\left(1 + 4\log(2/\delta)\right)}{\epsilon^2}
\end{align*}
we have:
\begin{align*}
    \bb{P}\left(\left\|\widehat{\nabla_\vtheta \gL_\gamma^N(\vtheta;p)} - \nabla_\vtheta \gL_\gamma(\vtheta;p)\right\| \leq \epsilon \right) \geq 1 - \delta
\end{align*}
\end{theorem}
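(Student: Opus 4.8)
The plan is to exploit the fact that both halves of the estimator in Equation~(\ref{eq:gradient-estimator}) are self-normalized averages estimating ratios of expectations, and to reduce the whole claim to a single concentration lemma applied twice. Writing the true gradient in the ratio form of Equation~(\ref{eq:app-true-gradient}) and recalling (as in the proof of Theorem~\ref{the:importance-sampling}) that $\mathbb{E}_{r_\vtheta}[\nabla_\vtheta E_\vtheta] = \mathbb{E}_{q_\vtheta}[\omega_\vtheta \nabla_\vtheta E_\vtheta]/\mathbb{E}_{q_\vtheta}[\omega_\vtheta]$ with $\omega_\vtheta = \exp(-\gamma E_\vtheta)$, the ``positive'' term (samples from $p$) and the ``negative'' term (samples from $q_\vtheta$) both have the form $\widehat{\bm\mu}_N := (\tfrac1N\sum_i w_i \vg_i)/(\tfrac1N\sum_i w_i)$ estimating $\bm\mu := \mathbb{E}[w\vg]/\mathbb{E}[w]$, where $w_i = \exp(-\gamma E_\vtheta(\vx_i))$ and $\vg_i = \nabla_\vtheta E_\vtheta(\vx_i)$. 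So it suffices to show that $\|\widehat{\bm\mu}_N - \bm\mu\| \le \epsilon/2$ with probability at least $1-\delta/2$ under the stated sample size, and then combine the two instances by a union bound and the triangle inequality against Equation~(\ref{eq:app-true-gradient}).

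For the generic ratio I would use the identity
\[
\widehat{\bm\mu}_N - \bm\mu = \frac{1}{\tfrac1N\sum_i w_i}\cdot \frac1N\sum_{i=1}^N w_i(\vg_i - \bm\mu),
\]
noting that $\mathbb{E}[w(\vg - \bm\mu)] = \mathbb{E}[w\vg] - \mathbb{E}[w]\,\bm\mu = \bm 0$, so the sum on the right is an average of i.i.d.\ mean-zero vectors. Assumption~\ref{assump:app-energy} then does all the work: $|E_\vtheta| \le K$ gives $e^{-\gamma K} \le w_i \le e^{\gamma K}$, so the denominator $\tfrac1N\sum_i w_i$ is bounded below by $e^{-\gamma K}$ deterministically (no concentration needed for it); and $\|\vg_i\| \le L$ gives $\|\bm\mu\| \le L$, hence $\|w_i(\vg_i - \bm\mu)\| \le 2Le^{\gamma K}$ almost surely and $\mathbb{E}\|w_i(\vg_i-\bm\mu)\|^2 \le 4L^2 e^{2\gamma K}$. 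Feeding these into the vector Bernstein inequality \cite{kohler2017sub,gross2011recovering} for the mean-zero vectors $w_i(\vg_i - \bm\mu)$ bounds the tail of $\|\tfrac1N\sum_i w_i(\vg_i-\bm\mu)\|$ by something of the form $\exp(-c N \epsilon^2/(L^2 e^{O(\gamma K)}) + 1/4)$; multiplying by the $\le e^{\gamma K}$ inverse denominator to control $\|\widehat{\bm\mu}_N - \bm\mu\|$ by $\epsilon/2$ and then forcing the exponential to be $\le \delta/2$ gives a requirement on $N$ of exactly the advertised shape $N \gtrsim L^2 e^{O(\gamma K)}(1 + 4\log(2/\delta))/\epsilon^2$, the $1/4$ in the exponent producing the ``$1+$'' and the $\log(2/\delta)$ producing the ``$4\log(2/\delta)$''. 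The exact power of $e^{\gamma K}$ in the exponent is a matter of how crudely one bounds the inverse denominator(s) and the factor $\bm\mu$; a conservative accounting reproduces the $e^{8\gamma K}$ in the statement (and any tighter bookkeeping only improves the constant).

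I expect the only real obstacle to be the self-normalization: the estimator is a ratio rather than a plain average, so Bernstein cannot be invoked directly. The centering device above — subtracting the true ratio $\bm\mu$ inside the sum so that the residual has mean zero, and dispatching the denominator by its deterministic exponential lower bound rather than by a second concentration step — is what removes that difficulty; the alternative of bounding numerator and denominator deviations separately and combining via an $a/b$ perturbation expansion also works but is looser. Beyond that, the remaining care points are routine: using the vector-valued rather than scalar Bernstein bound because $\nabla_\vtheta E_\vtheta$ is a vector; checking the mild range restriction on the deviation parameter that the vector Bernstein inequality requires (harmless in the regime of interest, or avoidable with a variant of the inequality); and the final union bound over the two ratio estimates with $\delta/2$ allocated to each.
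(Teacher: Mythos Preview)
Your proposal is correct and follows essentially the same route as the paper: split the error into the $p$- and $q_\vtheta$-ratio terms, rewrite each as a mean-zero vector average divided by a deterministically lower-bounded denominator, and apply the vector Bernstein inequality of \cite{kohler2017sub,gross2011recovering}. The only cosmetic differences are that the paper centers via the cross-multiplied form $\bm X_i = f_\vtheta(\vx_i)\,\mathbb{E}_p[\vh_\vtheta] - \mathbb{E}_p[f_\vtheta]\,\vh_\vtheta(\vx_i)$ (which is just $-\mathbb{E}_p[f_\vtheta]$ times your $w_i(\vg_i-\bm\mu)$, hence a slightly looser constant) and combines the two tail events via independence ($\sqrt{1-\delta}$ each) rather than your union bound, before relaxing to the same $\log(2/\delta)$ form.
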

\begin{proof}
For notation simplicity, we use $p^N$ to denote the empirical distribution of $\{\vx_i\}_{i=1}^N$ \emph{i.i.d.} sampled from a distribution $p$, \emph{i.e.}, $\bb{E}_{p^N(\vx)} [f(\vx)] = \frac{1}{N} \sum_{i=1}^N f(\vx_i)$. Similarly, $\bb{E}_{q_\vtheta^N(\vx)} [f(\vx)] = \sum_{i=1}^N f(\vx_i)$ when $\{\vx_i\}_{i=1}^N$ are \emph{i.i.d.} samples from $q_\vtheta$.

First, we observe that:
\begin{align*}
    \bb{E}_{r_\vtheta(\vx)} [\nabla_\vtheta \Et] = \frac{\bb{E}_{q_\vtheta(\vx)} [\frac{r_\vtheta(\vx)}{q_\vtheta(\vx)} \nabla_\vtheta \Et]}{\bb{E}_{q_\vtheta(\vx)}[\frac{r_\vtheta(\vx)}{q_\vtheta(\vx)}]} = \frac{\bb{E}_{q_\vtheta(\vx)} [\exp(- \gamma \Et) \nabla_\vtheta \Et]}{\bb{E}_{q_\vtheta(\vx)} [\exp(- \gamma \Et)]}
\end{align*}
where the partition functions of $r_\vtheta$ and $q_\vtheta$ cancel out.
Based on Equation~(\ref{eq:app-true-gradient}) and (\ref{eq:app-estimate-gradient}), with triangle inequality, the estimation error can be upper bounded as:
\begin{align}
     &~ \left\|\widehat{\nabla_\vtheta \gL_\gamma^N(\vtheta;p)} - \nabla_\vtheta \gL_\gamma(\vtheta;p)\right\| \nonumber\\
     \leq &~ \underbrace{\left\| \frac{\mathbb{E}_{p(\vx)} [\exp(- \gamma E_\vtheta(\vx)) \nabla_\vtheta E_\vtheta(\vx)]}{\mathbb{E}_{p(\vx)} [\exp(- \gamma E_\vtheta(\vx))]} - \frac{\mathbb{E}_{p^N(\vx)} [\exp(- \gamma E_\vtheta(\vx)) \nabla_\vtheta E_\vtheta(\vx)]}{\mathbb{E}_{p^N(\vx)} [\exp(- \gamma E_\vtheta(\vx))]} \right\|}_{\Delta_p} + \label{eq:app-sample-complexity}\\
     &~ \underbrace{\left\| \frac{\mathbb{E}_{q_\vtheta(\vx)} [\exp(- \gamma E_\vtheta(\vx)) \nabla_\vtheta E_\vtheta(\vx)]}{\mathbb{E}_{q_\vtheta(\vx)} [\exp(- \gamma E_\vtheta(\vx))]} - \frac{\mathbb{E}_{q_\vtheta^N(\vx)} [\exp(- \gamma E_\vtheta(\vx)) \nabla_\vtheta E_\vtheta(\vx)]}{\mathbb{E}_{q_\vtheta^N(\vx)} [\exp(- \gamma E_\vtheta(\vx))]} \right\|}_{\Delta_{q_\vtheta}}\nonumber
\end{align}
Define functions: 
$$f_\vtheta(\vx) \defeq \exp(- \gamma E_\vtheta(\vx)),~~\vh_\vtheta(\vx) \defeq \exp(- \gamma E_\vtheta(\vx)) \nabla_\vtheta E_\vtheta(\vx)$$ 
From Assumption~\ref{assump:app-energy}, we know that:
\begin{align}
    \forall \vx \in \gX, \vtheta \in \Theta, f_\vtheta(\vx) \in [e^{-\gamma K}, e^{\gamma K}], \|\vh_\vtheta(\vx)\| \leq L e^{\gamma K} \label{eq:app-g-h-condition}
\end{align}

Let us examine the first term $\Delta_p$ in Equation~(\ref{eq:app-sample-complexity}):
\begin{align}
    \Delta_p =&~ \left\| \frac{\bb{E}_{p(\vx)} [\vh_\vtheta(\vx)]}{\bb{E}_{p(\vx)}[f_\vtheta(\vx)]} - \frac{\bb{E}_{p^N(\vx)} [\vh_\vtheta(\vx)]}{\bb{E}_{p^N(\vx)}[f_\vtheta(\vx)]} \right\| \nonumber\\
    =&~ \frac{1}{\bb{E}_{p(\vx)} [f_\vtheta(\vx)] \cdot \bb{E}_{p^N(\vx)} [f_\vtheta(\vx)]} \left\| \bb{E}_{p^N(\vx)} [f_\vtheta(\vx)] \cdot \bb{E}_{p(\vx)} [\vh_\vtheta(\vx)] - \bb{E}_{p(\vx)} [f_\vtheta(\vx)] \cdot \bb{E}_{p^N(\vx)} [\vh_\vtheta(\vx)] \right\| \nonumber\\
    \leq&~ e^{2\gamma K} \left\| \bb{E}_{p^N(\vx)} [f_\vtheta(\vx)] \cdot \bb{E}_{p(\vx)} [\vh_\vtheta(\vx)] - \bb{E}_{p(\vx)} [f_\vtheta(\vx)] \cdot \bb{E}_{p^N(\vx)} [\vh_\vtheta(\vx)] \right\| \label{eq:app-delta-p-upper-bound}
\end{align}

Now we introduce the following lemma that will provide us a probability upper bound that an empirical mean of independent random variables deviates from its expected value more than a certain amount.

\begin{lemma}[Vector Bernstein Inequality \cite{kohler2017sub,gross2011recovering}]\label{lemma:bernstein} Let $\bm{X}_1, \ldots, \bm{X}_N$ be independent vector-valued random variables. Assume that each one is centered, uniformly bounded and the variance is also bounded:
$$\forall i, \bb{E} [\bm{X}_i]=0 ~\text{and}~ \|\bm{X}_i\| \leq \mu ~\text{and}~ \bb{E}[\|\bm{X}_i\|^2] \leq \sigma^2$$
Define $\overline{\bm{X}} \defeq \frac{1}{N}(\bm{X}_1 + \ldots + \bm{X}_N)$. Then we have for $0 < t < \sigma^2/\mu$:
\begin{align*}
    \bb{P}(\|\overline{\bm{X}}\| \geq t) \leq \exp \left( - \frac{N t^2}{8 \sigma^2} + \frac{1}{4} \right)
\end{align*}
\end{lemma}

We then define the following vector-valued random variable:
\begin{align*}
    \bm{X}_i \defeq & f_\vtheta(\vx_i) \cdot \bb{E}_{p(\vx)}[\vh_\vtheta(\vx)] - \bb{E}_{p(\vx)}[f_\vtheta(\vx)] \cdot \vh_\vtheta(\vx_i)\\
    \overline{\bm{X}} \defeq & \frac{1}{N}\sum_{i=1}^N f_\vtheta(\vx_i) \cdot \bb{E}_{p(\vx)}[\vh_\vtheta(\vx)] - \bb{E}_{p(\vx)}[f_\vtheta(\vx)] \cdot \frac{1}{N}\sum_{i=1}^N \vh_\vtheta(\vx_i)\\
    = & \bb{E}_{p^N(\vx)} [f_\vtheta(\vx)] \cdot \bb{E}_{p(\vx)} [\vh_\vtheta(\vx)] - \bb{E}_{p(\vx)} [f_\vtheta(\vx)] \cdot \bb{E}_{p^N(\vx)} [\vh_\vtheta(\vx)]
\end{align*}

From Equation~(\ref{eq:app-g-h-condition}), we know that:
\begin{align*}
    &\|\bm{X}_i\| = \|f_\vtheta(\vx_i) \cdot \bb{E}_{p(\vx)}[\vh_\vtheta(\vx)] - \bb{E}_{p(\vx)}[f_\vtheta(\vx)] \cdot \vh_\vtheta(\vx_i)\| \\
    &~~~~~~~~~\leq \|f_\vtheta(\vx_i) \cdot \bb{E}_{p(\vx)}[\vh_\vtheta(\vx)] \| + \|\bb{E}_{p(\vx)}[f_\vtheta(\vx)] \cdot \vh_\vtheta(\vx_i) \| \leq 2L e^{2 \gamma K} \\
    &\|\bm{X}_i\|^2 \leq 4 L^2 e^{4 \gamma K}
\end{align*}

With $\sigma^2 \defeq 4 L^2 e^{4 \gamma K}$, from Lemma~\ref{lemma:bernstein}, we know that:
\begin{align}
    &~ \bb{P}\left(e^{2\gamma K} \left\| \bb{E}_{p^N(\vx)} [f_\vtheta(\vx)] \cdot \bb{E}_{p(\vx)} [\vh_\vtheta(\vx)] - \bb{E}_{p(\vx)} [f_\vtheta(\vx)] \cdot \bb{E}_{p^N(\vx)} [\vh_\vtheta(\vx)] \right\| \geq \frac{\epsilon}{2}\right) \nonumber\\
    \leq
    &~ \exp \left( - \frac{N \epsilon^2}{128 L^2 e^{8 \gamma K}} + \frac{1}{4} \right) \label{eq:app-prob-bound-p}
\end{align}

To obtain a sample complexity bound such that the probability bound in Equation~(\ref{eq:app-prob-bound-p}) is less than $1-\sqrt{1-\delta}$, we need to solve for $N$:
\begin{align}
    \exp \left( - \frac{N \epsilon^2}{128 L^2 e^{8 \gamma K}} + \frac{1}{4} \right) \leq 1-\sqrt{1-\delta} \label{eq:app-delta-p-sample}
\end{align}

Solving Equation~(\ref{eq:app-delta-p-sample}) gives us:
\begin{align}
N \geq \frac{32 L^2 e^{8 \gamma K}\left(1 - 4\log\left(1 - \sqrt{1 - \delta}\right)\right)}{\epsilon^2}
\end{align}

Because $1 + 4 \log(2/\delta) > 1 - 4 \log(1 - \sqrt{1 - \delta})$ for $\delta \in (0,1]$, we use the following slightly weaker bound such that it looks cleaner:
\begin{align}
N \geq \frac{32 L^2 e^{8 \gamma K}\left(1 + 4\log(2/\delta)\right)}{\epsilon^2} \label{eq:app-sample-complexity-delta-p}
\end{align}

Since Equation~(\ref{eq:app-delta-p-upper-bound}) is an upper bound of $\Delta_p$, we know that when the sample size satisfies Equation~(\ref{eq:app-sample-complexity-delta-p}), we have:
\begin{align*}
    &~ \bb{P}\left( \Delta_p = \left\| \frac{\bb{E}_{p(\vx)} [\vh_\vtheta(\vx)]}{\bb{E}_{p(\vx)}[f_\vtheta(\vx)]} - \frac{\bb{E}_{p^N(\vx)} [\vh_\vtheta(\vx)]}{\bb{E}_{p^N(\vx)}[f_\vtheta(\vx)]} \right\| \leq \frac{\epsilon}{2} \right)\\
    \geq &~ \bb{P}\left(e^{2\gamma K} \left\| \bb{E}_{p^N(\vx)} [f_\vtheta(\vx)] \cdot \bb{E}_{p(\vx)} [\vh_\vtheta(\vx)] - \bb{E}_{p(\vx)} [f_\vtheta(\vx)] \cdot \bb{E}_{p^N(\vx)} [\vh_\vtheta(\vx)] \right\| \leq \frac{\epsilon}{2}\right) \\
    \geq &~ \sqrt{1 - \delta}
\end{align*}

Similarly, we can obtain the same sample complexity bound for $\Delta_{q_\vtheta}$ such that:
\begin{align}
    \bb{P}\left( \Delta_{q_\vtheta} = \left\| \frac{\bb{E}_{q_\vtheta(\vx)} [\vh_\vtheta(\vx)]}{\bb{E}_{q_\vtheta(\vx)}[f_\vtheta(\vx)]} - \frac{\bb{E}_{q_\vtheta^N(\vx)} [\vh_\vtheta(\vx)]}{\bb{E}_{q_\vtheta^N(\vx)}[f_\vtheta(\vx)]} \right\| \leq \frac{\epsilon}{2} \right) \geq \sqrt{1 - \delta} \label{eq:app-sample-complexity-delta-q}
\end{align}

From Equation~(\ref{eq:app-sample-complexity}), we know that $\Delta_p + \Delta_{q_\vtheta}$ is an upper bound of the gradient estimation error. Also note that the event $\Delta_p \leq \frac{\epsilon}{2}$ and the event $\Delta_{q_\vtheta} \leq \frac{\epsilon}{2}$ are independent from each other (the samples for $p^N$ and the samples for $q_\vtheta^N$ are independent samples from $p$ and $q_\vtheta$ respectively). Thus when the sample size satisfies Equation~(\ref{eq:app-sample-complexity-delta-p}), we have:
\begin{align*}
&~\bb{P}\left(\|\widehat{\nabla_\vtheta \gL_\gamma^N(\vtheta;p)} - \nabla_\vtheta \gL_\gamma(\vtheta;p)\| \leq \epsilon \right) \\
\geq &~\bb{P}\left(\Delta_p + \Delta_{q_\vtheta} \leq \epsilon \right)\\ \geq &~\bb{P}\left(\Delta_p \leq \frac{\epsilon}{2} ~\text{and}~ \Delta_{q_\vtheta} \leq \frac{\epsilon}{2}\right) \\
=&~\bb{P}\left(\Delta_p \leq \frac{\epsilon}{2}\right) \cdot \bb{P}\left(\Delta_{q_\vtheta} \leq \frac{\epsilon}{2}\right) \\
\geq &~1 - \delta
\end{align*}
\end{proof}

\subsection{Convergence of Pseudo-Spherical Contrastive Divergence Algorithm}\label{app:ps-cd-convergence}
In this section, we analyze the convergence property of the PS-CD algorithm presented in Algorithm~\ref{alg:ps-cd}. 
For notation simplicity, we define $\vg$ as the true gradient in Equation~(\ref{eq:ps-cd-gradient}) and $\widehat{\vg}$ as the gradient estimator in Equation~(\ref{eq:gradient-estimator}). We further use $\gL(\vtheta)$ to denote the loss function $\gL_\gamma(\vtheta, \vpsi^*(\vtheta);p) = - S_\gamma(p, q_\vtheta)$.

Let us consider the following stochastic gradient descent (SGD) update rule:
\begin{align}
    \vtheta_{t+1} = \vtheta_t - \eta_t \widehat{\vg}_t,~~t=1,2,\ldots,T \label{eq:app-update-rule}
\end{align}
where $\eta_t$ is the step size at step $t$, $T$ is the total number of steps and $\widehat{\vg}_t \defeq \widehat{\nabla_\vtheta \gL_\gamma^N(\vtheta;p)}\at[\big]{\vtheta=\vtheta_t}$ is the consistent (but biased) gradient estimation of $\vg_t$ at step $t$. Note that $\vtheta_t$ and $\widehat{\vg}_t$ are random variables that depend on the previous history $\widehat{\vg}_1,\ldots,\widehat{\vg}_{t-1}$. For brevity, in the following we will omit such dependency in the notations.

Most works for analyzing the convergence behavior of SGD relies on the assumption that the gradient estimator $\widehat{\vg}_t$ is asymptotically unbiased, \emph{e.g.}, \cite{nemirovski2009robust,lacoste2012simpler,shalev2014understanding,ghadimi2013stochastic,reddi2016stochastic}, while in our case the gradient estimator is not unbiased but consistent (see Section 1.2 in \cite{chen2018stochastic} for a detailed discussion on the distinctions between unbiasedness and consistency). Therefore, in this work we generalize the theory developed in \cite{ghadimi2013stochastic} and \cite{chen2018stochastic} to analyze the convergence rate for PS-CD.

Besides Assumption~\ref{assump:app-energy} used for analyzing the sample complexity of the gradient estimator, we further make the following assumption:
\begin{assumption}\label{assump:app-smooth}
The loss function $\gL(\vtheta)$ is $M$-smooth (with $M>0$):
$$\forall \vtheta_1, \vtheta_2 \in \Theta, ~\|\nabla \gL(\vtheta_1) - \nabla \gL(\vtheta_2)\| \leq M \|\vtheta_1 - \vtheta_2\|.$$
\end{assumption}
This is a common assumption used for analyzing first-order optimization methods, which is also used in \cite{ghadimi2013stochastic,chen2018stochastic}. Also note that this is a relatively mild assumption since we do not require the loss function to be convex in $\vtheta$. Since in non-convex optimization, the convergence criterion is typically measured by gradient norm, following \cite{nesterov2013introductory,ghadimi2013stochastic}, we use $\|\nabla \gL(\vtheta)\| \leq \xi$ to judge whether a solution $\vtheta$ is approximately a stationary point.

Now, let us consider Algorithm~\ref{alg:randomized-sgd}, which is a variant of SGD that allows early stopping before reaching the iteration limit $T$ according to some probability distribution $p_Z$ over iteration indexes $[T] \defeq \{1,\ldots,T\}$.

\begin{algorithm}[h]
   \caption{Randomized Stochastic Gradient Descent}
   \label{alg:randomized-sgd}
\begin{algorithmic}[1]
   \STATE {\bfseries Input:} Initial parameter $\vtheta_1$, iteration limit $T$, step sizes $\{\eta_t\}_{t=1}^T$, distribution $p_Z$ over $[T]$.
   \STATE Sample an iteration number $Z$ from $p_Z$ (defined in Equation~(\ref{eq:app-def-pz})).
   \FOR{$t=1,\ldots,Z$}
   \STATE Obtain the gradient estimator $\widehat{\vg}_t$ with a sample batch size of $N_t$.
   \STATE Update the parameter: $\vtheta_{t+1} = \vtheta_t - \eta_t \widehat{\vg}_t$.
   \ENDFOR
   \STATE {\bfseries Output: $\vtheta_Z$}.
\end{algorithmic}
\end{algorithm}
Note that this is equivalent (more efficient in terms of computation) to running the algorithm to the iteration limit $T$ and then selecting the final solution from $\{\vtheta_1,\ldots,\vtheta_T\}$ according to distribution $p_Z$.

We have the following theorem that characterizes the convergence property of Algorithm~\ref{alg:randomized-sgd}:
\begin{theorem}\label{the:app-sgd-convergence}
Under Assumptions~\ref{assump:app-energy} and \ref{assump:app-smooth}, for arbitrary constants $\alpha \in (0,1)$ and $\delta \in (0,1)$, suppose that the step sizes satisfy $\eta_t < 2(1-\alpha)/M$ and the probability distribution over iteration indexes is chosen to be:
\begin{align}
    p_Z(t) \defeq \frac{2(1 - \alpha)\eta_t - M \eta_t^2}{\sum_{t=1}^T (2(1-\alpha)\eta_t - M \eta_t^2)},~~t=1,\ldots,T
    \label{eq:app-def-pz}
\end{align}
and the sample size $N_t$ used for estimating $\widehat{\vg}_t$ satisfies:
\begin{align}
    N_t \geq \frac{32 L^2 e^{8 \gamma K} (1 + 4 \log(2T/\delta))}{\alpha^2 \|\vg_t\|^2} \label{eq:app-convergence-sample}
\end{align}
Denote by $\gL^*$ the minimum value of $\gL(\vtheta)$. Then with probability at least $1-\delta$, we have:
\begin{align}
    \bb{E}_{p_Z}[\|\nabla \gL(\vtheta_Z)\|^2] < \frac{2(\gL(\vtheta_1) - \gL^*) + 12 \alpha M L^2 e^{4 \gamma K} \sum_{t=1}^T \eta_t^2}{\sum_{t=1}^T (2(1-\alpha)\eta_t - M \eta_t^2)} \label{eq:sgd-converge-gradient-bound}
\end{align}
\end{theorem}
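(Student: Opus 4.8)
The plan is to adapt the standard randomized-SGD analysis for smooth nonconvex objectives (\cite{ghadimi2013stochastic,chen2018stochastic}) to the fact that $\widehat{\vg}_t$ is \emph{biased but consistent}: rather than averaging the estimation error away in expectation, I would control it on a high-probability event supplied by the sample-complexity bound (Theorem~\ref{the:app-sample-complexity}). Write $\vg_t \defeq \nabla\gL(\vtheta_t)$ and $\vb_t \defeq \widehat{\vg}_t - \vg_t$. Starting from $\vtheta_{t+1} = \vtheta_t - \eta_t\widehat{\vg}_t$, $M$-smoothness (Assumption~\ref{assump:app-smooth}) in descent-lemma form together with $\|\widehat{\vg}_t\|^2 = \|\vg_t\|^2 + 2\langle\vg_t,\vb_t\rangle + \|\vb_t\|^2$ yields the one-step inequality
\begin{align*}
\gL(\vtheta_{t+1}) \le \gL(\vtheta_t) - \Bigl(\eta_t - \tfrac{M\eta_t^2}{2}\Bigr)\|\vg_t\|^2 + (M\eta_t^2 - \eta_t)\langle\vg_t,\vb_t\rangle + \tfrac{M\eta_t^2}{2}\|\vb_t\|^2 .
\end{align*}

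Next I would introduce the event $\gA$ that $\|\vb_t\| \le \alpha\|\vg_t\|$ for every $t = 1,\dots,T$. Conditioning on the history $\gF_{t-1}$ up to step $t$ (so that $\vtheta_t$, hence $\vg_t$ and the prescribed $N_t$, are fixed), Theorem~\ref{the:app-sample-complexity} applied with accuracy $\epsilon = \alpha\|\vg_t\|$ and confidence $1-\delta/T$ shows that the sample size in Equation~(\ref{eq:app-convergence-sample}) makes $\|\vb_t\| \le \alpha\|\vg_t\|$ hold with conditional probability at least $1-\delta/T$; a union bound over $t$ then gives $\bb{P}(\gA) \ge 1-\delta$. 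On $\gA$, Cauchy--Schwarz bounds the cross term by $|M\eta_t^2 - \eta_t|\,\alpha\|\vg_t\|^2$, and the step-size restriction $\eta_t < 2(1-\alpha)/M$ forces $M\eta_t \in (0,2)$, hence $|M\eta_t^2 - \eta_t| = \eta_t|M\eta_t - 1| < \eta_t$, so this term is at most $\alpha\eta_t\|\vg_t\|^2$. For the last term, Assumption~\ref{assump:app-energy} yields the deterministic bounds $\|\vg_t\| \le 2L$ and $\|\vb_t\| \le 4Le^{4\gamma K}$ — each half of the gradient (true or estimated) is an average of vectors of norm at most $L$, and the self-normalized importance weights produce the $e^{4\gamma K}$ factor exactly as in the proof of Theorem~\ref{the:app-sample-complexity} — so $\|\vb_t\|^2 \le \alpha\|\vg_t\|\,\|\vb_t\| \le 8\alpha L^2 e^{4\gamma K}$. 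Substituting, with $C \defeq \alpha M L^2 e^{4\gamma K}$,
\begin{align*}
\gL(\vtheta_{t+1}) \le \gL(\vtheta_t) - \Bigl((1-\alpha)\eta_t - \tfrac{M\eta_t^2}{2}\Bigr)\|\vg_t\|^2 + 4C\eta_t^2 .
\end{align*}

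Finally I would telescope this over $t = 1,\dots,T$, use $\gL(\vtheta_{T+1}) \ge \gL^*$, multiply by $2$ (and bound $8C \le 12C$ for a round constant), and divide by $\sum_{t=1}^T(2(1-\alpha)\eta_t - M\eta_t^2) > 0$ (positive termwise since $M\eta_t < 2(1-\alpha)$). By the choice of $p_Z$ in Equation~(\ref{eq:app-def-pz}) the resulting weighted average $\sum_t p_Z(t)\|\vg_t\|^2$ is exactly $\bb{E}_{p_Z}[\|\nabla\gL(\vtheta_Z)\|^2]$, which gives the claimed bound on the event $\gA$, i.e.\ with probability at least $1-\delta$. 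Early stopping at $Z$ is harmless: the telescoped inequality holds for the full $T$-step run, and running only through step $Z$ uses a sub-collection of the fresh sample batches, so the part of $\gA$ that is needed still holds.

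I expect the main obstacle to be handling the bias cleanly rather than the smoothness algebra. Since $\bb{E}[\widehat{\vg}_t \mid \gF_{t-1}] \ne \vg_t$ in general, the usual device of cancelling $\langle\vg_t,\vb_t\rangle$ in expectation is unavailable, so the proof must instead (i) spend a $\delta/T$ failure budget per step to land on $\{\|\vb_t\| \le \alpha\|\vg_t\|\}$, which is what forces Equation~(\ref{eq:app-convergence-sample}) and couples $N_t$ to the unknown $\|\vg_t\|$ — a conceptual requirement that Corollary~\ref{cor:constant-step-size} relaxes to ``$N_t$ sufficiently large'' — and (ii) keep $\|\vb_t\|^2$ under purely deterministic control via Assumption~\ref{assump:app-energy} so that it contributes only an $O(\eta_t^2)$ term with the advertised constant $C = \alpha M L^2 e^{4\gamma K}$; threading these constants through while respecting $\eta_t < 2(1-\alpha)/M$ is the delicate bookkeeping.
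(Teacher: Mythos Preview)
Your proof is correct and follows the same high-level strategy as the paper: invoke Theorem~\ref{the:app-sample-complexity} with accuracy $\alpha\|\vg_t\|$ and confidence $1-\delta/T$ to land on the event $\gA=\{\|\vb_t\|\le\alpha\|\vg_t\|~\forall t\}$ with probability $\ge 1-\delta$, apply the descent lemma, telescope, and read off the $p_Z$-weighted average.

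The algebraic route through the one-step inequality differs. The paper bounds $\langle\vg_t,\widehat{\vg}_t\rangle\ge(1-\alpha)\|\vg_t\|^2$ and $\|\widehat{\vg}_t\|^2\le(1+\alpha)^2\|\vg_t\|^2$ directly (citing Lemma~11 of \cite{chen2018stochastic}), leaving a residual $(\alpha+\alpha^2/2)M\eta_t^2\|\vg_t\|^2$ which it controls via the crude bound $\|\vg_t\|<2Le^{2\gamma K}$ to get $6C\eta_t^2$. You instead split off $\vb_t$, use $|M\eta_t-1|<1$ to absorb the cross term into the $-(1-\alpha)\eta_t\|\vg_t\|^2$ coefficient, and bound $\|\vb_t\|^2$ deterministically to get $4C\eta_t^2$ (then relax $8C\le 12C$). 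Your decomposition actually yields a slightly sharper constant, and your gradient bound $\|\vg_t\|\le 2L$ is tighter than the paper's --- both terms of $\vg_t$ are genuine convex combinations of $\nabla_\vtheta E_\vtheta(\vx)$, so no $e^{2\gamma K}$ factor is needed. One cosmetic point: your justification for $\|\vb_t\|\le 4Le^{4\gamma K}$ via ``self-normalized importance weights produce the $e^{4\gamma K}$ factor'' is not quite right --- the self-normalized estimator $\widehat{\vg}_t$ is itself a difference of two convex combinations, hence $\|\widehat{\vg}_t\|\le 2L$ and $\|\vb_t\|\le 4L$ already; the $e^{4\gamma K}$ is superfluous (but harmless, since $4L\le 4Le^{4\gamma K}$).
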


\begin{proof}
First, with Assumption~\ref{assump:energy} ($|E_\vtheta(\vx)| \leq K, ~\|\nabla_\vtheta E_\vtheta(\vx)\| \leq L$), we can bound the norm of the true gradient as:
\begin{align}
    \|\vg\| &\leq \left\|\frac{\mathbb{E}_{p(\vx)} [\exp(- \gamma E_\vtheta(\vx)) \nabla_\vtheta E_\vtheta(\vx)]}{\mathbb{E}_{p(\vx)} [\exp(- \gamma E_\vtheta(\vx))]}\right\| + \|\bb{E}_{r_{\vpsi^*(\vtheta)}(\vx)}[\nabla_\vtheta E_\vtheta(\vx)]\| \nonumber\\
    &\leq L e^{2\gamma K} + L < 2L e^{2\gamma K} \label{eq:app-gradient-norm-bound}
\end{align}

From Theorem~\ref{the:sample-complexity}, we know that when sample size at each step satisfies Equation~(\ref{eq:app-convergence-sample}), we have:
\begin{align*}
    \bb{P}(\|\widehat{\vg}_t - \vg_t\| \leq \alpha \|\vg_t\|) \geq 1 - \delta / T
\end{align*}
Therefore, we have:
\begin{align*}
    \bb{P}(\|\widehat{\vg}_1 - \vg_1\| \leq \alpha \|\vg_1\| \text{~and~} \ldots \text{~and~} \|\widehat{\vg}_T - \vg_T\| \leq \alpha \|\vg_T\|) \geq \prod_{t=1}^T (1 - \delta / T) \geq 1 - \delta
\end{align*}
Thus, with probability at least $1-\delta$, we have:
\begin{align}\label{eq:app-estimator-condition}
    \|\widehat{\vg}_1 - \vg_1\| \leq \alpha \|\vg_1\| \text{~and~} \ldots \text{~and~} \|\widehat{\vg}_T - \vg_T\| \leq \alpha \|\vg_T\|
\end{align}
A similar condition was also adopted in \cite{homem2008rates} and \cite{chen2018stochastic}. When Equation~(\ref{eq:app-estimator-condition}) is satisfied, we have the following lemma:
\begin{lemma}[Lemma 11 in \cite{chen2018stochastic}]\label{lemma:bound-gradient-estimator}
If $\|\widehat{\vg}_t - \vg_t\| \leq \alpha \|\vg_t\|$, then we have:
$$(1 - \alpha) \|\vg_t\| \leq \|\widehat{\vg}_t\| \leq (1 + \alpha) \|\vg_t\|$$
\end{lemma}

Next we introduce the following property of $M$-smooth function:
\begin{lemma}\label{lemma:m-smooth}
For an $M$-smooth function $\gL(\vtheta)$, we have:
\begin{align*}
    \forall \vtheta_1, \vtheta_2 \in \Theta, ~\gL(\vtheta_2) \leq
    \gL(\vtheta_1) + \langle \nabla \gL(\vtheta_1), \vtheta_2 - \vtheta_1  \rangle + \frac{M}{2} \|\vtheta_2 - \vtheta_1\|^2
\end{align*}
\end{lemma}

From Assumption~\ref{assump:app-smooth} and Lemma~\ref{lemma:m-smooth}, we know that:
\begin{align*}
    \gL(\vtheta_{t+1}) \leq \gL(\vtheta_t) + \langle \nabla \gL(\vtheta_t), \vtheta_{t+1} - \vtheta_t \rangle + \frac{M}{2} \|\vtheta_{t+1} - \vtheta_t\|^2
\end{align*}

From the SGD update rule in Equation~(\ref{eq:app-update-rule}) ($\vtheta_{t+1} = \vtheta_t - \eta_t \widehat{\vg}_t$), Equation~(\ref{eq:app-estimator-condition}) and Lemma~\ref{lemma:bound-gradient-estimator}, we know that:
\begin{align*}
    \gL(\vtheta_{t+1}) &\leq \gL(\vtheta_t) - \eta_t \langle \vg_t, \widehat{\vg}_t \rangle + \frac{M \eta_t^2 \|\widehat{\vg}_t\|^2}{2} \\
    &\leq \gL(\vtheta_t) - \eta_t (1 - \alpha) \|\vg_t\|^2 + \frac{M \eta_t^2 (1 + \alpha)^2 \|\vg_t\|^2}{2}
\end{align*}
Rearranging the above equation, using the gradient norm bound in Equation~(\ref{eq:app-gradient-norm-bound}) and the fact that $\alpha \in (0,1)$, we get:
\begin{align*}
    \left((1-\alpha)\eta_t - \frac{M}{2} \eta_t^2\right) \|\nabla \gL(\vtheta_t)\|^2
    &\leq \gL(\vtheta_t) - \gL(\vtheta_{t+1}) + \left(\alpha M \eta_t^2 + \frac{\alpha^2}{2} M \eta_t^2\right) \|\vg_t\|^2 \\
    &< \gL(\vtheta_t) - \gL(\vtheta_{t+1}) + 6 \alpha M L^2 e^{4 \gamma K} \eta_t^2
\end{align*}
where the condition $\eta_t < 2(1-\alpha)/M$ is used to ensure $(1-\alpha)\eta_t - M \eta_t^2 / 2 > 0$.

Summing up the above inequalities from $t=1$ to $T$, we get:
\begin{align*}
    \sum_{t=1}^T \left(\left((1 - \alpha)\eta_t - \frac{M}{2} \eta_t^2\right) \|\nabla \gL(\vtheta_t)\|^2\right) 
    &<  \sum_{t=1}^T (\gL(\vtheta_t) - \gL(\vtheta_{t+1})) + 6 \alpha M L^2 e^{4 \gamma K} \sum_{t=1}^T \eta_t^2 \\
    &= \gL(\vtheta_1) - \gL(\vtheta_{T}) + 6 \alpha M L^2 e^{4 \gamma K} \sum_{t=1}^T \eta_t^2 \\
    &\leq \gL(\vtheta_1) - \gL^* + 6 \alpha M L^2 e^{4 \gamma K} \sum_{t=1}^T \eta_t^2
\end{align*}
where the last inequality is due to the fact that $\gL^* \leq \gL(\vtheta_{T+1})$.

Dividing both sides by $\sum_{t=1}^T ((1-\alpha)\eta_t - M \eta_t^2/2)$, we get:
\begin{align*}
    \sum_{t=1}^T \left(\frac{2(1-\alpha)\eta_t - M \eta_t^2}{\sum_{t=1}^T (2(1-\alpha)\eta_t - M \eta_t^2)} \|\nabla \gL(\vtheta_t)\|^2\right) < \frac{2(\gL(\vtheta_1) - \gL^*) + 12 \alpha M L^2 e^{4 \gamma K} \sum_{t=1}^T \eta_t^2}{\sum_{t=1}^T (2(1-\alpha)\eta_t - M \eta_t^2)}
\end{align*}

By definition of $p_Z$ in Equation~(\ref{eq:app-def-pz}), which is used to select a final solution among $\{\vtheta_1, \ldots, \vtheta_T\}$, we know that:
\begin{align*}
    \bb{E}_{p_Z}[\|\nabla \gL(\vtheta_Z)\|^2] &= \sum_{t=1}^T (p_Z(t) \|\nabla \gL(\vtheta_t)\|^2) \\
    &= \sum_{t=1}^T \left(\frac{2(1-\alpha)\eta_t - M \eta_t^2}{\sum_{t=1}^T (2(1-\alpha)\eta_t - M \eta_t^2)} \|\nabla \gL(\vtheta_t)\|^2\right) \\
    &< \frac{2(\gL(\vtheta_1) - \gL^*) + 12 \alpha M L^2 e^{4 \gamma K} \sum_{t=1}^T \eta_t^2}{\sum_{t=1}^T (2(1-\alpha)\eta_t - M \eta_t^2)}
\end{align*}
\end{proof}

Now, let us consider a simple case where we use a constant step size, which gives us the following corollary:
\begin{corollary}\label{cor:app-constant-step-size}
Under the conditions in Theorem~\ref{the:app-sgd-convergence} except that we use constant step sizes:
\begin{align}
    \eta_t = \min\left\{ \frac{1 - \alpha}{M}, \frac{1}{\sqrt{T}} \right\},~~t=1,\ldots,T\label{eq:app-constant-step-size}
\end{align}
then with probability at least $1-\delta$, we have:
\begin{align*}
    \bb{E}_{p_Z}[\|\nabla \gL(\vtheta_Z)\|^2] < \frac{2M(\gL(\vtheta_1) - \gL^*)}{(1-\alpha)^2 T} + \frac{2(\gL(\vtheta_1) - \gL^*) + 12 \alpha M L^2 e^{4 \gamma K}}{(1-\alpha)\sqrt{T}}
\end{align*}
\end{corollary}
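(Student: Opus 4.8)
The plan is to instantiate the general convergence bound of Theorem~\ref{the:app-sgd-convergence} at the constant step size and then simplify the resulting expression using only the two inequalities that define the $\min$. No sampling or smoothness arguments are re-done; everything is inherited from Theorem~\ref{the:app-sgd-convergence}.

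First I would verify that the hypotheses of Theorem~\ref{the:app-sgd-convergence} remain in force. Write $\eta \defeq \min\{(1-\alpha)/M,\, 1/\sqrt{T}\}$, so $\eta_t = \eta$ for all $t$. Since $\eta \le (1-\alpha)/M < 2(1-\alpha)/M$, the step-size condition $\eta_t < 2(1-\alpha)/M$ holds; the per-step sample-size requirement (\ref{eq:app-convergence-sample}) and the choice of $p_Z$ are carried over unchanged. Hence, with probability at least $1-\delta$, Equation~(\ref{eq:sgd-converge-gradient-bound}) applies, and writing $C \defeq \alpha M L^2 e^{4\gamma K}$ and collapsing the sums over the now-constant $\eta_t$ it becomes
\begin{align*}
    \bb{E}_{p_Z}[\|\nabla \gL(\vtheta_Z)\|^2] < \frac{2(\gL(\vtheta_1) - \gL^*) + 12 C T \eta^2}{T\eta\,(2(1-\alpha) - M\eta)}.
\end{align*}

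Next I would lower-bound the denominator: since $M\eta \le 1-\alpha$ we have $2(1-\alpha) - M\eta \ge 1-\alpha$, so the denominator is at least $(1-\alpha)T\eta$, and splitting the fraction gives
\begin{align*}
    \bb{E}_{p_Z}[\|\nabla \gL(\vtheta_Z)\|^2] < \frac{2(\gL(\vtheta_1) - \gL^*)}{(1-\alpha)T\eta} + \frac{12 C \eta}{1-\alpha}.
\end{align*}
For the second term I would use $\eta \le 1/\sqrt{T}$, yielding $12C\eta/(1-\alpha) \le 12C/((1-\alpha)\sqrt{T})$. For the first term I would use $1/\eta = \max\{M/(1-\alpha),\, \sqrt{T}\} \le M/(1-\alpha) + \sqrt{T}$, so that
\begin{align*}
    \frac{2(\gL(\vtheta_1) - \gL^*)}{(1-\alpha)T\eta} \le \frac{2M(\gL(\vtheta_1) - \gL^*)}{(1-\alpha)^2 T} + \frac{2(\gL(\vtheta_1) - \gL^*)}{(1-\alpha)\sqrt{T}}.
\end{align*}
Adding the two bounds and recombining $2(\gL(\vtheta_1) - \gL^*)/((1-\alpha)\sqrt{T})$ with $12C/((1-\alpha)\sqrt{T})$ into a single numerator $2(\gL(\vtheta_1) - \gL^*) + 12\alpha M L^2 e^{4\gamma K}$ gives exactly the claimed inequality.

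The argument is essentially bookkeeping, so there is no substantive obstacle; the only two points that require a little care are (i) checking that the new step-size choice still satisfies all the hypotheses of Theorem~\ref{the:app-sgd-convergence} (in particular that $p_Z$ and the sample-size bound (\ref{eq:app-convergence-sample}) are unaffected, so that the ``with probability at least $1-\delta$'' conclusion still holds), and (ii) bounding the reciprocal of a minimum by the sum of the two reciprocals rather than case-splitting on which branch of the $\min$ is active, which is what keeps the final expression clean and avoids an unnecessary case analysis.
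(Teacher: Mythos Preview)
Your proposal is correct and follows essentially the same route as the paper: instantiate the bound from Theorem~\ref{the:app-sgd-convergence} at the constant step size, lower-bound the denominator via $M\eta \le 1-\alpha$, split the fraction, and then control the two pieces using $\eta \le 1/\sqrt{T}$ and $1/\eta = \max\{M/(1-\alpha),\sqrt{T}\} \le M/(1-\alpha) + \sqrt{T}$. The paper's proof is identical up to notation, and your remark about bounding the reciprocal of a minimum by the sum of the two reciprocals is exactly the step the paper uses (written there as $\max\{a,b\} < a+b$).
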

\begin{proof}
Since we are using a constant step size, by Theorem~\ref{the:app-sgd-convergence}, we know that:
\begin{align*}
    \bb{E}_{p_Z}[\|\nabla \gL(\vtheta_Z)\|^2] < \frac{2(\gL(\vtheta_1) - \gL^*) + 12 \alpha M L^2 e^{4 \gamma K} T \eta_1^2}{T \eta_1 (2(1 - \alpha)-M \eta_1)}
\end{align*}

By Equation~(\ref{eq:app-constant-step-size}), we have:
\begin{align*}
    \frac{2(\gL(\vtheta_1) - \gL^*) + 12 \alpha M L^2 e^{4 \gamma K} T \eta_1^2}{T \eta_1 (2(1 - \alpha)-M \eta_1)}
    &\leq \frac{2(\gL(\vtheta_1) - \gL^*) + 12 \alpha M L^2 e^{4 \gamma K} T \eta_1^2}{T (1-\alpha) \eta_1} \\
    &= \frac{2(\gL(\vtheta_1) - \gL^*)}{T (1-\alpha) \eta_1} +
    \frac{12 \alpha M L^2 e^{4 \gamma K} \eta_1}{1-\alpha} \\
    &\leq \frac{2(\gL(\vtheta_1) - \gL^*)}{T(1-\alpha)} \max\left\{\frac{M}{1-\alpha}, \sqrt{T}\right\} + \frac{12 \alpha M L^2 e^{4 \gamma K}}{(1-\alpha) \sqrt{T}} \\
    & < \frac{2M(\gL(\vtheta_1) - \gL^*)}{(1-\alpha)^2 T} + \frac{2(\gL(\vtheta_1) - \gL^*) + 12 \alpha M L^2 e^{4 \gamma K}}{(1-\alpha)\sqrt{T}}
\end{align*}
\end{proof}

Note that an alternative result for constant step sizes can also be obtained from Theorem 6 in \cite{chen2018stochastic}:

\begin{theorem}\label{the:app-sgd-convergence-alternative}
Under Assumptions~\ref{assump:app-energy} and \ref{assump:app-smooth}, for arbitrary constants $\alpha \in (0,1)$ and $\delta \in (0,1)$, suppose we use constant step sizes:
\begin{align}
    \eta_t = \frac{\sqrt{2(\gL(\vtheta_1) - \gL^*)}}{(1+\alpha) 2L e^{2 \gamma K} \sqrt{MT}}
\end{align}
and the sample size $N_t$ used for estimating $\widehat{\vg}_t$ satisfies Equation~(\ref{eq:app-convergence-sample}), then with probability at least $1-\delta$, we have:
\begin{align}
    \min_{t=1,\ldots,T} \|\nabla \gL(\vtheta_t)\|^2 \leq \frac{(1+\alpha) 2L e^{2\gamma K} \sqrt{2M(\gL(\vtheta_1) - \gL^*)}}{(1-\alpha)\sqrt{T}}
\end{align}
\end{theorem}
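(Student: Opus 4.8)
The plan is to reuse the same machinery as in Theorem~\ref{the:app-sgd-convergence} — the sample-complexity bound of Theorem~\ref{the:app-sample-complexity} together with the $M$-smoothness descent inequality (Lemma~\ref{lemma:m-smooth}) — but to keep the full factor $(1+\alpha)^2\eta_t^2\|\vg_t\|^2$ in the second-order term, bounding $\|\vg_t\|$ there by $2Le^{2\gamma K}$, and then to choose the constant step size so that the two resulting terms balance.

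First I would establish the good event. Since the sample-size condition~(\ref{eq:app-convergence-sample}) is precisely the threshold of Theorem~\ref{the:app-sample-complexity} applied with target accuracy $\alpha\|\vg_t\|$ and failure probability $\delta/T$, conditioning on the history $\widehat{\vg}_1,\ldots,\widehat{\vg}_{t-1}$ gives $\bb{P}(\|\widehat{\vg}_t - \vg_t\| \le \alpha\|\vg_t\|)\ge 1-\delta/T$ at each step. Chaining these conditional bounds over $t=1,\ldots,T$ (exactly as in the proof of Theorem~\ref{the:app-sgd-convergence}) shows that with probability at least $1-\delta$ we have $\|\widehat{\vg}_t-\vg_t\|\le\alpha\|\vg_t\|$ simultaneously for all $t$. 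I would then work on this event, where Cauchy--Schwarz gives $\langle\vg_t,\widehat{\vg}_t\rangle\ge(1-\alpha)\|\vg_t\|^2$ and Lemma~\ref{lemma:bound-gradient-estimator} gives $\|\widehat{\vg}_t\|\le(1+\alpha)\|\vg_t\|$.

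Next I would apply the descent inequality. Substituting $\vtheta_{t+1}=\vtheta_t-\eta_t\widehat{\vg}_t$ into Lemma~\ref{lemma:m-smooth} and using the two bounds above yields
\begin{align*}
\gL(\vtheta_{t+1}) \le \gL(\vtheta_t) - \eta_t(1-\alpha)\|\vg_t\|^2 + \tfrac{1}{2}M(1+\alpha)^2\eta_t^2\|\vg_t\|^2 .
\end{align*}
In the last term I would bound $\|\vg_t\|^2\le(2Le^{2\gamma K})^2$ via Equation~(\ref{eq:app-gradient-norm-bound}), sum over $t=1,\ldots,T$, telescope the $\gL(\vtheta_t)$ differences, and use $\gL^*\le\gL(\vtheta_{T+1})$ to obtain, with the constant step size $\eta_t\equiv\eta$,
\begin{align*}
(1-\alpha)\eta\sum_{t=1}^{T}\|\vg_t\|^2 \le \gL(\vtheta_1)-\gL^* + 2M(1+\alpha)^2 L^2 e^{4\gamma K}\,T\eta^2 .
\end{align*}
Dividing by $(1-\alpha)\eta T$ and using $\min_t\|\vg_t\|^2\le\frac{1}{T}\sum_t\|\vg_t\|^2$ gives a bound of the form $\frac{\gL(\vtheta_1)-\gL^*}{(1-\alpha)\eta T}+\frac{2M(1+\alpha)^2 L^2 e^{4\gamma K}\eta}{1-\alpha}$.

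Finally I would plug in $\eta=\frac{\sqrt{2(\gL(\vtheta_1)-\gL^*)}}{(1+\alpha)2Le^{2\gamma K}\sqrt{MT}}$, which is chosen exactly so that the two terms coincide; a short simplification collapses their sum to $\frac{(1+\alpha)2Le^{2\gamma K}\sqrt{2M(\gL(\vtheta_1)-\gL^*)}}{(1-\alpha)\sqrt{T}}$, proving the claim. The algebra in this last substitution is the only tedious part; the one genuinely delicate point is the high-probability argument, since $\vg_t$ and $\widehat{\vg}_t$ both depend on the random trajectory $\widehat{\vg}_1,\ldots,\widehat{\vg}_{t-1}$, so one must invoke Theorem~\ref{the:app-sample-complexity} conditionally on the past at each step and chain the conditional probabilities rather than treating the $T$ events as independent. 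This is the same device already used in Theorem~\ref{the:app-sgd-convergence} and transfers without change.
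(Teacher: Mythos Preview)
Your proposal is correct. The paper itself does not prove this theorem; it merely remarks that the result ``can also be obtained from Theorem~6 in \cite{chen2018stochastic}'' and states the conclusion. Your write-up supplies exactly the standard nonconvex SGD-with-biased-gradients argument that underlies that citation: descent lemma, the two consequences $\langle\vg_t,\widehat{\vg}_t\rangle\ge(1-\alpha)\|\vg_t\|^2$ and $\|\widehat{\vg}_t\|\le(1+\alpha)\|\vg_t\|$, the uniform bound $\|\vg_t\|\le 2Le^{2\gamma K}$ in the quadratic term, telescoping, and the step size that equalizes the two terms. The algebra checks out and the final constant matches.

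One point worth noting: your remark that the per-step concentration bound must be invoked \emph{conditionally} on the past (since $\vg_t$ depends on $\widehat{\vg}_1,\ldots,\widehat{\vg}_{t-1}$) and then chained, rather than treated as independent, is a genuine refinement over the paper's own proof of Theorem~\ref{the:app-sgd-convergence}, which writes the bound as a product $\prod_{t=1}^T(1-\delta/T)$ as if the events were independent. Your conditional-chaining formulation is the rigorous way to justify that step.
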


Although both Corollary~\ref{cor:app-constant-step-size} and Theorem~\ref{the:app-sgd-convergence-alternative} give a convergence rate of $O(1/\sqrt{T})$, the strategy in Theorem~\ref{the:app-sgd-convergence-alternative} requires extra computational effort to compute $\|\nabla L(\vtheta_t)\|$ for $t=1,\ldots,T$ in order to select the solution with the minimum gradient norm. Since $\|\nabla L(\vtheta)\|$ cannot be computed exactly, Monte Carlo estimation will incur additional approximation error. By contrast, the strategy in our analysis does not have such issues and Theorem~\ref{the:app-sgd-convergence} provides a general analysis on the convergence rate of the randomized SGD algorithm with consistent but biased gradient estimators, which allows using different step sizes. 

For example, starting from Equation~(\ref{eq:sgd-converge-gradient-bound}) and with the fact that:
\begin{align*}
    \sum_{t=1}^T t = O(T^2),~~\sum_{t=1}^T \sqrt{t} = O(T^{\frac{3}{2}}),~~\sum_{t=1}^T t^{-\frac{1}{4}} = O(T^{\frac{3}{4}}),~~\sum_{t=1}^T t^{-\frac{1}{2}} = O(T^{\frac{1}{2}})
\end{align*}
one can easily verify that using the following increasing step sizes:
\begin{align*}
    \eta_t = \min\left\{ \frac{1 - \alpha}{M}, \frac{\sqrt{t}}{T} \right\},~~t=1,\ldots,T
\end{align*}
or decreasing step sizes:
\begin{align*}
    \eta_t = \min\left\{ \frac{1 - \alpha}{M}, \frac{1}{(tT)^{1/4}} \right\},~~t=1,\ldots,T
\end{align*}
will give us a similar convergence rate of $O(1/\sqrt{T})$.

Finally, if we would like to make a stronger assumption that the loss function $\gL(\vtheta)$ is strongly convex, then we can obtain a stronger result that $\vtheta_T$ converges to the optimal solution $\vtheta^*$ in $L_2$-norm with a convergence rate of $O(1/T)$.

\begin{assumption}\label{assump:app-strong-convex}
The loss function $\gL(\vtheta)$ is $J$-strongly convex (with $J>0$):
\begin{align*}
    \forall \vtheta_1,\vtheta_2 \in \Theta, 
    \gL(\vtheta_2) - \gL(\vtheta_1) \geq \langle \nabla \gL(\vtheta_1), \vtheta_2 - \vtheta_1 \rangle + \frac{J}{2} \|\vtheta_2 - \vtheta_1\|^2.
\end{align*}
and the unique optimum of $\gL(\vtheta)$ is $\vtheta^*$.
\end{assumption}

\begin{theorem}\label{the:app-sgd-convergence-strong-convex}
Under Assumptions~\ref{assump:app-energy} and \ref{assump:app-strong-convex}, for arbitrary constant $\delta \in (0,1)$, suppose that $J \leq 2L e^{2 \gamma K}/\|\vtheta_1 - \vtheta^*\|$ and we use decreasing step sizes:
\begin{align}
    \eta_t = \frac{1}{(J - J/(2T))t},~~t=1,\ldots,T
\end{align}
and the sample size $N_t$ used for estimating $\widehat{\vg}_t$ satisfies:
\begin{align}
    N_t \geq \frac{128 L^2 T^2 e^{8 \gamma K} (1 + 4 \log(2T/\delta))}{J^2 \|\vg_t\|^2}
\end{align}
then with probability at least $1-\delta$, we have:
\begin{align}
    \|\vtheta_T - \vtheta^*\| \leq \frac{4 L^2 e^{4 \gamma K}}{T} 
    \left[ \frac{(2+J/T)^2 + J^2 (2 - 1/T)}{J^2 (2 - 1/T)^2} \right]
    \label{eq:app-sgd-convergence-convex}
\end{align}
\end{theorem}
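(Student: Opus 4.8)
The plan is to run the standard high-probability analysis of stochastic gradient descent under strong convexity, adapted to the fact that $\widehat{\vg}_t$ is consistent but biased rather than unbiased. First I would fix the "good event.'' Applying the sample-complexity bound (Theorem~\ref{the:app-sample-complexity}) at step $t$ with target accuracy $\eps = J\|\vg_t\|/(2T)$ and confidence $1-\delta/T$ shows that the stated choice of $N_t$ guarantees $\|\widehat{\vg}_t - \vg_t\| \le J\|\vg_t\|/(2T)$ with probability at least $1-\delta/T$; a union bound over $t=1,\dots,T$ produces a single event $\gE$ of probability at least $1-\delta$ on which $\|\widehat{\vg}_t - \vg_t\| \le J\|\vg_t\|/(2T)$ for every $t$, and everything below is carried out on $\gE$. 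On $\gE$, Lemma~\ref{lemma:bound-gradient-estimator} (with $\alpha=J/(2T)$) together with Assumption~\ref{assump:app-energy} and Equation~(\ref{eq:app-gradient-norm-bound}) gives the uniform bounds $\|\widehat{\vg}_t\| \le (1+J/(2T))\,2Le^{2\gamma K} = (2+J/T)Le^{2\gamma K}$ and $\|\widehat{\vg}_t - \vg_t\| \le JLe^{2\gamma K}/T$ (the latter using $\|\vg_t\|\le 2Le^{2\gamma K}$).

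Next I would set up the one-step recursion for $a_t := \|\vtheta_t - \vtheta^*\|$. Expanding the update rule~(\ref{eq:app-update-rule}),
\begin{align*}
    a_{t+1}^2 = a_t^2 - 2\eta_t\langle \vg_t, \vtheta_t - \vtheta^*\rangle - 2\eta_t\langle \widehat{\vg}_t - \vg_t, \vtheta_t - \vtheta^*\rangle + \eta_t^2\|\widehat{\vg}_t\|^2 .
\end{align*}
Adding the $J$-strong-convexity inequality of Assumption~\ref{assump:app-strong-convex} in both directions and using $\nabla\gL(\vtheta^*)=\vzero$ yields $\langle \vg_t, \vtheta_t - \vtheta^*\rangle \ge J a_t^2$; Cauchy--Schwarz with the bias bound controls the third term by $2\eta_t (JLe^{2\gamma K}/T)\,a_t$; and the last term is at most $\eta_t^2(2+J/T)^2 L^2 e^{4\gamma K}$. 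Hence $a_{t+1}^2 \le (1-2\eta_t J)a_t^2 + 2\eta_t(JLe^{2\gamma K}/T)a_t + \eta_t^2(2+J/T)^2 L^2 e^{4\gamma K}$, and a single application of AM--GM to the middle term turns this into a clean recursion $a_{t+1}^2 \le (1-c\,\eta_t J)a_t^2 + \eta_t^2\,\Gamma$ where $c$ is a constant near $2$ and $\Gamma$ is proportional to $L^2 e^{4\gamma K}$.

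Finally I would unroll this recursion. Substituting $\eta_t = 1/\bigl(J(1-\tfrac1{2T})t\bigr)$ — the $(1-\tfrac1{2T})$ correction is tuned precisely so that $c\,\eta_t J$ has the form that makes the telescoping close — one proves by induction on $t$ that $a_t^2 \le \Phi(t)$ for an explicit decreasing sequence $\Phi$ with $\Phi(T)$ equal to the claimed right-hand side; the base case $t=1$ is exactly the hypothesis $J \le 2Le^{2\gamma K}/\|\vtheta_1-\vtheta^*\|$, i.e. $a_1^2 \le 4L^2 e^{4\gamma K}/J^2 \le \Phi(1)$. The main obstacle is the bias cross-term: since $\widehat{\vg}_t$ is biased, $\langle \widehat{\vg}_t-\vg_t,\vtheta_t-\vtheta^*\rangle$ does not vanish, so it must be made negligible against the strong-convexity contraction — this is precisely why $N_t$ must scale like $T^2/\|\vg_t\|^2$, and why the AM--GM split has to be balanced so that part of the cross-term is absorbed into $-c\eta_t J a_t^2$ and the remainder is only $O(\eta_t^2)$. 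A secondary nuisance is the small-$t$ regime, where $\eta_t J$ can exceed $1$ and the factor $1-2\eta_t J$ becomes negative; retaining rather than discarding it is what forces the somewhat ornate closed form $\frac{4L^2e^{4\gamma K}}{T}\cdot\frac{(2+J/T)^2+J^2(2-1/T)}{J^2(2-1/T)^2}$, and the rest is constant bookkeeping (note that $(2+J/T)^2 L^2 e^{4\gamma K}$ is exactly the $\|\widehat{\vg}_t\|^2$ bound and $J(2-1/T)=2/\eta_1$, which is how those quantities enter the answer).
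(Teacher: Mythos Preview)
The paper does not give a self-contained proof of this theorem: after stating it, the authors simply remark that it ``follows from the sample complexity bound in Theorem~\ref{the:app-sample-complexity} and the results in \cite{chen2018stochastic} (Theorem~2 with constant $\rho = J/2$ \ldots), which we refer to for a detailed proof.'' Your proposal is precisely a reconstruction of what that cited argument does: use Theorem~\ref{the:app-sample-complexity} with accuracy $\eps = J\|\vg_t\|/(2T)$ and confidence $\delta/T$ to build the good event (this is exactly how the stated $N_t$ arises, and $\alpha = J/(2T)$ matches the paper's ``$\rho = J/2$'' normalization), then run the standard strongly-convex one-step recursion for $\|\vtheta_t-\vtheta^*\|^2$ with a bias term controlled on that event, and unroll with the prescribed step sizes. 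So your route and the paper's are the same in spirit; you are just spelling out what the paper outsources to \cite{chen2018stochastic}.

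One small caution on the sketch: the AM--GM split you describe does not literally land the bias cross-term at order $\eta_t^2$ --- whichever way you balance it, one piece remains $O(\eta_t/T)$ rather than $O(\eta_t^2)$. This is harmless for the final $O(1/T)$ rate (since $\eta_t/T \asymp 1/(Jt T)$ sums to $O(\log T / T)$ and in fact the precise choice $\eta_t = 1/((J-J/(2T))t)$ is engineered so that the extra $1/T$ in the contraction factor absorbs it exactly), but when you write up the induction you should track that term separately rather than fold it into $\Gamma\eta_t^2$. This is the only place your outline is slightly loose; otherwise the plan is correct and matches the paper's intended argument.
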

Intuitively, Theorem~\ref{the:app-sgd-convergence-strong-convex} implies that when the loss function $\gL(\vtheta)$ is strongly convex in $\vtheta$ with $\vtheta^*$ being the optimal solution, then under some conditions on the sample sizes for estimating the gradients and step sizes for updating the parameters, the output of the SGD algorithm $\vtheta_T$ will converge to $\vtheta^*$ with a convergence rate of $O(1/T)$.

When the loss function $\gL(\vtheta)$ is convex but not strongly convex, we have the following theorem showing a typical convergence rate of $O(1/\sqrt{T})$:
\begin{assumption}\label{assump:app-convex}
The loss function $\gL(\vtheta)$ is convex and the parameter space has finite diameter $D$:
$\sup_{\vtheta_1, \vtheta_2 \in \Theta} \|\vtheta_1\| = D$.
Let $\vtheta^* \in \argmin_{\vtheta \in \Theta} \gL(\vtheta)$.
\end{assumption}
\begin{theorem}\label{the:app-sgd-convergence-convex}
Under Assumptions~\ref{assump:app-energy} and \ref{assump:app-convex}, for arbitrary constant $\delta \in (0,1)$, suppose we use decreasing step sizes $\eta_t = 1/\sqrt{t}$ for $t=1,\ldots,T$ and the sample size $N_t$ used for estimating $\widehat{\vg}_t$ satisfies:
\begin{align}
    N_t \geq \frac{32 L^2 T e^{8 \gamma K} (1 + 4 \log(2T/\delta))}{ \|\vg_t\|^2}
\end{align}
then with probability at least $1-\delta$, we have:
\begin{align*}
    f(\overline{\vtheta}_T) - f(\vtheta^*) \leq \frac{1}{\sqrt{T}} \left[ D^2 + 2 L^2 e^{4 \gamma K}  \left(1 + \left(1 + \frac{1}{\sqrt{T}}\right)^2 \sqrt{1 + \frac{1}{T}}\right) \right]
\end{align*}
where $\overline{\vtheta}_T \defeq \frac{1}{T}\sum_{t=1}^T \vtheta_t$.
\end{theorem}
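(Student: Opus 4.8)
The plan is to condition on a high-probability ``good event'' on which every stochastic gradient $\widehat{\vg}_t$ is a uniformly accurate (though still biased) estimate of the true gradient $\vg_t = \nabla\gL(\vtheta_t)$, and then run a standard convex SGD analysis on that event, treating the estimation error as an adversarial perturbation and extracting a rate for the averaged iterate via Jensen's inequality.

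First I would apply Theorem~\ref{the:app-sample-complexity} at each iteration $t$ with accuracy $\epsilon_t \defeq \|\vg_t\|/\sqrt{T}$ and confidence parameter $\delta/T$; the hypothesis $N_t \geq 32 L^2 T e^{8\gamma K}(1+4\log(2T/\delta))/\|\vg_t\|^2$ is exactly what Theorem~\ref{the:app-sample-complexity} requires to guarantee $\bb{P}(\|\widehat{\vg}_t - \vg_t\| \leq \|\vg_t\|/\sqrt{T}) \geq 1 - \delta/T$. A union bound then shows that, with probability at least $1-\delta$, the event
\begin{align*}
    \mathcal{G} \defeq \bigl\{\, \forall t \in \{1,\ldots,T\}:~ \|\widehat{\vg}_t - \vg_t\| \leq \|\vg_t\|/\sqrt{T} \,\bigr\}
\end{align*}
holds. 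On $\mathcal{G}$, together with the gradient-norm bound $\|\vg_t\| \leq 2Le^{2\gamma K}$ from Equation~(\ref{eq:app-gradient-norm-bound}), this yields both the absolute bias bound $\|\widehat{\vg}_t - \vg_t\| \leq 2Le^{2\gamma K}/\sqrt{T}$ and the norm bound $\|\widehat{\vg}_t\| \leq (1+1/\sqrt{T})\|\vg_t\| \leq 2Le^{2\gamma K}(1+1/\sqrt{T})$.

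Working on $\mathcal{G}$, I would then run the classical argument. Expanding $\vtheta_{t+1} = \vtheta_t - \eta_t \widehat{\vg}_t$ gives
\begin{align*}
    \|\vtheta_{t+1} - \vtheta^*\|^2 = \|\vtheta_t - \vtheta^*\|^2 - 2\eta_t \langle \widehat{\vg}_t, \vtheta_t - \vtheta^* \rangle + \eta_t^2 \|\widehat{\vg}_t\|^2 ,
\end{align*}
and after writing $\langle \widehat{\vg}_t, \vtheta_t - \vtheta^*\rangle = \langle \vg_t, \vtheta_t - \vtheta^*\rangle + \langle \widehat{\vg}_t - \vg_t, \vtheta_t - \vtheta^*\rangle$, using convexity $\langle \vg_t, \vtheta_t - \vtheta^*\rangle \geq f(\vtheta_t) - f(\vtheta^*)$ and bounding the bias term by Cauchy--Schwarz with the diameter estimate $\|\vtheta_t - \vtheta^*\| \leq D$, I would divide by $2\eta_t$ to obtain
\begin{align*}
    f(\vtheta_t) - f(\vtheta^*) \leq \frac{\|\vtheta_t - \vtheta^*\|^2 - \|\vtheta_{t+1}-\vtheta^*\|^2}{2\eta_t} + \frac{\eta_t}{2}\|\widehat{\vg}_t\|^2 + \|\widehat{\vg}_t - \vg_t\|\,D .
\end{align*}
Summing over $t=1,\ldots,T$: the first group telescopes by Abel summation (using that $1/\eta_t = \sqrt{t}$ is nondecreasing and $\|\vtheta_t-\vtheta^*\|^2 \leq D^2$) into $\tfrac{1}{2}D^2/\eta_T = \tfrac{1}{2}D^2\sqrt{T}$; the second is controlled using $\|\widehat{\vg}_t\|^2 \leq 4L^2 e^{4\gamma K}(1+1/\sqrt{T})^2$ and $\sum_{t=1}^T \eta_t = \sum_{t=1}^T t^{-1/2} \leq 2\sqrt{T}$ (sharpened to the $\sqrt{T}\sqrt{1+1/T}$-type estimate that reproduces the stated constants); and the third is controlled using $\|\widehat{\vg}_t - \vg_t\| \leq 2Le^{2\gamma K}/\sqrt{T}$. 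Dividing through by $T$ and invoking Jensen's inequality, $f(\overline{\vtheta}_T) \leq \tfrac{1}{T}\sum_{t=1}^T f(\vtheta_t)$, yields the $O(1/\sqrt{T})$ bound, and bookkeeping the constants (in particular which elementary bounds on $\sum t^{-1/2}$ and on the cross terms are used) produces the exact right-hand side.

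The main obstacle is controlling the perturbation terms that arise because the importance-sampling gradient estimator of Equation~(\ref{eq:gradient-estimator}) is only consistent, not unbiased: the textbook convex SGD analysis relies on $\bb{E}[\widehat{\vg}_t] = \vg_t$ plus a variance bound, neither of which is available here, so the error $\widehat{\vg}_t - \vg_t$ must instead be dominated deterministically on the event $\mathcal{G}$ via the sample-complexity theorem. This forces the per-step accuracy to be $O(1/\sqrt{T})$, so that after $T$ additive contributions and averaging the bias still only costs $O(1/\sqrt{T})$; this is exactly why $N_t$ must grow like $T/\|\vg_t\|^2$, the $\|\vg_t\|^{-2}$ factor being the analogue, with effective relative tolerance $\alpha = 1/\sqrt{T}$, of the requirement $\|\widehat{\vg}_t - \vg_t\| \leq \alpha\|\vg_t\|$ used in the non-convex and strongly convex cases.
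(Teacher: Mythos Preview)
Your proposal is correct and follows precisely the strategy the paper uses: the paper does not actually give a detailed proof of this theorem but states it as a direct consequence of the sample-complexity bound (Theorem~\ref{the:app-sample-complexity}) combined with Theorem~5 of \cite{chen2018stochastic} (instantiated with $\rho=1$, $c=1$), and your sketch --- condition on the high-probability good event via a union bound, then run the standard convex (projected) SGD analysis with averaging --- is exactly what that citation unpacks to.

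One minor bookkeeping caution: your Cauchy--Schwarz treatment of the bias term $\langle \widehat{\vg}_t - \vg_t,\, \vtheta_t - \vtheta^*\rangle$ naturally produces a cross contribution of order $D L e^{2\gamma K}/\sqrt{T}$, whereas the stated right-hand side contains only pure $D^2$ and $L^2 e^{4\gamma K}$ terms; to land on the exact constants you will need the specific rearrangement used in \cite{chen2018stochastic} (or a Young-type splitting of the cross term), and you should also make explicit the projection onto $\Theta$ so that the diameter bound $\|\vtheta_t - \vtheta^*\|\le D$ is legitimate. Neither point affects the rate or the soundness of your argument.
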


The above theorems follow from the sample complexity bound in Theorem~\ref{the:app-sample-complexity} and the results in \cite{chen2018stochastic} (Theorem~2 with constant $\rho = J/2$ and Theorem~5 with constants $\rho=1, c=1$), which we refer to for a detailed proof. Note that the condition on $J$ in Theorem~\ref{the:app-sgd-convergence-strong-convex} is optional and without the condition, we can obtain the same convergence rate at the cost of a cumbersome form in the R.H.S. of Equation~(\ref{eq:app-sgd-convergence-convex}).

\newpage
\section{Additional Experimental Details}\label{app:experiments}
\subsection{2-D Synthetic Data Experiments}\label{app:synthetic-data}

\begin{figure*}[h!]
    \centering
\begin{subfigure}{0.8\textwidth}
    \centering
    \includegraphics[width=\textwidth]{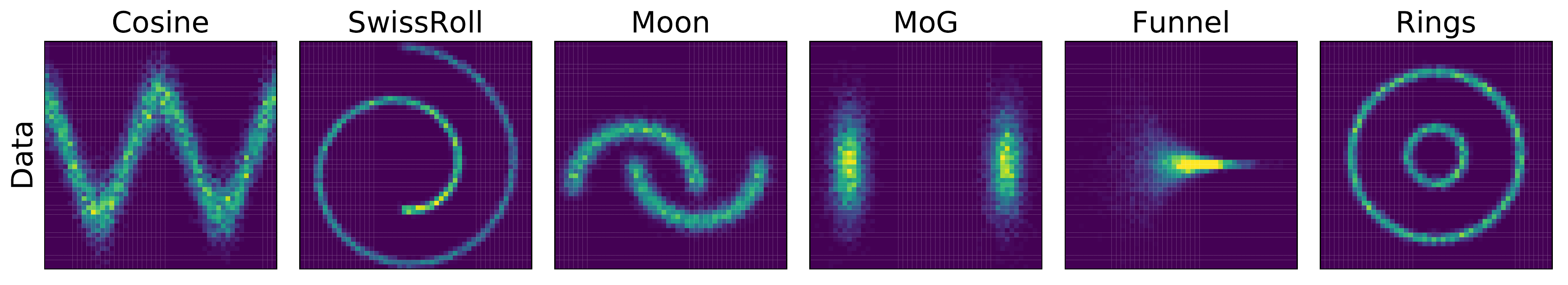}
\end{subfigure}
\begin{subfigure}{0.8\textwidth}
    \centering
    \includegraphics[width=\textwidth]{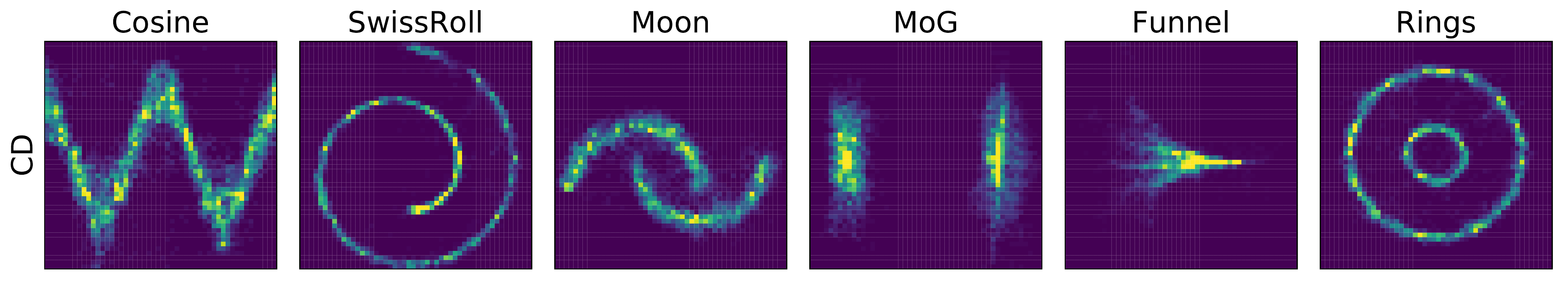}
\end{subfigure}
\begin{subfigure}{0.8\textwidth}
    \centering
    \includegraphics[width=\textwidth]{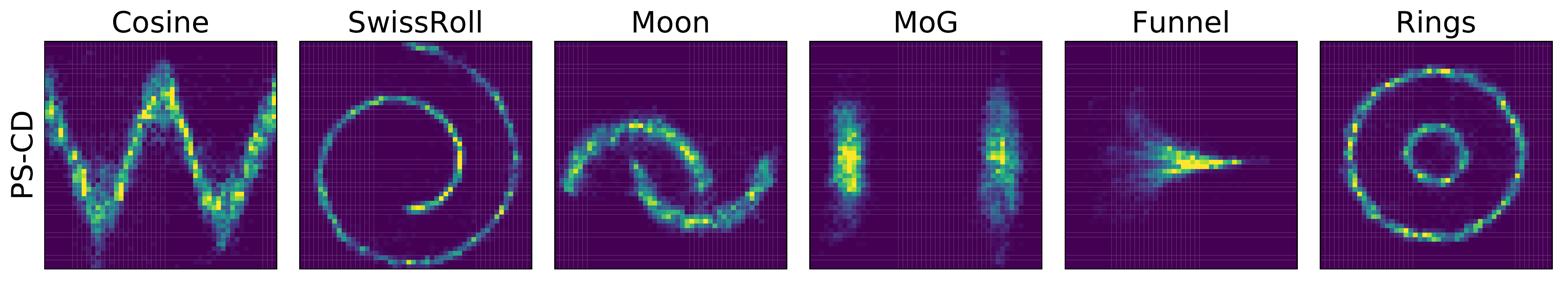}
\end{subfigure}
    \caption{Histograms of samples from the data distribution (top), CD (middle) and PS-CD (bottom).}
    \label{fig:hist2d}
\end{figure*}

\begin{table*}[h!]
\centering
\caption{Maximum mean discrepancy (MMD, multiplied by $10^4$) results on six 2-D synthetic datasets. Lower is better. CD denotes contrastive divergence algorithm, and PS-CD denotes the pseudo-spherical contrastive divergence algorithm (with $\gamma=1.0$).
}
\label{tab:synthetic}
\scalebox{0.95}{
\begin{tabular}{cccccccc}
\toprule
Method & Cosine & Swiss Roll & Moon & MoG & Funnel & Rings\\
\midrule
CD  & $1.20 \pm 0.45$  & $3.39 \pm 0.48$  & $0.64 \pm 0.11$  & $3.01 \pm 0.58$  & $\textbf{1.56} \pm 0.65$  & $2.79 \pm 0.63$ \\
PS-CD & $\textbf{0.86} \pm 0.12$  & $\textbf{0.89} \pm 0.39$  & $\textbf{0.12} \pm 0.04$  & $\textbf{1.78} \pm 0.35$  & $2.34 \pm 0.45$  & $\textbf{2.02} \pm 0.32$ \\\bottomrule
\end{tabular}
}
\end{table*}

\subsection{Understanding the Effects of Different $\gamma$ Values in 1-D Examples}\label{app:visualize-objective-landscape}

\begin{figure}[t!]
    \begin{subfigure}{.33\textwidth}
      \centering
      \includegraphics[width=\textwidth]{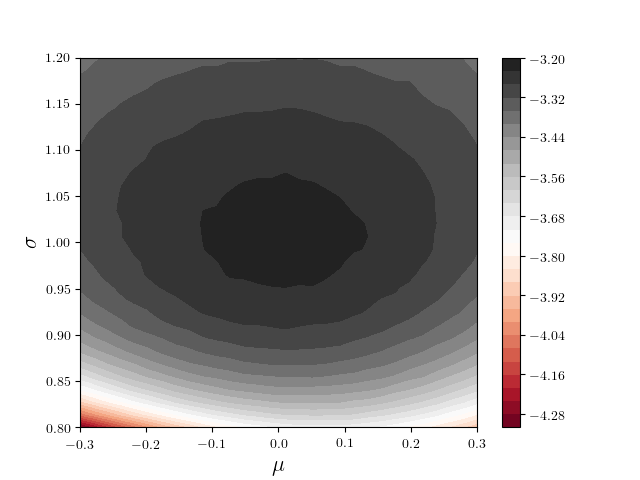} 
      \caption{$\gamma=-0.5$}
    \end{subfigure}
    \begin{subfigure}{.33\textwidth}
      \centering
      \includegraphics[width=\textwidth]{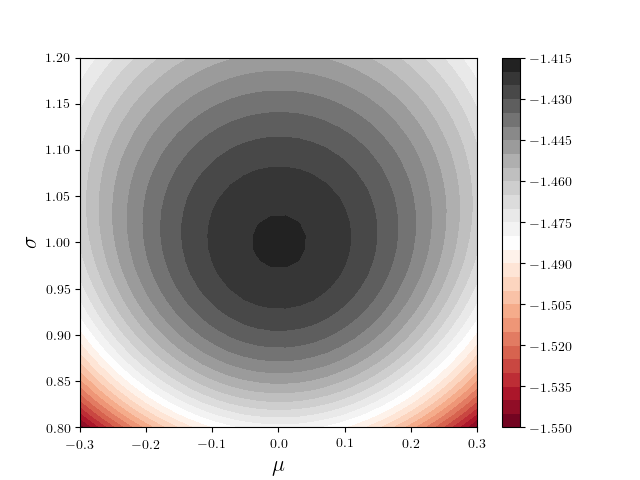}
      \caption{$\gamma=0$}
    \end{subfigure}
    \begin{subfigure}{.33\textwidth}
      \centering
      \includegraphics[width=\textwidth]{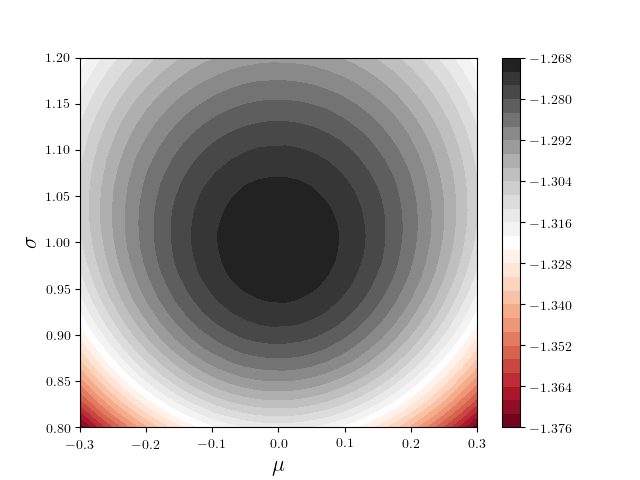}
      \caption{$\gamma=0.1$}
    \end{subfigure}
    \begin{subfigure}{.33\textwidth}
      \centering
      \includegraphics[width=\textwidth]{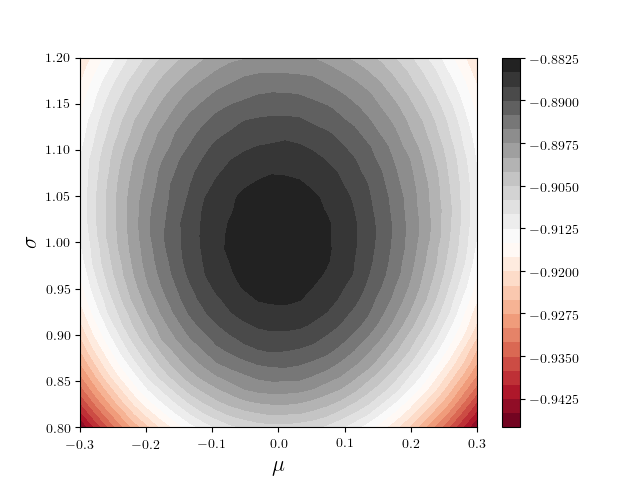}
      \caption{$\gamma=0.5$}
    \end{subfigure}
    \begin{subfigure}{.33\textwidth}
      \centering
      \includegraphics[width=\textwidth]{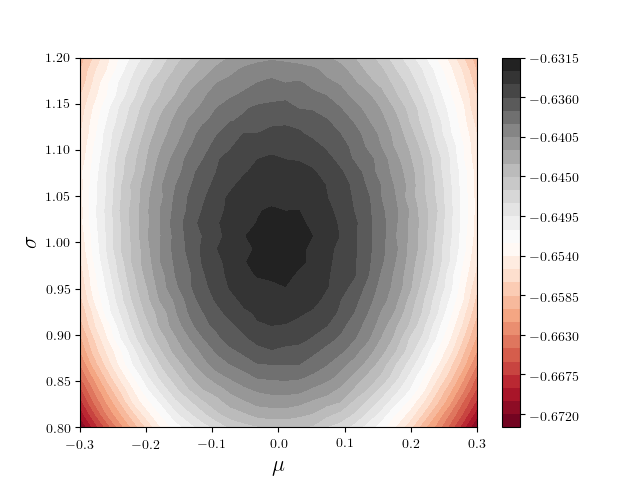}
      \caption{$\gamma=1.0$}
    \end{subfigure}
    \begin{subfigure}{.33\textwidth}
      \centering
      \includegraphics[width=\textwidth]{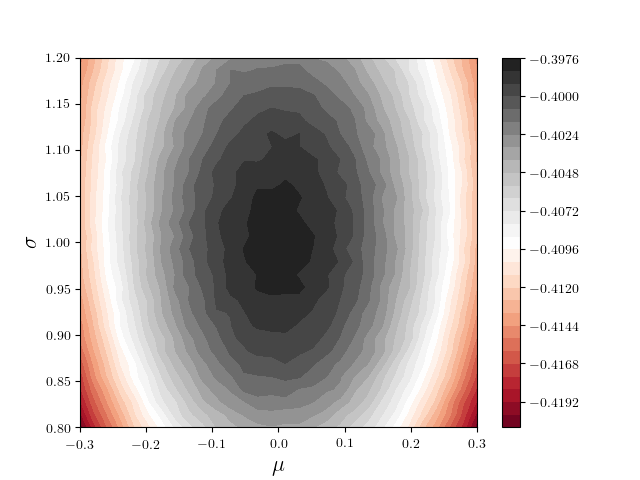}
      \caption{$\gamma=2.0$}
    \end{subfigure}
	\caption{Visualization of different objective landscapes for model well-specified scenarios. $\gamma=0$ corresponds to the logarithm scoring rule (MLE) and other values correspond to the $\gamma$-scoring rules.}
	\label{fig:well-specified}
\end{figure}

\begin{figure}[t!]
    \begin{subfigure}{.33\textwidth}
      \centering
      \includegraphics[width=\textwidth]{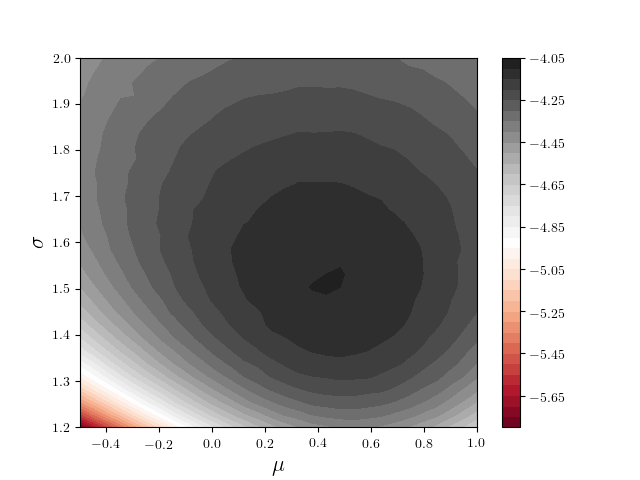} 
      \caption{$\gamma=-0.5$}
    \end{subfigure}
    \begin{subfigure}{.33\textwidth}
      \centering
      \includegraphics[width=\textwidth]{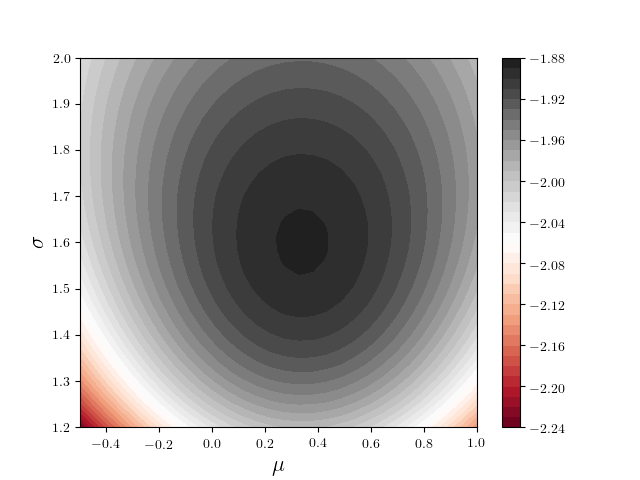}
      \caption{$\gamma=0$}
    \end{subfigure}
    \begin{subfigure}{.33\textwidth}
      \centering
      \includegraphics[width=\textwidth]{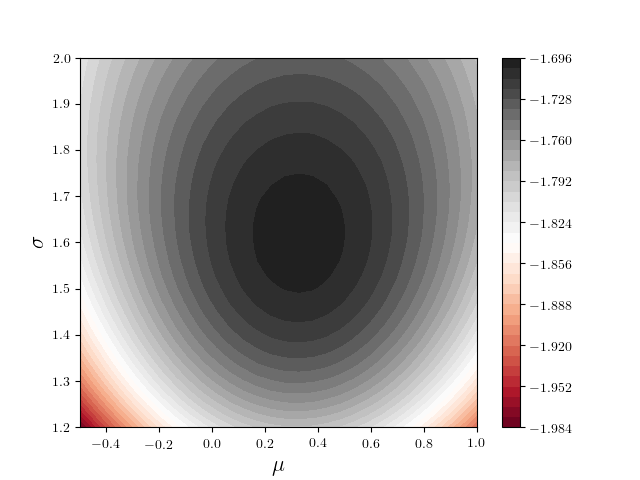}
      \caption{$\gamma=0.1$}
    \end{subfigure}
    \begin{subfigure}{.33\textwidth}
      \centering
      \includegraphics[width=\textwidth]{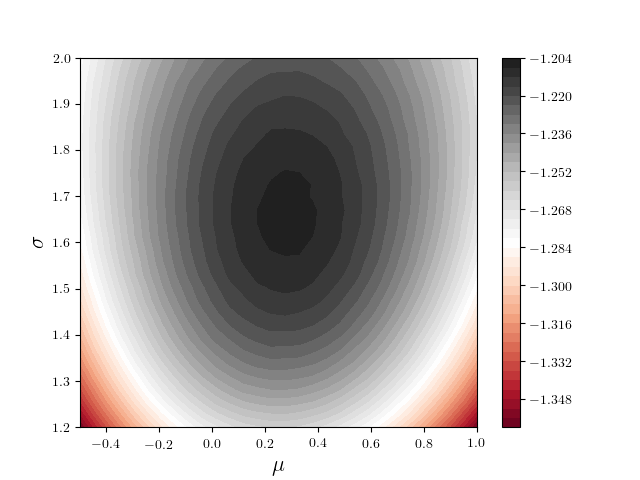}
      \caption{$\gamma=0.5$}
    \end{subfigure}
    \begin{subfigure}{.33\textwidth}
      \centering
      \includegraphics[width=\textwidth]{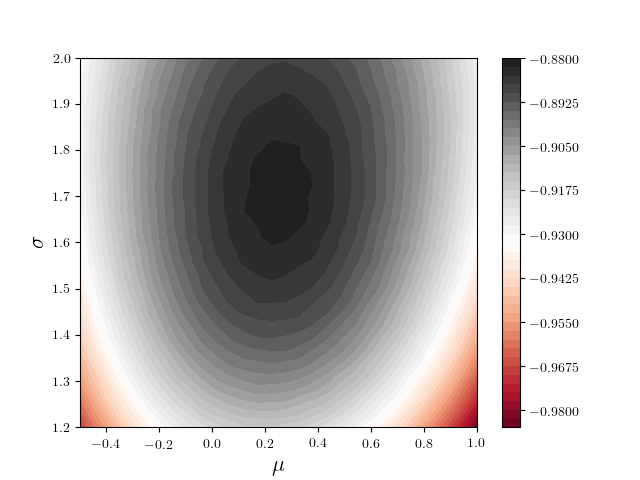}
      \caption{$\gamma=1.0$}
    \end{subfigure}
    \begin{subfigure}{.33\textwidth}
      \centering
      \includegraphics[width=\textwidth]{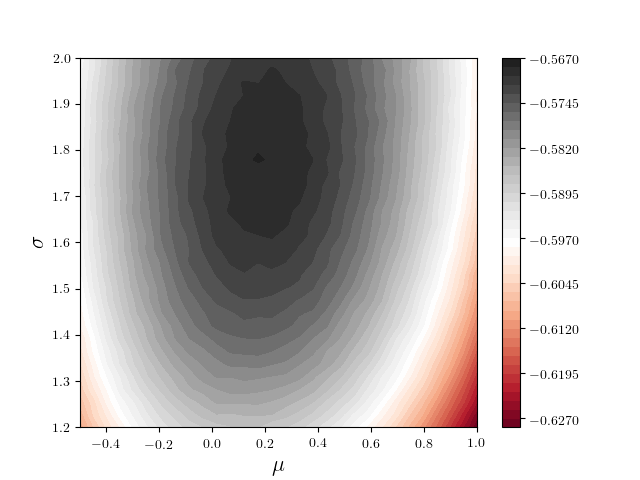}
      \caption{$\gamma=2.0$}
    \end{subfigure}
	\caption{Visualization of different objective landscapes for model mis-specified scenarios. $\gamma=0$ corresponds to the logarithm scoring rule (MLE) and other values correspond to the $\gamma$-scoring rules.}
	\label{fig:mis-specified}
\end{figure}

In this section, we aim to provide insights on the effects of different $\gamma$ values with 1-D toy experiments. Specifically, we use an EBM with a quadratic energy function (corresponding to a Gaussian distribution):
\begin{align}
    E_{\mu, \sigma}(x) = \frac{(x - \mu)^2}{2\sigma^2},~q_{\mu, \sigma}(x) \propto \exp(-E_{\mu, \sigma}(x))\label{eq:gaussian}
\end{align}
where $\mu$ and $\sigma$ are two trainable parameters.

First, we show that when the real data distribution is also a Gaussian distribution such that the model is well-specified, then different $\gamma$ values will induce the same optimal distribution since they are \emph{strictly proper}. To verify this property, we visualize the objective landscape in Figure~\ref{fig:well-specified}.

Second, we use the same quadratic energy function to fit a mixture of Gaussians. We visualize the objective landscapes in Figure~\ref{fig:mis-specified}, which shows that when the model is mis-specified, different objectives will exhibit different modeling preferences (inducing different solutions). This corresponds to the practical scenarios, where such property enables us to flexibly specify different inductive biases to make tradeoff among various modeling factors such as diversity/quality.

\newpage
\subsection{Understanding the Effects of Different $\gamma$ Values in Image Generation}\label{app:effects-of-gamma-image}
Although FID has been the most popular evaluation metric for image generative models, it is problematic since it summarizes the difference between two distributions into a single number and fails to separate important aspects such as fidelity and diversity \cite{naeem2020reliable}. To better demonstrate the modeling flexibility brought by the proposed PS-CD framework, we conduct experiments on CIFAR-10 dataset using a set of more indicative and reliable metrics proposed by \cite{naeem2020reliable} to evaluate the effects of $\gamma$ from various perspectives.

\begin{table}[t]
    \centering
    \caption{Effects of $\gamma$ on CIFAR-10 image generation. 
    We use the same image embeddings (activations of a pre-trained inception network) to compute these metrics and FID to ensure consistency. 
    We briefly introduce these metrics here and refer to \cite{naeem2020reliable} for accurate descriptions and mathematical definitions. Denote data distribution as $P(X)$ and model distribution as $Q(X)$. Based on manifold estimation, \emph{Precision} is defined as the portion of $Q(X)$ that can be generated by $P(X)$ and \emph{Recall} is symmetrically defined as the portion of $P(X)$ that can be generated by $Q(X)$; \emph{Density} improves upon \emph{Precision} to count how many real-sample neighbourhood
    spheres contain a certain fake sample; \emph{Coverage} improves upon \emph{Recall} to measure the fraction of real samples whose neighbourhoods contain at least one fake sample.}
    \label{tab:cifar}
    \vspace{10pt}
    \begin{tabular}{l c c c c c}
    \toprule
       & Density & Coverage & Precision & Recall & FID\\
      \midrule
      CD ($\gamma=0$)  & 0.693 & 0.601 & 0.798 & 0.368 & 37.90 \\
      PS-CD ($\gamma=-0.5$) & 0.906 & 0.691 & 0.848 & 0.360 & 27.95 \\
      PS-CD ($\gamma=0.5$) & 0.772 & 0.634 & 0.819 & 0.352 & 35.02\\
      PS-CD ($\gamma=1.0$) & 0.929 & 0.694 & 0.853 & 0.341 & 29.78 \\
      PS-CD ($\gamma=2.0$) & 0.932 & 0.652 & 0.861 & 0.351 & 33.19\\
      \bottomrule
    \end{tabular}
    \label{tab:effects-of-gamma-cifar10}
\end{table}

From Table~\ref{tab:cifar}, we have some interesting observations: (1) PS-CD with $\gamma=-0.5$ and $\gamma=1.0$ get best FID scores because they can simultaneously achieve good balance among these metrics (e.g., high \emph{Density} and \emph{Coverage} score); (2) By contrast, PS-CD with $\gamma=2.0$ achieves the highest \emph{Density} score but a relatively low \emph{Coverage} score, which potentially leads to a slightly worse FID; (3) Many members in the PS-CD family showed superior performance over traditional contrastive divergence in most metrics, demonstrating the potential of our method. Just like various $f$-divergences used in generative modeling, different members in the PS-CD family can represent complicated inductive bias in practice (although being strictly proper in model well-specified case). Since these single-valued evaluation metrics measure the generative performance in a complicated way, we think it is normal that the change of $\gamma$ is not monotone to the change of each metric. For specific application scenarios, we may mainly care about a certain metric and we should choose $\gamma$ accordingly. 

We would like to emphasize that, a major contribution of our paper is opening the door to a new family of EBM training objectives and enabling us to flexibly specify modeling preferences, without introducing additional computational cost compared to CD (unlike adversarial training in $f$-EBM).

\subsection{Image Generation Samples for PS-CD}\label{app:image-generation-samples}
\begin{figure*}[h]
	\begin{center}
		\includegraphics[width=.3\textwidth]{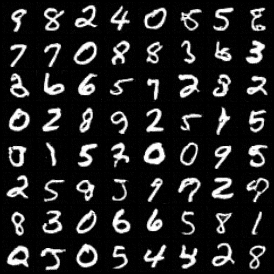}
		\hspace{3pt}
		\includegraphics[width=.3\textwidth]{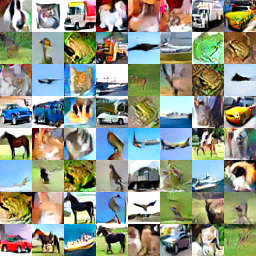}
		\hspace{3pt}
		\includegraphics[width=.3\textwidth]{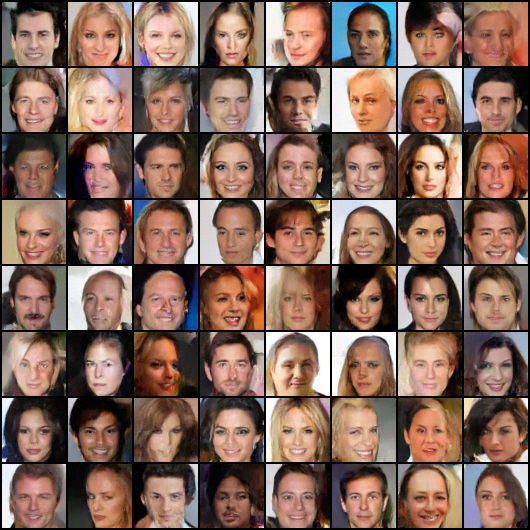}
	\end{center}
	\caption{MNIST, CIFAR-10 and CelebA samples for PS-CD ($\gamma=1.0$).}
	\label{fig:samples}
\end{figure*}

\newpage
\subsection{Training Details}

\begin{wrapfigure}{r}{4.5cm}
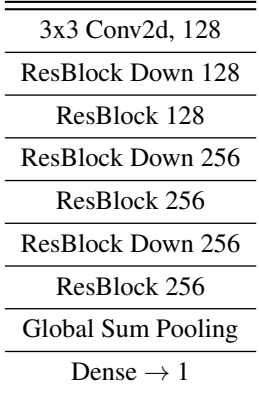
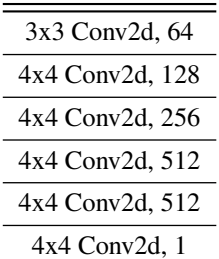

\begin{subfigure}[t]{0.3\textwidth}
\centering
\begin{tabular}{c}
    \toprule
    \toprule
    3x3 Conv2d, 128 \\
    \midrule
    ResBlock Down 128 \\
    \midrule
    ResBlock 128 \\
    \midrule
    ResBlock Down 256 \\
    \midrule
    ResBlock 256 \\ 
    \midrule
    ResBlock Down 256 \\
    \midrule
    ResBlock 256 \\
    \midrule
    Global Sum Pooling \\ 
    \midrule
    Dense $\rightarrow$ 1\\
    \bottomrule
\end{tabular}
\caption{CIFAR-10 ($32 \times 32$)}
\vspace{10pt}
\end{subfigure}
\\
\begin{subfigure}[t]{0.3\textwidth}
\centering
\begin{tabular}{c}
    \toprule
    \toprule
    3x3 Conv2d, 64 \\
    \midrule
    4x4 Conv2d, 128 \\
    \midrule
    4x4 Conv2d, 256 \\
    \midrule
    4x4 Conv2d, 512 \\
    \midrule
    4x4 Conv2d, 512 \\
    \midrule
    4x4 Conv2d, 1 \\
    \bottomrule
\end{tabular}
\caption{CelebA ($64 \times 64$)}
\end{subfigure}
\caption{Network architectures.}\label{fig:architectures}
\vspace{-10pt}
\end{wrapfigure}

To keep a fair comparison, all the compared methods use the same architecture to implement the energy function, except that $f$-EBMs require an additional variational function that uses the same architecture as the energy function. The architectures used for CIFAR-10 ($32 \times 32$) and CelebA ($64 \times 64$) datasets are shown in Figure~\ref{fig:architectures}. We use leaky-ReLU non-linearity with default leaky factor $0.2$ throughout the architectures (between all the convolution layers). Following \cite{du2019implicit, yu2020training}, we apply spectral normalization/$L_2$ regularization (on the outputs of the models) with coefficient 1.0 to improve the stability.

For CIFAR-10, to keep a fair comparison, we use the same sampling strategy as \cite{du2019implicit,yu2020training}, where a sample replay buffer is employed to improve the mixing of Langevin dynamics. Specifically, we use 60 steps Langevin dynamics together with a sample replay buffer of size 10000 to produce samples in the training phase. In each Langevin step, we use a step size of 10.0 and a random noise with standard deviation of 0.005.

For CelebA, which has a higher data dimension, we use the sampling strategy in \cite{nijkamp2019learning} to improve the efficiency of sampling, where we always start the Markov chains from a fixed uniform distribution and run a fixed number of Langevin steps ($100$) with a constant step size.

For all the experiments, we use Adam optimizer to optimize the parameters of the energy function. In each training iteration, we use a batch size of $128$ for CIFAR-10 and $64$ for CelebA. We run the PS-CD algorithms for about 50K iterations of parameter updates for CIFAR-10 and about 100K iterations for CelebA.

For computational cost, the CIFAR-10 experiments take about 48 hours
on 4 Titan Xp GPUs, while the CelebA experiments take about 16 hours since we learn non-convergent short-run MCMC.

\subsection{OOD Detection \& Robustness to Data Contamination}\label{app:ood}
\begin{table}[h]
    \centering
    \caption{OOD Detection results (AUROC score) for models trained on CIFAR-10.}
    \label{tab:ood}
    \vspace{5pt}
    \begin{tabular}{l c c c c}
    \toprule
      OOD Dataset & PixelCNN++ & Glow & CD & PS-CD\\
      \midrule
      SVHN  & 0.32 & 0.24 & 0.43 & 0.56 \\
      Textures & 0.33 & 0.27 & 0.36 & 0.44 \\
      Uniform/Gaussian & 1 & 1 & 1 & 1\\
      CIFAR-10 Interpolation & 0.71 & 0.59 & 0.63 & 0.68 \\
      CelebA & / & / & 0.51 & 0.58\\
      \bottomrule
    \end{tabular}
\end{table}

\begin{table}[h]
    \centering
    \caption{Training EBMs under data contamination on CIFAR-10. We measure the change of FID score after training with the contaminated dataset.}
    \label{tab:contamination-cifar}
    \vspace{5pt}
    \resizebox{\textwidth}{!}{
    \begin{tabular}{c c c c c c}
    \toprule
       & Pretrained Model & CD 1000 Steps & CD 2000 Steps & PS-CD 1000 Steps & PS-CD 2000 Steps\\
      \midrule
       FID & 68.77 & 95.56 & 300.89 & 59.78 & 57.24 \\
      \bottomrule
    \end{tabular}}
\end{table}

\begin{figure*}[h]
	\begin{center}
		\includegraphics[width=\textwidth]{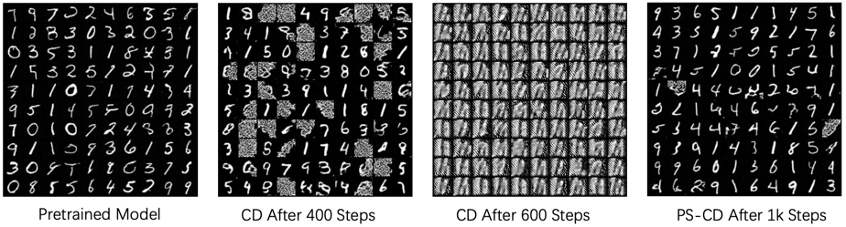}
		\includegraphics[width=\textwidth]{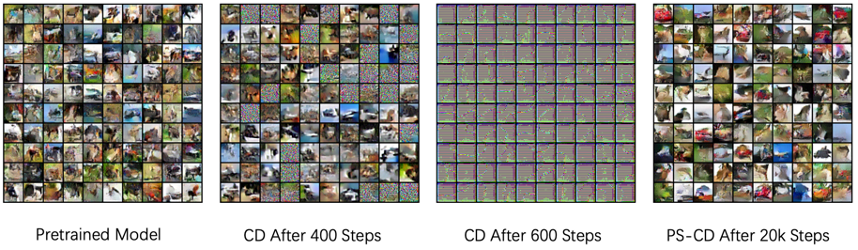}
	\end{center}
	\caption{Samples after training with the contaminated dataset on MNIST and CIFAR-10.}
	\label{fig:samples-contamination}
\end{figure*}

To show the practical advantage of PS-CD in face of data contamination, we further conduct experiments on MNIST and CIFAR-10 datasets, where we use random uniform noise as the contamination distribution and the contamination ratio is 0.1 (i.e. 10\% images in the training set are replaced with random noise). After a warm-up pretraining (when the model has some OOD detection ability), we train the model with the contaminated data and measure the training progress of CD and PS-CD. 

As shown in Figure~\ref{fig:samples-contamination}, CD gradually generates more and more random noise and diverge after a few training steps, while PS-CD is very robust. In particular, as shown in Table~\ref{tab:contamination-cifar}, for a slightly pretrained unconditional CIFAR-10 model (a simple 5-layer CNN with FID of 68.77), we observe that the performance of CD degrades drastically in terms of FID, while PS-CD can continuously improve the model even using the contaminated data.

We believe that robustness to data contamination is a valuable property for modern deep generative models and there is actually a natural interpretation for the robustness of PS-CD. Compared to CD, there is an extra weight term before the gradient of the energy: 
$\frac{\exp(-\gamma E_\vtheta(\vx_i))}{\sum_j \exp(-\gamma E_\vtheta(\vx_j)} \nabla_\vtheta E_\vtheta(\vx_i)$
(the first term in Eq.~(\ref{eq:gradient-estimator})). Suppose $\vx_i \sim \omega$
is a noise data from the contaminated distribution $\tilde{p}$ in a batch of samples, for a model with OOD detection ability, it will assign a much higher energy to $\vx_i$
than normal data and the weight before $\nabla_\vtheta E_\vtheta(\vx_i)$ will be close to zero. In short, PS-CD naturally integrates the OOD detection ability of EBMs into the training process, which then leads to robustness to data contamination.

\end{document}